\documentclass{article}

% if you need to pass options to natbib, use, e.g.:
\PassOptionsToPackage{numbers, compress}{natbib}
% before loading nips_2018

% ready for submission
\usepackage[final]{nips_2018}

%%%%%%%% USER-ADDED PACKAGES %%%%%%%%
\usepackage{amssymb}
\usepackage{amsmath}
\usepackage{mathtools}
\usepackage{subfigure}
% For math font
\usepackage{dsfont}
\usepackage[colorlinks]{hyperref}

% to compile a preprint version, e.g., for submission to arXiv, add
% add the [preprint] option:
% \usepackage[preprint]{nips_2018}

% to compile a camera-ready version, add the [final] option, e.g.:
% \usepackage[final]{nips_2018}

% to avoid loading the natbib package, add option nonatbib:
% \usepackage[nonatbib]{nips_2018}

\usepackage[utf8]{inputenc} % allow utf-8 input
\usepackage[T1]{fontenc}    % use 8-bit T1 fonts
\usepackage{hyperref}       % hyperlinks
\usepackage{url}            % simple URL typesetting
\usepackage{booktabs}       % professional-quality tables
\usepackage{amsfonts}       % blackboard math symbols
\usepackage{nicefrac}       % compact symbols for 1/2, etc.
\usepackage{microtype}      % microtypography

\usepackage{algorithm}
\usepackage{algorithmic}
\usepackage{wrapfig} 
\usepackage[capitalize,noabbrev]{cleveref}

\title{
% Stochastic Chebyshev Gradient Descent %Optimal Unbiased Chebyshev Expansions
% \\ for Optimizing Large-scale Matrices
Stochastic Chebyshev Gradient Descent \\
for Spectral Optimization
}

% The \author macro works with any number of authors. There are two
% commands used to separate the names and addresses of multiple
% authors: \And and \AND.
%
% Using \And between authors leaves it to LaTeX to determine where to
% break the lines. Using \AND forces a line break at that point. So,
% if LaTeX puts 3 of 4 authors names on the first line, and the last
% on the second line, try using \AND instead of \And before the third
% author name.

\author{
%  David S.~Hippocampus\thanks{Use footnote for providing further
%    information about author (webpage, alternative
%    address)---\emph{not} for acknowledging funding agencies.} \\
%  Department of Computer Science\\
%  Cranberry-Lemon University\\
%  Pittsburgh, PA 15213 \\
%  \texttt{hippo@cs.cranberry-lemon.edu} \\
  Insu Han\textsuperscript{\rm 1}, Haim Avron\textsuperscript{\rm 2} and Jinwoo Shin\textsuperscript{\rm 1,3} \\
  \textsuperscript{\rm 1}School of Electrical Engineering, Korea Advanced Institute of Science and Technology\\
  \textsuperscript{\rm 2}Department of Applied Mathematics, Tel Aviv University \\
  \textsuperscript{\rm 3}AItrics\\
  \texttt{\{insu.han,jinwoos\}@kaist.ac.kr \quad haimav@post.tau.ac.il}
  %% examples of more authors
  %% \And
  %% Coauthor \\
  %% Affiliation \\
  %% Address \\
  %% \texttt{email} \\
  %% \AND
  %% Coauthor \\
  %% Affiliation \\
  %% Address \\
  %% \texttt{email} \\
  %% \And
  %% Coauthor \\
  %% Affiliation \\
  %% Address \\
  %% \texttt{email} \\
  %% \And
  %% Coauthor \\
  %% Affiliation \\
  %% Address \\
  %% \texttt{email} \\
}

\newtheorem{proposition}{Proposition}
\newtheorem{definition}{Definition}
\newtheorem{theorem}{Theorem}
\newtheorem{lemma}[theorem]{Lemma}
\newtheorem{corollary}[theorem]{Corollary}

\newenvironment{proof}{\setlength\parindent{0pt}{\bf Proof.    }}{\hfill\rule{2mm}{2mm}}

\newcommand{\tr}[1]{{\tt tr}\left({#1}\right)}
\newcommand{\pn}[1]{p_n\left({#1}\right)}
\newcommand{\wpn}[1]{\widehat{p}_n\left({#1}\right)}
\newcommand{\wpr}[1]{\widehat{p}_r\left({#1}\right)}

\newcommand{\abs}[1]{\left|{#1}\right|}
\newcommand{\norm}[1]{\lVert#1\rVert}
\newcommand{\E}[1]{\mathbf E\left[{#1}\right]}
\newcommand{\Ec}[2]{\mathbf E_{#2}\left[{#1}\right]}

\newcommand{\inner}[1]{\left \langle {#1} \right \rangle}
\newcommand{\vv}{\mathbf{v}}
\newcommand{\ww}{\mathbf{w}}
\newcommand{\yy}{\mathbf{y}}
\newcommand{\SM}{\mathcal{S}^{d \times d}}
\newcommand{\dtheta}{\frac{\partial}{\partial \theta_{i}}}
\newcommand{\normf}[1]{\left\lVert#1\right\rVert_{F}}
\newcommand{\normt}[1]{\left\lVert#1\right\rVert_{2}}
\newcommand{\normnuc}[1]{\left\lVert#1\right\rVert_{\mathtt{nuc}}}
\newcommand{\dA}{\frac{\partial A}{\partial \theta_{i}}}
\newcommand{\C}{\mathcal{C}}
\newcommand{\proj}[1]{\Pi_{\C}\left(#1\right)}

\begin{document}
% \nipsfinalcopy is no longer used

\maketitle

\begin{abstract}
A large class of machine learning techniques requires the solution of optimization problems involving spectral functions of parametric matrices, e.g. log-determinant and nuclear norm. Unfortunately, computing the gradient of a spectral function is generally of cubic complexity, as such gradient descent methods are rather expensive for optimizing  objectives involving the spectral function.  Thus, one naturally turns to stochastic gradient methods in hope that they will provide a way to reduce or altogether avoid the computation of full gradients. However, here a new challenge appears:  there is no straightforward way to compute unbiased stochastic gradients for spectral functions. In this paper, we develop unbiased stochastic gradients for spectral-sums, an important subclass of spectral functions. Our unbiased stochastic gradients are based on combining randomized trace estimators with stochastic truncation of the Chebyshev expansions. A careful design of the truncation distribution allows us to offer distributions that are variance-optimal, which is crucial for fast and stable convergence of stochastic gradient methods. We further leverage our proposed stochastic gradients to devise stochastic methods for objective functions involving spectral-sums, and rigorously analyze their convergence rate.  The utility of our methods is demonstrated in numerical experiments.

\end{abstract}

%\vspace{-0.05in}
\section{Introduction}
%\vspace{-0.05in}
A large class of machine learning techniques involves {\em spectral optimization} 
problems of the form,  
\begin{equation}
\label{eq:spectral_opt}
\min_{\theta \in {\cal C}} F(A(\theta)) + g(\theta),
\end{equation}
where $\C$ is some finite-dimensional parameter space, $A$ is a function that maps a parameter vector $\theta$ to a symmetric matrix $A(\theta)$, $F$ is a {\em spectral function} (i.e., a real-valued function on symmetric matrices that depends only on the eigenvalues of the input matrix), and $g:\C \to \mathbb{R}$. Examples include hyperparameter learning in Gaussian process regression with $F(X)=\log\det X$~\cite{rasmussen2004gaussian}, nuclear norm regularization with $F(X)=\tr{X^{1/2}}$~\cite{mohan2012iterative}, phase retrieval with $F(X)=\tr{X}$~\cite{friedlander2016spectral}, and quantum state tomography 
with $F(X)=\tr{X\log X}$~\cite{kolt15optimal}.
In the aforementioned applications, the main difficulty in solving problems of the 
form~\eqref{eq:spectral_opt} is in efficiently addressing the spectral component $F(A(\cdot))$.
While explicit formulas for the gradients of spectral functions can be derived~\cite{lewis1996spectral}, it is typically computationally expensive. For example, for $F(X)=\log \det X$ and $A(\theta)\in \mathbb{R}^{d\times d}$, the exact computation of  
$\nabla_\theta F(A(\theta))$ can take as much as $O(d^3 k)$, where $k$ is the number of parameters in $\theta$. Therefore, it is
desirable to avoid computing, or at the very least reduce the number of times we compute, the gradient of $F(A(\theta))$ exactly.

It is now well appreciated in the machine learning literature that the use of stochastic gradients is effective in alleviating costs
associated with expensive exact gradient computations. Using cheap stochastic gradients, one can avoid computing full gradients
altogether by using Stochastic Gradient Descent (SGD). The cost is, naturally, a reduced rate of convergence. Nevertheless, 
many machine learning applications require only mild suboptimality, in which case cheap iterations often outweigh the 
reduced convergence rate. When nearly optimal solutions are sought, more recent variance reduced methods 
(e.g. SVRG~\cite{johnson2013accelerating}) are effective in reducing the number of full gradient computations to $O(1)$.
For non-convex objectives, the stochastic methods are even more attractive to use as they allow to avoid a bad local optimum.
However, 
closed-form formulas for computing the full gradients of spectral functions do not lead to efficient stochastic gradients in a straightforward manner.

{\bf Contribution.}
In this paper, we propose stochastic methods for solving~\eqref{eq:spectral_opt} when the spectral function $F$ is a 
{\em spectral-sum}. Formally, spectral-sums are spectral functions that can be expressed as $F(X)=\tr{f(X)}$ where 
$f$ is a real-valued function that is lifted to the symmetric matrix domain by applying it to the eigenvalues. %while keeping 
%the eigenvectors intact. 
%Spectral-sums
They constitute an important subclass of spectral functions, e.g., in all of the aforementioned applications of spectral optimization, the spectral function $F$ is a spectral-sum. 

Our algorithms are based on recent {\em biased} estimators for spectral-sums that combine stochastic trace estimation with Chebyshev expansion~\cite{han2016approximating}. The technique used to derive these estimators can also be used to derive 
stochastic estimators for the gradient of spectral-sums (e.g., see \cite{dong2017scalable}), but the resulting estimator is 
biased. To address this issue, we propose an {\em unbiased} estimator for spectral-sums, and use it to derive unbiased 
stochastic gradients. Our unbiased estimator is based on randomly selecting the truncation degree in the Chebyshev expansion, i.e., the truncated polynomial degree is drawn under some distribution. We remark that similar ideas of sampling unbiased polynomials have been studied in the literature, but for different setups~ \cite{broniatowski2014some,lee2015faster,vinck2012estimation,adams2018estimating}, 
and  none of which are suitable for use in our setup.

While deriving unbiased estimators is very useful for ensuring stable convergence of stochastic gradient methods, it is not sufficient: convergence rates of stochastic gradient descent methods depend on the variance of the 
stochastic gradients, and this can be rather large for na\"ive choices of degree distributions. Thus, our main contribution is in establishing a provably optimal degree distribution minimizing the estimators' variances with respect to the Chebyshev series. The proposed distribution gives order-of-magnitude smaller variances compared to other popular ones 
(Figure \ref{fig:var}), which leads to improved convergence of the downstream optimization (Figure~\ref{fig:mc3}).

We leverage our proposed unbiased estimators to design two stochastic gradient descent methods, one using the SGD framework and the other using 
the SVRG one. We rigorously analyze their convergence rates, showing sublinear and linear rate for SGD and SVRG, respectively.
It is important to stress that our fast convergence results crucially depend on the proposed optimal degree distributions. 
%\textcolor{blue}{Haim: can we back this statement up (is it "crucial" to use our degree?). Any strong(er) statement we can say here will be very 
%usefule. E.g. can we say that the convergence rate depends on on the Chebyshev varaiance and so minimizing it is important?}
Finally, we apply our algorithms to two machine learning tasks that involve spectral optimization: 
matrix completion and learning Gaussian processes.
Our experimental results confirm that the proposed algorithms are significantly faster 
than other competitors under large-scale real-world instances.
In particular, for learning Gaussian process under Szeged humid dataset,  our generic method runs up to six times faster
than the state-of-art method \cite{dong2017scalable} specialized for the purpose.
%We believe that our scheme will be of interest in many tasks involving large-scale spectral optimization.
%\vspace{-0.05in}

\iffalse
{\bf Organization.}
We introduce known Chebyshev-based approximation methods for spectral-sums in Section \ref{sec:prelim}. 
In Section \ref{sec:main1}, we describe the proposed unbiased estimator for spectral-sums.
%and its gradient.
In Section \ref{sec:main2}, we propose the corresponding stochastic gradient schemes with their convergence guarantees.
%with variance reduction techniques in general optimization.
We report our experimental results with machine learning applications in Section \ref{sec:exp}.
\fi
%\input{intro.tex}
\section{Preliminaries} \label{sec:prelim}
\iffalse
\paragraph{Notation.} %We  write vectors in bold, e.g., $\vv,\ww \in \mathbb{R}^d$.
\fi
We denote the family of real symmetric matrices of dimension $d$ by $\SM$.
%and an identity matrix by $I \in \SM$. 
For $A \in \SM$, we use $\| A \|_{\texttt{mv}}$ to denote the time-complexity of multiplying $A$ with a vector, i.e., 
%In general, we have 
$\| A\|_{\texttt{mv}}= O(d^2)$. For some
structured matrices, e.g. low-rank, sparse or Toeplitz matrices, it is possible 
to have $\| A\|_{\texttt{mv}}= o(d^2)$.
%multiplying matrix $A$ by a vector.
%We write the gradient operator with respect to parameter $\theta$ as $\nabla_\theta$.
%and often omit the subscript for brevity.
%For scalar differentiable function $f: \mathbb R \rightarrow \mathbb R$, we denote its derivative by $f^\prime$.
%We use the notation $\Ec{\cdot}{X}$ to denote the expectation taken over the random variable $X$. 
%We also use the projection mapping into a domain $\C$ by $\proj{\cdot}$.
%\fi

\subsection{Chebyshev expansion}
Let $f: \mathbb R \rightarrow \mathbb R$ be an analytic function on $[a,b]$ for $a,b \in \mathbb{R}$. 
Then, the Chebyshev series of $f$ is given by
\begin{align*}
f(x) = \sum_{j=0}^\infty b_j T_j\left( \frac2{b-a}x - \frac{b+a}{b-a}\right),  \ \ 
b_j = \frac{2 - \mathds{1}_{j=0}}{\pi}\int_{-1}^1 \frac{f\left(\frac{b-a}2 x + \frac{b+a}{2}\right) T_j(x)}{\sqrt{1 - x^2}} dx.
\end{align*}
In the above, 
$\mathds{1}_{j=0}= 1$ if $j=0$ and $0$ otherwise
%\begin{cases}
%1 &\mbox{if}~j=0\\
%0 &\mbox{otherwise}
%\end{cases}
and
$T_j(x)$ is the Chebyshev polynomial (of the first kind) of degree $j$. 
An important property of the Chebyshev polynomials is the following recursive formula:
$T_{j+1}(x) = 2x T_j(x) - T_{j-1}(x),~ T_1(x)=x,~ T_0(x)=1$.
%for $j \geq 1$.
\iffalse
The Chebyshev polynomials are orthogonal with respect to the weight function $w(x)=1/\sqrt{1-x^2}$, i.e.,
\begin{align} \label{eq:ortho}
\int_{-1}^1 \frac{T_i(x) T_j(x)}{\sqrt{1 - x^2}} dx 
= \begin{cases}
&0 \quad \text{if } i\neq j, \\
&\frac{\pi}{2} \quad \text{otherwise}.
\end{cases}
%\quad \text{and} \ \ \text{deg}(T_i(x)) = i \ \ \text{for} \ i \geq 0.
\end{align}
\fi
The Chebyshev series can be used to approximate $f(x)$ via %using a polynomial: simply 
simply truncating the higher order terms, i.e.,
%\begin{align*}
$
f(x) \approx p_n(x) := %\sum_{j=0}^n b_j T_j(x).
\sum_{j=0}^n b_j T_j(\frac{2}{b-a}x-\frac{b+a}{b-a}).
%\end{align*}
$
We call $p_n(x)$ the {\em truncated Chebyhshev series} of degree $n$.
%We note that $p_n(x)$ is a polynomial with degree up to $n$.
For analytic functions, the approximation error (in the uniform norm) is known to decay exponentially \cite{trefethen2013approximation}.
%the following lemma shows. 
%\begin{lemma}[cf. Theorems 8.1 and 8.2 of \cite{trefethen2013approximation}] \label{lmm:1}
%Suppose $a,b \in \mathbb{R}$ and assume that 
Specifically, if $f$ is analytic with $\abs{f(\frac{b-a}2 z + \frac{b+a}{2})} \leq U$ for some $U>0$ in the region bounded by the ellipse with foci $+1,-1$ and sum of major and minor semi-axis lengths equals to $\rho > 1$, then
%Let $p_n$ be the truncated Chebyshev series of degree $n$. 
%, it follows that %$\abs{b_j} \leq {2U}/{\rho^j}$.
%$\abs{b_j} \leq {2U}/{\rho^j}$ and
%$ \sup_{x\in [a,b]} \abs{f(x) - p_n(x)} \leq {4 U}{\left( \rho - 1\right)^{-1} \rho^{-n} }$.
\begin{align}
\abs{b_j} \leq \frac{2U}{\rho^j}, \quad \forall~j \geq 0, \qquad
\sup_{x\in [a,b]} \abs{f(x) - p_n(x)} \leq \frac{4 U}{\left( \rho - 1\right) \rho^n }. \label{eq:decayrate}
\end{align}
%\end{lemma}
%its superior empirical
%performance and rigorous error analysis.
%We remark that Chebyhshev series provides tighter approximations than Talyor ones \cite{xx}.
\iffalse
The weight function $w(x)=1/\sqrt{1-x^2}$ can be used to define an inner product on the space
of continuous functions on $[-1,1]$:
$
\left\langle f, g\right\rangle_C := \int_{-1}^1 \frac{f(x) g(x)}{\sqrt{1-x^2}} dx.
$
The corresponding norm is $\norm{f}_C := \sqrt{\left\langle f, f\right\rangle_C}$.
\fi
%All of the above discussion 
%The Chebyshev series can be straightforwardly generalized to functions analytic on an any interval. 

\subsection{Spectral-sums and their Chebyshev approximations}
%{\bf spectral-sums.}
\iffalse
\textcolor{red}{We denote the family of real symmetric matrices of dimension $d$ by $\SM$. 
For $A \in \SM$, we use $\| A \|_{\texttt{mv}}$ to denote the time-complexity of multiplying $A$ with a vector.
In general, we have $\| A\|_{\texttt{mv}}= O(d^2)$, but for some structured matrices, 
e.g. low-rank, sparse or Toeplitz matrices, it can be $\| A\|_{\texttt{mv}}= o(d^2)$.}
\fi
Given a matrix $A \in \mathcal{S}^{d \times d}$ and a function $f : \mathbb{R} \rightarrow \mathbb{R}$, 
the {\em spectral-sum} of $A$ with respect to $f$ is 
\begin{align*}\Sigma_f(A):= \tr{f(A)}=\sum_{i=1}^{d} f(\lambda_i),\end{align*} where
$\tr{\cdot}$ is the matrix trace and $\lambda_1,\lambda_2,\dots,\lambda_d$ are the eigenvalues of $A$.
%\footnote{Throughout this paper, we assume that $f$ is well-defined for all eigenvalues of $A$.}.
Spectral-sums constitute an important subclass of spectral functions, and many applications of spectral optimization
involve spectral-sums. This is fortunate since spectral-sums can be well approximated using Chebyshev approximations. 

%spectral-sums are a special, but important, case of {\em spectral functions} (a function from a matrix to scalar that depends only on the eigenvalues of the matrix). Indeed, one can easily verify that 
%$\Sigma_{\log}(A)=\log \det A$, $\Sigma_{x^{-1}}(A) = \tr{A^{-1}}$, 
%$\Sigma_{x^{p}}(A) =\|A\|^p_p$ (Schatten $p$-norm), and $\Sigma_{\exp}(A)$
%is equal to the Estrada index. 
%Computing $\Sigma_f(A)$ exactly is expensive. 
For a general $f$, one needs all eigenvalues to compute $\Sigma_f(A)$, while for some 
functions, simpler types of decomposition might suffice (e.g., $\log \det A = \Sigma_{\log}(A)$ can be computed
using the Cholesky decomposition). Therefore, the general complexity 
of computing spectral-sums is $O(d^3)$, which is clearly not feasible when $d$ is very large,
as is common in many machine learning applications. 
Hence, it is not surprising that recent literature proposed methods to approximate the large-scale spectral-sums, e.g.,
\cite{han2016approximating} recently %scalable approximations have been studied;
 suggested a fast randomized algorithm
for approximating spectral-sums 
based on Chebyshev series and Monte-Carlo trace estimators (i.e., Hutchinson's method~\cite{hutchinson1989stochastic}): % given by 
% approximation with $O(\|A\|_{\texttt{mv}})$ time complexity.
% and it becomes much efficient when matrix $A$ has sparse or low-rank structure.
% In particular, their approximation is based on 
%There have been several works on fast approximation of spectral-sums \cite{han2016approximating, ubaru2017fast}.
%Mostly, they use polynomial expansion $p_n$ for $f$ and explicit trace estimator.
\begin{align} \label{eq:est1}
\Sigma_f(A)&=\tr{f(A)} \approx \tr{p_n(A)} = \mathbf{E}_{\vv}\left[\vv^{\top} p_n(A) \vv\right] 
% &\approx \sum_{j=0}^n b_j \sum_{p=1}^m \vv^{(k)\top} T_j(A) \vv^{(k)} \\
\approx \frac1{M} \sum_{k=1}^{M} \vv^{(k)\top} \left(\sum_{j=0}^n b_j \ww_j^{(k)}\right)
\end{align}
where $\ww_{j+1}^{(k)} = 2\left(\frac{2}{b-a}A -\frac{b+a}{b-a}I \right)\ww_{j}^{(k)} - \ww_{j-1}^{(k)}, \ \ww_1^{(k)} = \left(\frac{2}{b-a}A -\frac{b+a}{b-a}I \right) \vv, \ \ \ww_0^{(k)} = \vv^{(k)},$ and $\{\vv^{(k)}\}_{k=1}^{M}$ are Rademacher random vectors, i.e.,
each coordinate of $\vv^{(k)}$ is an i.i.d.\ random variable in $\{-1,1\}$ with
equal probability $1/2$ \cite{hutchinson1989stochastic,avron2011randomized,roosta2015improved}.
%\textcolor{red}{and $a,b$ are the upper/lower bound of eigenvalues of $A$, respectively.}
%(Gaussian and unit random vectors can also be used~\cite{hutchinson1989stochastic,avron2011randomized,roosta2015improved}, however, Rademacher random vectors have smaller variance \cite{roosta2015improved}).
The approximation~\eqref{eq:est1} can be computed using only matrix-vector multiplications,
vector-vector inner-products and vector-vector additions $O(Mn)$ times each.
Thus, the time-complexity becomes $O(Mn\|A\|_{\texttt{mv}}+Mnd)=O(Mn\|A\|_{\texttt{mv}})$.
In particular, when $Mn\ll d$ and $\|A\|_{\texttt{mv}} = o(d^2)$, %i.e.,
the cost can be significantly cheaper than $O(d^3)$ of exact computation.
%The recent work by \citet{han2016approximating} shows that indeed $mn \ll d$ suffice in quite a few applications.   
%In this paper, we take this technique one-step further, and suggest a method to optimize spectral-sums of parameterized matrices using Chebyshev approximation and Monte-Carlo trace estimators. 
%One should know both upper and lower bound of eigenvalues of $A(\theta^{(t)})$.
We further note that to apply the approximation \eqref{eq:est1}, 
a bound on the eigenvalues is necessary.
For an upper bound, 
one can use fast power methods \cite{davidson1975iterative}; 
this does not hurt the total algorithm complexity (see \cite{han2015large}).
A lower bound can be encforced by substituting $A$ with 
The lower bound can typically be ensured $A + \varepsilon I$ for some small $\varepsilon > 0$.
We use these techniques in our numerical experiments.

We remark that one may consider other polynomial approximation schemes, e.g. Taylor,  %been proposed. 
but %in this paper, 
we focus on the Chebyshev approximations 
since they are nearly optimal in approximation among polynomial series \cite{mason2002chebyshev}.
Another recently suggested powerful technique is {\em stochastic Lanczos quadrature}~\cite{ubaru2017fast},
however it is not suitable for our needs (our bias removal technique is not applicable for it).

\section{Stochastic Chebyshev gradients of spectral-sums} \label{sec:main1}
%\vspace{-0.1in}
%Our main goal is to solve the following optimization involving in spectral-sums:
%optimizing spectral sums which can be expressed as
%\footnote{We may assume that the problems have no trivial solution such as zero matrix.}
Our main goal is to develop scalable methods for solving the following optimization problem:
\begin{align} \label{eq:specopt}
\min_{\theta \in \C \subseteq \mathbb{R}^{d^{\prime}}} \Sigma_f(A(\theta)) + g(\theta), %\tag{P1}
%\vspace{-0.05in}
\end{align}
where {$\C \subseteq \mathbb{R}^{d^\prime}$ is a non-empty, closed and convex domain}, $A: \mathbb{R}^{d^\prime} \rightarrow \SM$ 
is a function of parameter $\theta =[\theta_i]\in \mathbb{R}^{d^{\prime}}$
and $g:\mathbb{R}^{d^{\prime}} \rightarrow \mathbb R$ is some function
%We generally easy to compute the derivative of $g$ with respect to any
%parameter in $\theta$.
%is a function for which 
whose
derivative with respect to any
parameter $\theta$
is computationally easy to obtain.
%where $f$ is well-defined with all eigenvalues of $A$.
%In some cases, the problem have a closed form solution.
%For example, $f(A) = (A-X)$ for some $X \in \mathcal{S}^{d \times d}$
%In general, finding the optimal solution of \eqref{eq:specopt} is a class of NP-hard,
Gradient-descent type methods %\textcolor{red}{with projection }
are natural candidates for tackling such problems. However, while it is usually possible to compute the gradient of $\Sigma_f(A(\theta))$, this is typically very expensive.  %and often infeasible. 
Thus, we turn to stochastic methods, like ({projected}) SGD \cite{bottou2010large,zinkevich2003online} and SVRG \cite{johnson2013accelerating,xiao2014proximal}. In order to apply stochastic methods, one needs unbiased estimators of the gradient. The goal of this section is to propose a computationally efficient method to generate unbiased stochastic gradients of small variance for $\Sigma_f(A(\theta))$.
%\textcolor{blue}{Haim: do we really need $\cal{C}$ to be compact? This disallows unconstrained optimization. On the other hand, for SGD don't you need convexity? Maybe we can just require $\cal{C}$ to be convex?}

\subsection{Stochastic Chebyshev gradients}
%One can also apply the above ideas for approximating the derivative of spectral sums. 
%To this end, suppose matrix $A\in \mathcal{S}^{d \times d}$ is a function of parameter $\theta =[\theta_k]\in \mathbb{R}^{d^{\prime}}$. Then, 
{\bf Biased stochastic gradients.} 
We begin by observing that if $f$ is a polynomial itself or the Chebyshev approximation is exact, i.e.,
%given in terms of the Chebyshev basis
%(or in any other basis for which we have a recursive formula), then it computes stochastic 
%gradients for $\Sigma_f(A(\theta))$.
%Specifically, for 
$f(x)=p_n(x) = \sum^n_{j=0} b_j T_j(\frac{2}{b-a}x-\frac{b+a}{b-a})$, we have
%\clearpage
%we have
\begin{align} \label{eq:estder}
%&\frac{\partial \ \Sigma_{p_n}(A)}{\partial \theta_i} = \frac{\partial \ \tr{p_n(A)}}{\partial \theta_i} = \frac{\partial \ \E{\vv^\top p_n(A) \vv}}{\partial \theta_i} \nonumber\\
\dtheta \Sigma_{p_n}(A)
%\dthetai \Sigma_f(A) %= \dtheta \tr{f(A)} \\
&= \dtheta \tr{p_n(A)} = \dtheta \mathbf{E}_{\vv}\left[\vv^\top p_n(A) \vv\right] 
= \mathbf{E}_{\vv}\left[\dtheta \vv^\top p_n(A) \vv\right] \nonumber\\
&\approx  \frac1M \sum_{k=1}^M \dtheta \vv^{(k)\top} p_n(A) \vv^{(k)} 
= \frac1M \sum_{k=1}^M \vv^{(k)\top} \left( \sum_{j=0}^n b_j \frac{\partial \ww_j^{(k)}}{\partial \theta_{i}}\right)\footnotemark,
%\vspace{-0.1in}
\end{align} \footnotetext{We assume that all partial  derivatives $\partial A_{j,k} /\partial \theta_i$ for $j,k=1,\dots,d, i=1,\dots,d^\prime$ exist and are continuous.}
where $\{\vv^{(k)}\}_{k=1}^M$ are i.i.d. Rademacher random vectors %($M$ is a parameter) and 
and ${\partial \mathbf{w}^{(k)}_j}/{\partial \theta_i}$ are given by the following recursive formula:
\begin{align} \label{eq:updatew}
\textstyle
% &\frac{\partial \ww_{j+1}^{(k)}}{\partial \theta_{i}}
% = 2 \left( \frac{\partial A}{\partial \theta_{i}}\ww_j^{(k)} + A \frac{\partial \ww_j^{(k)}}{\partial \theta_{i}}\right) - \frac{\partial \ww_{j-1}^{(k)}}{\partial \theta_{i}},
&\frac{\partial \ww_{j+1}^{(k)}}{\partial \theta_{i}}
= \frac4{b-a} \frac{\partial A}{\partial \theta_{i}}\ww_j^{(k)} 
+ 2 \widetilde{A} \frac{\partial \ww_j^{(k)}}{\partial \theta_{i}} - \frac{\partial \ww_{j-1}^{(k)}}{\partial \theta_{i}},
\ \ 
\frac{\partial \ww_{1}^{(k)}}{\partial \theta_{i}} = \frac2{b-a}\frac{\partial A}{\partial\theta_{i}}\vv^{(k)},
\ \ 
\frac{\partial \ww_{0}^{(k)}}{\partial \theta_{i}} = \mathbf{0},
\end{align}
%In the above, the interchange between derivative and expectation is valid by the Leibniz rule
%if all partial  derivatives $\partial A_{j,k} /\partial \theta_i$ for $j,k=1,\dots,d, i=1,\dots,d^\prime$ exist and are continuous.
%Furthermore, 
and $\widetilde{A} = \frac2{b-a}A - \frac{b+a}{b-a}I$.
We note that in order to compute \eqref{eq:updatew} only matrix-vector products with $A$ and $\partial A / \partial \theta_i$ are needed. Thus, stochastic gradients of spectral-sums involving polynomials of degree $n$ can be computed in $O(Mn ( \|A\|_{\mathtt{mv}}\ d' + \sum_{i=1}^{d'} \| \frac{\partial A}{\partial \theta_i}\|_{\mathtt{mv}}))$.
As we shall see in Section \ref{sec:exp}, 
the complexity can be further reduced in certain cases.
\iffalse
%vector additions and inner product $O(mn)$ times whose complexity remains the same with \eqref{eq:est1}.
For kernel methods, $A$ is a kernel matrix parameterized by $\theta$ with $d'=O(1)$
then $\|A\|_{\mathtt{mv}} = \| \frac{\partial A}{\partial \theta_k}\|_{\mathtt{mv}}$ 
and the complexity becomes $O(mn\norm{A}_{\mathtt{mv}})$.
For collaborative filterings, $A$ can be a gram matrix of $\theta$, i.e., $A = \theta \theta^\top$,
then derivative computation $\eqref{eq:estder}$ can be amortized 
by matrix vector multiplications and vector outer products,
where the complexity is reduced to $O(mn^2 d' + mn\norm{A}_{\mathtt{mv}})$.
We provide the details with concrete applications in Section \ref{sec:exp}.
In both cases, we can estimate $\nabla_\theta \Sigma_f(A)$ with $O(\norm{A}_{\mathtt{mv}})$ time-complexity
for choosing $m,n = O(1)$.
\fi
%For large $d'$, 
\iffalse
One can also utilize that
\begin{align*}
\frac{\partial}{\partial A} \Sigma_f(A) 
%=\nabla \tr{f(A)}\nonumber\\& \approx \nabla \tr{p_n(A)} = \nabla \E{\vv^{\top} p_n(A) \vv} \nonumber \\
&\approx \frac1m \sum_{p=1}^m
\sum_{i=0}^{n-1}  \left( 2 - \mathds{1}_{i=0}\right)\ww_i^{(k)} \left( \sum_{j=i}^{n-1} b_j \yy_{j-i}^{(k)} \right)^\top,
% &\approx \sum_{j=0}^n b_j \sum_{p=1}^m \vv^{(k)\top} T_j(A) \vv^{(k)} \\
%&\approx \frac1{m} \sum_{p=1}^{m} \vv^{(k)\top} \left(\sum_{j=0}^n b_j T_j(A) \vv^{(k)}\right)
\end{align*}
where $\yy_{j}^{(k)} = 2 \ww_{j}^{(k)} + \yy_{j-2}^{(k)}, \yy_1^{(k)}=2A\vv^{(k)}$ and $\yy_0=\vv^{(k)}$.
In addition to matrix vector multiplications, 
it consists of the sum of outer products of two vectors, 
and one of them is the sum of $O(n)$ vectors.

%where one is already computed in spectral sums approximation \eqref{eq:est1} 
%and the other is the summation of $O(n)$ vectors.
% where one is the sum of $O(n)$ vectors 
% used in spectral sums approximation
% and the other is sum of
% which are used in spectral sums approximation. 
Therefore, 
the complexity is $O(m n \| A\|_{\mathtt{mv}} + mn^2 d + mnd^2)$.
%For $m,n=O(1)$, it becomes $O(d^2)$, while the exact derivative $\nabla \tr{f(A)} = f'(A)$ requires $O(d^3)$ operations.

% The derivative \eqref{eq:estder} still only requires 
% matrix-vector multiplications whose complexity remains the same with \eqref{eq:est1}.
% Therefore, we can estimate both spectral sums and its derivative with $O(\|A\|_{\texttt{mv}})$ time complexity when $d'=O(1)$,
% while the exact derivative $\dtheta \tr{f(A)} = \mathtt{tr}(f'(A) \frac{\partial A}{\partial \theta_{kl}})$ 
% requires $O(d^3)$ operations.
% %Furthermore, when $K,L=\Theta(d)$ (e.g., $A = \theta \theta^\top$), 
% In addition, 
% the derivative computation can be amortized with $O(d^2)$ operations 
% even for $K,L=\Theta(d)$, e.g., $A = \theta \theta^\top$.
% We provide formal derivation in the supplementary material.
% % In addition, if $K,L=\Theta(d)$
% % One can observe that it is required $O(mn\norm{A}_{\mathtt{mv}} \norm{\theta}_0)$ operations 
% % to compute the derivative \eqref{eq:partial}.
\fi
The above estimator can be leveraged to approximate gradients for spectral-sums of analytic functions 
via the truncated Chebyshev series: $\nabla_\theta \Sigma_f(A(\theta)) \approx \nabla_\theta \Sigma_{p_n}(A(\theta))$. 
Indeed, \cite{dong2017scalable} recently explored this in the context of Gaussian process kernel learning.
However, if $f$ is not a polynomial, the truncated Chebyshev series $p_n$ is not equal to $f$, so the above estimator is biased, i.e. $\nabla_{\theta} \Sigma_f(A)\neq \mathbf{E}[\nabla_\theta \mathbf{v}^\top p_n(A) \mathbf{v}]$.
%\textcolor{red}{
The biased stochastic gradients might hurt iterative stochastic optimization as
%In every time to estimate the gradient, 
%When the gradient is estimated as above, 
biased errors accumulate over iterations. 

{\bf Unbiased stochastic gradients.} The estimators \eqref{eq:est1} and \eqref{eq:estder}
are biased since they approximate an analytic function $f$ via a polynomial $p_n$ of fixed degree.
Unless $f$ is a polynomial itself, there exists an $x_0$ (usually uncountably many) 
for which $f(x_0)\neq p_n(x_0)$, so if $A$ has an eigenvalue at $x_0$ we have $\Sigma_f (A) \neq \Sigma_{p_n} (A)$.
Thus, one cannot hope that the estimator \eqref{eq:est1}, let alone the gradient estimator \eqref{eq:estder}, to
be unbiased for {\em all} matrices $A$.
%(though, it might be possibly unbiased for certain structured matrices).
To avoid deterministic truncation errors, we simply randomize the degree, i.e.,
design some distribution ${\cal D}$ on polynomials such that for every $x$ we have
$\Ec{p(x)}{p\sim{\cal D}} = f(x)$. 
This guarantees 
$\Ec{\tr{p(A)}}{p\sim{\cal D}} = \Sigma_f (A)$ 
from the linearity of expectation.
%\textcolor{red}{The randomized truncation was applied to some literatures \cite{lee2015faster}.}

We propose to build such a distribution on polynomials by using truncated Chebyshev expansions
where the truncation degree is stochastic.
Let $\{q_i\}^\infty_{i=0} \subseteq [0,1]$ be a set of numbers such that 
$\sum_{i=0}^\infty q_i = 1$ and $\sum_{i=r}^\infty q_i>0$ for all $r\geq0$.
We now define for $r=0,1,\dots$
\begin{align} \label{eq:estimator}
\wpr{x} \coloneqq \sum_{j=0}^r \frac{b_j}{1 - \sum_{i=0}^{j-1} q_i} T_j\left( \frac{2}{b-a}x - \frac{b+a}{b-a} \right).
\end{align}
%We refer $\wpn{x}$ as randomized Chebyshev expansion
Note that $\wpr{x}$ can be obtained from  $p_r(x)$ by re-weighting each
coefficient according to $\{q_i\}^\infty_{i=0}$. Next, let $n$ be a random variable
taking non-negative integer values, and defined according to
$\Pr(n=r)=q_r$. Under certain conditions on $\{q_i\}$, 
%we have $\Ec{\wpn{x}}{n}=f(x)$ (see Lemma~\ref{lem:unbiased}).
%Thus, 
$\wpn{\cdot}$ can be used to derive unbiased estimators of $\Sigma_f(A)$ and $\nabla_\theta\Sigma_f(A)$ as stated in the following lemma.
%,
%as the next Lemma shows. 
\begin{lemma}\label{lmm:unbiased}
%Suppose $f$ is analytic and $\wpn{x}$ is defined as in \eqref{eq:estimator}.
%\textcolor{red}{
Suppose that $f$ is an analytic function  
and $\widehat{p}_n$ is the randomized Chebyshev series of $f$ in \eqref{eq:estimator}.
Assume that the entries of $A$ are differentiable for $\theta \in {\cal C}'$, where ${\cal C}'$ is
an open set containing ${\cal C}$, and that for $a,b \in \mathbb{R}$ all the eigenvalues of $A(\theta)$ for $\theta \in {\cal C}'$ are in $[a,b]$.
%and that all partial derivatives $\partial A_{j,k} /\partial \theta_i$ for $j,k=1,\dots,d,i=1,\dots,d'$ exist and are continuous. 
%Write the precise condition of $f$.
For any degree distribution on non-negative integers $\{q_i\in(0,1) : \sum_{i=0}^\infty q_i = 1, 
\sum_{r=i}^\infty q_r>0,\forall i\geq 0\}$
{satisfying $\lim_{n \to \infty} \sum_{i=n+1}^\infty q_i \wpn{x} = 0$
for all $x\in[a,b]$,}
%and $A \in \SM$,
it holds 
\begin{align}
%&\E{\widehat{p}_n(x)} = f(x), ~ ~
%\E{\mathbf{v}^\top \wpn{A} \mathbf{v}} = \tr{f(A)} \\
\Ec{ \mathbf{v}^\top \wpn{A} \mathbf{v}}{\mathbf{v},n} = \Sigma_f(A),\qquad %\label{eq:unb1}
\Ec{\nabla_\theta \mathbf{v}^\top \wpn{A} \mathbf{v}}{\mathbf{v},n} = \nabla_\theta \Sigma_f(A).\label{eq:unb2}
% \nabla \tr{f(A)} = \E{ \nabla \Gamma}.
\end{align}
where the expectations are taken over the joint distribution on
random degree $n$ and Rademacher random vector $\mathbf v$ (other randomized probing vectors can be used as well).
\end{lemma}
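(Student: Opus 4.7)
The plan is to split the proof into two parts. First I would establish the pointwise identity $\mathbf{E}_n[\widehat{p}_n(x)] = f(x)$ for every $x \in [a,b]$. Setting $Q_j := \sum_{r=j}^{\infty} q_r = 1 - \sum_{i=0}^{j-1} q_i$ (positive for every $j$ by hypothesis), I would swap the order of summation in the partial sum:
\[
\sum_{r=0}^{N} q_r\,\widehat{p}_r(x) = \sum_{j=0}^{N} \frac{b_j T_j(y)}{Q_j} \sum_{r=j}^{N} q_r = \sum_{j=0}^{N} b_j T_j(y)\, \frac{Q_j - Q_{N+1}}{Q_j} = p_N(x) - Q_{N+1}\,\widehat{p}_N(x),
\]
where $y = (2x - a - b)/(b-a)$. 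Letting $N \to \infty$, the first term converges to $f(x)$ by the Chebyshev approximation bound \eqref{eq:decayrate} for analytic $f$, while the second term vanishes by the standing hypothesis on $\{q_i\}$. This yields the pointwise claim.

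Next I would lift this to matrices by diagonalizing $A = U \Lambda U^{\top}$ with $\lambda_i \in [a,b]$, so that $\widehat{p}_n(A) = U\,\widehat{p}_n(\Lambda)\,U^{\top}$ and $\mathrm{tr}(\widehat{p}_n(A)) = \sum_i \widehat{p}_n(\lambda_i)$. Applying the pointwise identity eigenvalue-by-eigenvalue yields $\mathbf{E}_n[\mathrm{tr}(\widehat{p}_n(A))] = \sum_i f(\lambda_i) = \Sigma_f(A)$. The first claim then follows from Hutchinson's identity $\mathbf{E}_{\mathbf{v}}[\mathbf{v}^{\top} M \mathbf{v}] = \mathrm{tr}(M)$ for Rademacher $\mathbf{v}$ together with iterated expectation, so that $\mathbf{E}_{\mathbf{v},n}[\mathbf{v}^{\top}\widehat{p}_n(A)\mathbf{v}] = \mathbf{E}_n[\mathrm{tr}(\widehat{p}_n(A))] = \Sigma_f(A)$.

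For the gradient claim, the inner interchange $\mathbf{E}_{\mathbf{v}} \nabla_{\theta} = \nabla_{\theta} \mathbf{E}_{\mathbf{v}}$ is immediate since $\mathbf{v}$ has finite support, so the task reduces to exchanging $\nabla_{\theta}$ with $\mathbf{E}_n$. My plan is to differentiate the matrix analogue of the telescoping identity,
\[
\sum_{r=0}^{N} q_r\,\mathbf{v}^{\top}\widehat{p}_r(A(\theta))\mathbf{v} = \mathbf{v}^{\top} p_N(A(\theta))\mathbf{v} - Q_{N+1}\,\mathbf{v}^{\top}\widehat{p}_N(A(\theta))\mathbf{v},
\]
which is a finite sum of scalar polynomials in the differentiable entries of $A(\theta)$, and then send $N \to \infty$. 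For the first term I would combine the geometric decay $|b_j| \le 2U/\rho^j$ from \eqref{eq:decayrate} with a polynomial-in-$j$ bound on $\|\nabla_{\theta} T_j(\widetilde{A}(\theta))\|$ obtained by induction from the recursion \eqref{eq:updatew} (using $\|T_j(\widetilde{A})\| \le 1$ because $\widetilde{A}$ has spectrum in $[-1,1]$); this gives a uniformly convergent derivative series via the Weierstrass M-test, so term-by-term differentiation is valid and the first term's gradient tends to $\nabla_{\theta}\,\mathbf{v}^{\top} f(A(\theta))\mathbf{v}$. For the second term, the tail hypothesis $Q_{N+1} \widehat{p}_N(x) \to 0$ extended to the gradient (using the same exponential-vs-polynomial argument) forces it to vanish. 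The main obstacle is precisely this last step: obtaining a bound on $\|\nabla_{\theta}\widehat{p}_N(A(\theta))\|$ that is \emph{uniform} on a neighborhood of $\theta$ and compatible with the decay of $Q_{N+1}$, so that dominated/Weierstrass-style arguments legitimately justify interchanging the limit, the derivative, and the expectation $\mathbf{E}_n$. Once this interchange is established, the chain $\mathbf{E}_{\mathbf{v},n}[\nabla_{\theta}\mathbf{v}^{\top}\widehat{p}_n(A)\mathbf{v}] = \nabla_{\theta}\mathbf{E}_{\mathbf{v},n}[\mathbf{v}^{\top}\widehat{p}_n(A)\mathbf{v}] = \nabla_{\theta}\Sigma_f(A)$ completes the argument.
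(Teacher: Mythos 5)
Your proposal is correct and follows essentially the same route as the paper's proof. Your telescoping identity $\sum_{r=0}^{N} q_r\,\widehat{p}_r(x) = p_N(x) - \bigl(\sum_{i=N+1}^\infty q_i\bigr)\widehat{p}_N(x)$ is precisely the paper's comparison of its two partial double sums $A_M$ and $B_{M,K}$, just packaged more cleanly, and the lift to matrices by diagonalization followed by Hutchinson's identity and iterated expectation is identical. On the gradient claim you are in fact more scrupulous than the paper: the paper justifies only the interchange of $\nabla_\theta$ with the expectation over $\vv$ (a finite sum) and passes over the interchange with the infinite sum over $n$ in silence, whereas you correctly isolate that this second interchange needs a bound on $\nabla_\theta\, \vv^\top\wpn{A(\theta)}\vv$, uniform near $\theta$, whose growth in $n$ is compatible with the decay of the tail $\sum_{i=n+1}^\infty q_i$. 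That compatibility is not literally implied by the lemma's stated hypothesis (which controls only the undifferentiated product), but it does hold for the distributions actually used downstream, e.g.\ the optimal one of Theorem~\ref{thm:optdist}, whose tail decays geometrically while the gradient of $\wpn{\cdot}$ grows only polynomially in $n$ (the estimates underlying Lemma~\ref{lmm:varbound} make this quantitative); your Weierstrass M-test sketch is the right way to close it. So the ``obstacle'' you flag is genuine, but it is equally a gap in the paper's own proof rather than a step you failed to reproduce.
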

%The complexities to compute $\mathbf{v}^\top \wpn{A} \mathbf{v}$ and
%$\nabla \mathbf{v}^\top \wpn{A} \mathbf{v}$ are
%equal to those of 
%$\mathbf{v}^\top p_n(A) \mathbf{v}$ and
%$\nabla \mathbf{v}^\top p_n(A) \mathbf{v}$, respectively.

%as stated in the following proposition.
%in below.
\iffalse
\begin{proposition} \label{pro:1}
Suppose $f$ is analytic and $\wpn{x}$ is defined as in \eqref{eq:estimator}.
For any distribution on non-negative integers $\{q_n\in(0,1) : \sum_{n} q_n = 1, n \geq 0\}$ and $A \in \SM$,
it holds 
% $\E{\mathbf{v}^\top \wpn{A} \mathbf{v}} = \tr{f(A)}$
% and 
% $\nabla \tr{f(A)} = \E{ \nabla \Gamma}$.
% \begin{itemize}
% \item $\E{\widehat{p}_n(x)} = f(x)$,
% \item $\E{\mathbf{v}^\top \wpn{A} \mathbf{v}} = \tr{f(A)}$,
% \item $\E{\nabla \mathbf{v}^\top \wpn{A} \mathbf{v}} = \nabla \tr{f(A)}$.
% \end{itemize}
%\vspace{-0.05in}
\begin{align*}
%&\E{\widehat{p}_n(x)} = f(x), ~ ~
%\E{\mathbf{v}^\top \wpn{A} \mathbf{v}} = \tr{f(A)} \\
\E{\nabla \mathbf{v}^\top \wpn{A} \mathbf{v}} = \nabla \tr{f(A)}.
% \nabla \tr{f(A)} = \E{ \nabla \Gamma}.
\end{align*}
\end{proposition}
\fi

The proof of Lemma~\ref{lmm:unbiased} is given in the supplementary material.
We emphasize that \eqref{eq:unb2} holds 
for any distribution $\{q_i\}^\infty_{i=0}$ on non-negative integers for which the
conditions stated in Lemma~\ref{lmm:unbiased} hold, e.g., geometric, Poisson or negative binomial distribution.
%\textcolor{red}{The randomized truncation was used in some literature \cite{lee2015faster,broniatowski2014some,adams2018estimating}. 
%However, none of them focused on optimizing matrix functions.}

\iffalse
In SGD,
it is known that small variance of the gradient estimator 
%with our unbiased estimator, 
guarantees fast and stable convergence.
Several works have been studied for reducing 
the variance in gradient descent methods, e.g., 
stochastic average gradient \cite{schmidt2017minimizing},
stochastic variance reduced gradients \cite{johnson2013accelerating}.
For smooth and strongly convex problems, 
variance reduction techniques
guarantee a linear convergence rate.
%for smooth and strongly convex problems.
%There have been many 
%Despite of the unbiasedness,
%While randmoized Chebyshev expansion is unbiased,
%the variance of estimator can be very large due to additional randomness.
Hence, we aim to find a good degree distribution 
that minimizes the variance of gradient estimator.
%Moreover, when one apply this to gradient-based optimization, small variance guarantees fast convergence.
%There exist many distributions, but we aim to find a good one that minimizes the variance.
\fi

\subsection{Main result: optimal unbiased Chebyshev gradients} % Distribution}
%\vspace{-0.05in}
It is a well-known fact that stochastic gradient methods converge faster 
when the gradients have smaller variances. 
The variance of our proposed unbiased
estimators crucially depends on the choice of the degree distribution, i.e., $\{q_i\}^{\infty}_{i=0}$.
In this section, we design a degree distribution that is variance-optimal in some formal sense.
%\textcolor{blue}{Haim: if we can add something like "In the next section we connect the variance to the actual convergence 
%of a proposed spectral optimization method" that will be great.}
%\textcolor{red}
The variance of our proposed degree distribution decays exponentially with the expected degree, and this is crucial for for the convergence analysis (Section~\ref{sec:main2}).

The degrees-of-freedoms in choosing $\{q_i\}^\infty_{i=0}$
is infinite, which poses a challenge for devising low-variance distributions.
%Thus, if we attempt to tune via an 
%address them as optimization formulations,
%they might be intractable to solve and analyze. % exactly. 
%Hence, we introduce the 
Our approach is based on the following simplified analytic approach studying
the scalar function $f$ in such a way that one can naturally expect that the resulting
distribution $\{q_i\}^\infty_{i=0}$ also provides low-variance for the matrix cases of \eqref{eq:unb2}.
%To find a polynomial degree distribution $\{q_n\}$ with small variance, 
We begin by defining the variance of randomized Chebyshev expansion \eqref{eq:estimator} via the Chebyshev weighted norm as
%\vspace{-0.05in}
\begin{align} \label{eq:var}
\mathrm{Var}_C\left(\widehat{p}_n\right) := \mathbf{E}_n\left[\norm{\widehat{p}_n - f}_C^2\right],\quad
\mbox{where}~~\norm{g}_C^2 := \int_{-1}^1 \frac{g(\frac{b-a}2 x + \frac{b+a}2)^2}{\sqrt{1-x^2}} dx.
%\vspace{-0.05in}
\end{align}
%Chebyshev expansion is kind of polynomial interpolation 
%whose interpolants are located not uniform.
The primary reason why we consider the above variance is
because by utilizing
the orthogonality of Chebyshev polynomials we can derive an analytic expression for it.
%One advantage of above formulation is that
%It can be written as closed form with respect to $q_n$ and the coefficients
%from the orthogonality of Chebyshev polynomials
%as introduced in \autoref{sec:prelim}.
\begin{lemma} \label{lmm:2}
Suppose $\{b_j\}_{j=0}^\infty$ are coefficients of the Chebyshev series for analytic function $f$
and $\widehat{p}_n$ is its randomized Chebyshev expansion \eqref{eq:estimator}. 
Then, it holds that
%Let $\widehat{p}_n$ be a randomized Chebyshev \eqref{eq:estimator},  
%\begin{align} %\label{eq:obj}
$\mathrm{Var}_C\left(\widehat{p}_n\right)
= \frac{\pi}2 \sum_{j=1}^\infty b_j^2 
\left( \frac{\sum_{i=0}^{j-1} q_i}{1 - \sum_{i=0}^{j-1} q_i} \right)$.
\end{lemma}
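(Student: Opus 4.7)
The plan is to evaluate $\|\widehat{p}_r - f\|_C^2$ for each fixed realization $n=r$ using the orthogonality of Chebyshev polynomials, then take expectation over $r$ and swap the order of summation. Throughout, it is convenient to introduce the abbreviations $Q_j := \sum_{i=0}^{j-1} q_i$ and $P_j := 1-Q_j = \sum_{i=j}^\infty q_i$, so that the $j$-th coefficient in \eqref{eq:estimator} reads $b_j/P_j$.

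First I would write the residual as a Chebyshev series. Setting $y = \frac{2}{b-a}x - \frac{b+a}{b-a}$ and noting that $Q_0 = 0$ forces the $T_0$-coefficients of $\widehat{p}_r$ and $f$ to agree, we get
\begin{equation*}
\widehat{p}_r(x) - f(x) \;=\; \sum_{j=1}^r b_j\,\frac{Q_j}{P_j}\,T_j(y) \;-\; \sum_{j=r+1}^\infty b_j\,T_j(y).
\end{equation*}
After the change of variables $x = \frac{b-a}{2} u + \frac{b+a}{2}$ inside the definition of $\|\cdot\|_C$, the orthogonality relation $\int_{-1}^1 T_i(u)T_j(u)/\sqrt{1-u^2}\,du = (\pi/2)\,\delta_{ij}$ for $i,j\geq 1$ collapses all cross-terms and yields
\begin{equation*}
\|\widehat{p}_r - f\|_C^2 \;=\; \frac{\pi}{2}\left[\sum_{j=1}^{r} b_j^2\,\frac{Q_j^2}{P_j^2} \;+\; \sum_{j=r+1}^\infty b_j^2\right].
\end{equation*}

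Next I would average over the random degree $n$ with $\Pr(n=r)=q_r$ and exchange the two sums in each piece. Using $\sum_{r\geq j} q_r = P_j$ and $\sum_{r<j} q_r = Q_j$, the two bracketed terms become
\begin{equation*}
\sum_{r=0}^\infty q_r\sum_{j=1}^{r} b_j^2\,\frac{Q_j^2}{P_j^2} \;=\; \sum_{j=1}^\infty b_j^2\,\frac{Q_j^2}{P_j},\qquad \sum_{r=0}^\infty q_r\sum_{j=r+1}^\infty b_j^2 \;=\; \sum_{j=1}^\infty b_j^2\,Q_j.
\end{equation*}
Adding these and using $Q_j + P_j = 1$ simplifies $Q_j^2/P_j + Q_j = Q_j/P_j$, which gives the claimed identity.

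The only real subtlety is justifying the Fubini-style interchange and the termwise application of orthogonality. Both hold because the series $\sum b_j^2$ converges exponentially fast (the bound $|b_j|\leq 2U\rho^{-j}$ from \eqref{eq:decayrate} for analytic $f$), and the weights $Q_j/P_j$ are monotone in $j$ with $P_j>0$ guaranteed by the standing hypothesis $\sum_{i\geq j} q_i>0$; provided the chosen degree distribution keeps $Q_j/P_j$ growing at most sub-exponentially, the double sum is absolutely convergent. This is the only nontrivial step; everything else is bookkeeping with Chebyshev orthogonality.
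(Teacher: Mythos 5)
Your proposal is correct and follows essentially the same route as the paper's proof: express the residual $\widehat{p}_r - f$ in the Chebyshev basis (the paper splits it as $(\widehat{p}_r - p_r) + (p_r - f)$ and notes the cross term vanishes, which is equivalent to your single-series form), apply orthogonality to get $\frac{\pi}{2}[\sum_{j\le r} b_j^2 Q_j^2/P_j^2 + \sum_{j>r} b_j^2]$, then average over $r$ and swap sums to obtain $\frac{\pi}{2}\sum_j b_j^2(Q_j^2/P_j + Q_j) = \frac{\pi}{2}\sum_j b_j^2 Q_j/P_j$. Your remark on absolute convergence justifying the interchange is a minor addition the paper leaves implicit.
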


The proof of Lemma \ref{lmm:2} is given in the supplementary material.
One can observe from this result that the variance reduces as we assign larger 
masses to to high degrees (due to exponentially decaying property of $b_j$ \eqref{eq:decayrate}).
However, using large degrees increases the computational complexity of computing the estimators.
Hence, we aim to design a good distribution given some target complexity, i.e.,
the expected polynomial degree $N$. %, i.e.,
Namely, the minimization of $\mathrm{Var}_C\left(\widehat{p}_n\right)$ should be constrained by $\sum_{i=1}^\infty i q_i=N$ for some parameter $N\geq0$.
\iffalse
as follows:
\begin{align*}
\min_{\{q_i\}_{i=0}^\infty}\ \ & \sum_{j=1}^\infty b_j^2 \left( \frac{\sum_{i=0}^{j-1} q_i}{1 - \sum_{i=0}^{j-1} q_i} \right) \nonumber \\ %= \sum_n q_n \norm{\widehat{p}_n - f}^2 \\
\text{subject to} \ \ &\sum_{i=1}^\infty i q_i = N, \sum_{i=0}^\infty q_i = 1 \ \ \text{and} \ \ q_i \geq 0.
%\quad \text{for } N \geq 1.
\end{align*}
\fi
%\textcolor{red}{Moreover, to simplify the problem, we use the upper bound of the coefficients from Lemma \ref{lmm:1}, i.e., $\abs{b_j} \leq 2U/\rho^j$ as follows:}
%\iffalse

However, minimizing $\mathrm{Var}_C\left(\widehat{p}_n\right)$ subject to the aforementioned constraints
might be generally intractable as the number of variables $\{q_i\}^\infty_{i=0}$ is infinite
and the algebraic structure of $\{b_j\}^\infty_{i=0}$ is arbitrary. 
Hence, in order to derive %an alternative, better option is having 
an analytic or closed-form solution, we relax the optimization.
In particular, we suggest the following optimization to minimize an upper bound of the variance by utilizing $\abs{b_j}\leq 2U \rho^{-j}$ from \eqref{eq:decayrate} as follows:
%\fi
%This condition makes the complexity of randomized Chebyshev expansion
%consistent with deterministic one in expectation.
%We also consider the upper bound 
%when we sample a random degree
%In addition, we consider the upper bound of 
%oefficient $b_j$'s 
%since they can be hard to computing explicitly.
%since they is hard to obtaining
%To avoid obtaining the coefficients explicitly,
%we refer that 
%We note that the variance depends on Chebyshev coefficients $b_j$'s.
%From Lemma \ref{lmm:1}, minimizing variance is only depends on 
%One important property of Chebyshev coefficients is that 
%its magnitude decreases geometrically as Lemma \ref{lmm:1}.
%Instead of $b_j$, we can alternatively consider the its upper bound %\eqref{eq:obj} 
%since $b_j$ can be hard to compute explicitly
%oefficient $b_j$'s 
%and the corresponding optimization problem is given by
%deal with the following problem:
% for given the expected degree, i.e., 
% \begin{align} \label{eq:expect}
% \sum_{n=1}^\infty n q_n = N \quad \text{for } N \geq 1.
% \end{align}
% In particular, we use the variance based on Chebyshev weighted norm, that is,
%\vspace{-0.055in}
\begin{align} \label{eq:degprob}
%\textstyle
\min_{\{q_i\}_{i=0}^\infty} \ \sum_{j=1}^\infty \rho^{-2j} \left( \frac{\sum_{i=0}^{j-1} q_i}{1 - \sum_{i=0}^{j-1} q_i} \right)   %= \sum_n q_n \norm{\widehat{p}_n - f}^2 \\
\quad 
\text{subject to} \ \ \  \sum_{i=1}^\infty i q_i = N, \sum_{i=0}^\infty q_i = 1 \ \ \text{and} \ \ q_i \geq 0. %\vspace{-0.1in}
%\quad \text{for } N \geq 1.
\end{align}
%Remember that $\rho$ is a parameter that %bounds the coefficients and
%only depends on the given function  $f$.
Figure \ref{fig:coeff} empirically demonstrates 
that $b_j^2 \approx c \rho^{-2j}$ for constant $c>0$ under $f(x)=\log x$,
in which case the above relaxed optimization \eqref{eq:degprob}
is nearly tight.
%the Chebyshev coefficients $b_j$'s and their upper bound $2U/\rho^j$.
%The tightness of the bound justifies our problem formulation of \eqref{eq:degprob}.
The next theorem establishes that \eqref{eq:degprob} has a closed-form solution,
despite having infinite degrees-of-freedom. The theorem is applicable when knowing a $\rho > 1$ and a bound $U$ such that the function $f$ is analytic with $\abs{f\left(\frac{b-a}2 z + \frac{b+a}2\right)} \leq U$ in the complex region bounded by the ellipse with foci $+1,-1$ and sum of major and minor semi-axis lengths is equal to $\rho > 1$. 
\begin{theorem} \label{thm:optdist}
Let $K = \max\{0, N - \left\lfloor\frac{\rho}{\rho-1}\right\rfloor\}$.
The optimal solution $\{q_i^*\}_{i=0}^\infty$ of \eqref{eq:degprob} is
%the distribution that minimizes Chebyshev weighted variance is what follows:
%\vspace{-0.05in}
\begin{align} \label{eq:optdist}
{
\textstyle
q_i^{*} = 
\begin{dcases}
0 &\text{for } \ i < K \\
1 - {(N-K)\left(\rho-1\right)}{\rho^{-1}} &\text{for } \ i = K \\
{(N-K)(\rho-1)^2}{\rho^{-i-1+N-K}} &\text{for } \ i > K,
\end{dcases}
}
\end{align}
and it satisfies the unbiasedness condition in Lemma \ref{lmm:unbiased}, i.e., 
$\lim_{n\to \infty} \sum_{i=n+1}^\infty q^{*}_i \wpn{x} = 0$.
\end{theorem}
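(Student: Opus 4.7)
The plan is to recast the infinite-dimensional optimization in cumulative variables where the monotonicity constraint becomes manageable, apply a Cauchy--Schwarz argument to expose the geometric structure of the minimizer, and conclude with a short verification of the tail condition required by Lemma~\ref{lmm:unbiased}.

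First I would reparameterize by setting $R_j := 1 - \sum_{i=0}^{j-1} q_i$, so that $R_0 = 1$, $R_j \to 0$, and $q_i = R_i - R_{i+1}$; the nonnegativity $q_i \geq 0$ becomes the monotonicity $R_0 \geq R_1 \geq \cdots \geq 0$. A standard Fubini/Abel exchange yields $\sum_{i \geq 1} i q_i = \sum_{j \geq 1} R_j$, so the mean constraint reads $\sum_{j \geq 1} R_j = N$. The objective in \eqref{eq:degprob} becomes $\sum_{j \geq 1} \rho^{-2j}(1/R_j - 1)$, and since the additive constant $\sum_j \rho^{-2j}$ is irrelevant, it suffices to minimize $\Phi(R) := \sum_{j \geq 1} \rho^{-2j}/R_j$ subject to $\sum_j R_j = N$ and $1 = R_0 \geq R_1 \geq R_2 \geq \cdots \geq 0$.

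Next I would apply Cauchy--Schwarz: $\Phi(R)\cdot N \geq \bigl(\sum_{j \geq 1} \rho^{-j}\bigr)^2 = 1/(\rho-1)^2$, with equality iff $R_j \propto \rho^{-j}$. Combined with the budget this yields the geometric candidate $R_j^\star = N(\rho-1)\rho^{-j}$. If $R_1^\star \leq 1$, equivalently $N \leq \rho/(\rho-1)$, the monotonicity constraint is slack and we are done with $K=0$. Otherwise the constraint is active, and the claim is that the optimum has a ``saturated prefix plus geometric tail'' structure: $R_1 = \cdots = R_K = 1$ and $R_j = (N-K)(\rho-1)\rho^{K-j}$ for $j > K$. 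To obtain this, fix $K$ and apply the same Cauchy--Schwarz argument to the tail problem (minimize $\sum_{j > K} \rho^{-2j}/R_j$ subject to $\sum_{j > K} R_j = N - K$), then choose $K$ to be the smallest integer for which the resulting $R_{K+1}^\star = (N-K)(\rho-1)/\rho \leq 1$. This yields $K = \max\{0, \lceil N - \rho/(\rho-1)\rceil\}$, which coincides with the formula $\max\{0, N - \lfloor\rho/(\rho-1)\rfloor\}$ in the theorem for integer $N$. Transforming back via $q_i^\star = R_i^\star - R_{i+1}^\star$ reproduces \eqref{eq:optdist}.

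The final step is the unbiasedness check. For the optimal distribution the tail probability $\sum_{i > n} q_i^\star$ is a geometric series decaying like $\rho^{-n}$, while $|\widehat{p}_n(x)| \leq \sum_{j=0}^n |b_j|/R_j$ is controlled using $|b_j| \leq 2U\rho^{-j}$ from \eqref{eq:decayrate} together with $R_j^\star \geq c\,\rho^{-(j-K)}$; the summand $|b_j|/R_j$ is therefore bounded by a constant on the geometric tail, giving $|\widehat{p}_n(x)| = O(n)$ and hence $\sum_{i > n} q_i^\star \cdot |\widehat{p}_n(x)| = O(n\rho^{-n}) \to 0$. The main obstacle in the plan is rigorously justifying that the prefix-plus-tail ansatz is \emph{globally} optimal (not merely optimal for each fixed $K$): for this I would invoke the KKT conditions for the convex problem in the $R_j$ variables, showing that the candidate multipliers $\mu_j > 0$ precisely for $j \leq K$ are consistent with stationarity iff $N - K$ lies in the window $[1/(\rho-1),\ \rho/(\rho-1)]$, an interval of length exactly $1$, which uniquely pins down the integer $K$ and certifies optimality.
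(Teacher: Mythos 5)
Your proposal is correct in substance and shares the paper's core mechanism --- the Cauchy--Schwarz lower bound applied to the tail sums $R_j = 1-\sum_{i=0}^{j-1}q_i$ against the budget $\sum_{j\geq 1}R_j = N$, yielding the geometric profile $R_j\propto \rho^{-j}$ and the ``zero prefix plus geometric tail'' structure when that profile violates $R_1\leq 1$ --- but it diverges on the step that certifies global optimality in infinite dimensions. The paper explicitly declines to apply KKT to the infinite-dimensional program, instead solving a truncated $T$-dimensional problem via KKT and passing to the limit with a continuity estimate and Berge's Maximum Theorem; you propose to write down multipliers for the infinite problem directly and let convexity certify optimality. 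Your route is legitimate and arguably cleaner, but only as a \emph{sufficiency} argument: you must exhibit the multipliers (nonzero only for the finitely many active constraints $j\leq K$), verify the term-by-term convexity inequality $1/R_j \geq 1/R_j^\star - (R_j-R_j^\star)/(R_j^\star)^2$, and justify summing it against the multipliers --- which works here because $\rho^{-2j}/(R_j^\star)^2 = O(1)$ and $\sum_j |R_j - R_j^\star| \leq 2N < \infty$, but this convergence bookkeeping is exactly what your plan elides and is the one place a referee would push back; invoking ``the KKT conditions'' as if necessity held would not be acceptable, since the paper's own caveat about infinite-dimensional KKT is well taken. Two smaller points: your identification $K=\lceil N-\rho/(\rho-1)\rceil = N - \lfloor \rho/(\rho-1)\rfloor$ and the length-one window $(1/(\rho-1),\,\rho/(\rho-1)]$ for $N-K$ match the paper's conditions \eqref{eq:k1}--\eqref{eq:k2} in the limit $T\to\infty$; and your back-transformation gives $q_i^\star=(N-K)(\rho-1)^2\rho^{K-i-1}$ for $i>K$, which is the version consistent with $\sum_i q_i^\star=1$ and with \eqref{eq:optq} in the appendix --- the exponent $-i-1+N-K$ displayed in \eqref{eq:optdist} appears to be a typo rather than a discrepancy in your argument. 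Your tail verification ($\sum_{i>n}q_i^\star = O(\rho^{-n})$, $|\widehat{p}_n(x)|=O(n)$ via $|b_j|/R_j^\star = O(1)$ and $|T_j|\leq 1$) is the same as the paper's.
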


%We refer the degree distribution in \eqref{eq:optdist} as the optimal distribution.
%The optimal degree distribution is a variant of geometric distribution.
%As it minimizes the variance of estimator for scalar function,
%this can be naturally generalized to matrix functions including spectral sums.

The proof of Theorem \ref{thm:optdist} is given in the supplementary material.
Observe that a degree smaller than $K$ is never sampled under $\{ q_i^*\}$, which
means that
the corresponding unbiased estimator \eqref{eq:estimator} combines deterministic series of degree $K$ with randomized ones of higher degrees. 
Due to the geometric decay of  $\{q_i^*\}$ , large degrees will be sampled with exponentially small probability. 

%We also report numerical results supporting our choice of \eqref{eq:optdist} (opt) %has the smallest variance (often, significantly)
%compared to other distributions: negative binomial (neg) and Poisson (pois).% with the same mean degree.

The optimality of the proposed distribution \eqref{eq:optdist} (labeled opt) is illustrated by comparing 
it numerically to other distributions: negative binomial (labeled neg) and Poisson (labeled pois), on three
analytic functions: $\log x$, $\sqrt{x}$ and $\exp(x)$.
\cref{fig:log,fig:xsq,fig:exp} show the weighted variance \eqref{eq:var} of 
these distributions where their means are commonly set from $N=5$ to $100$.
Observe that the proposed distribution has order-of-magnitude smaller variance 
compared to other tested distributions.

\begin{figure*}[t]
\vspace{-0.15in}
\centering
\subfigure[\text{\scriptsize$f(x) = \log(x)$}]{\includegraphics[width=0.24\textwidth]{./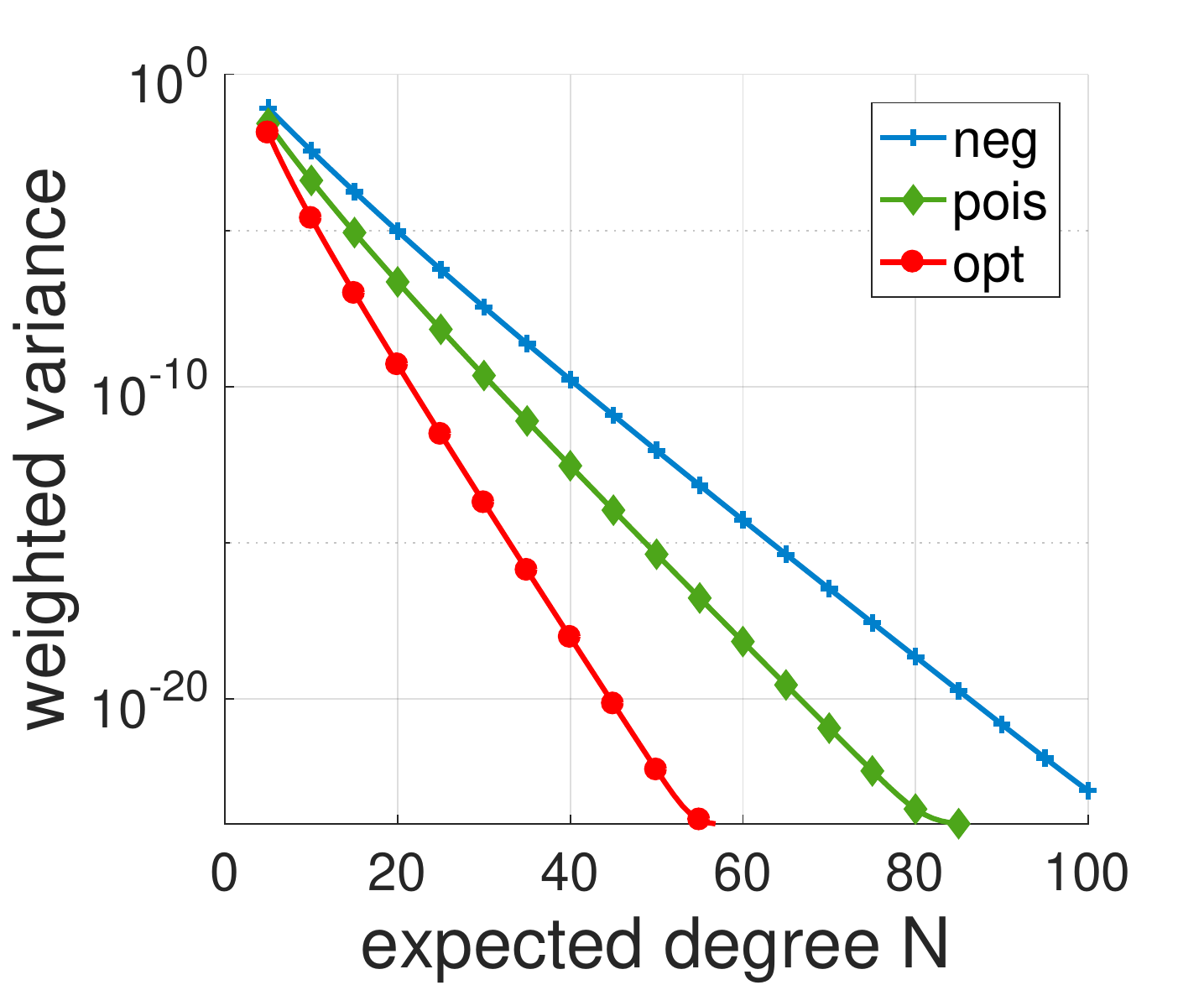}\label{fig:log}}%, x\in[0.05,0.95]
\subfigure[\text{\scriptsize$f(x) = x^{0.5}$}]{\includegraphics[width=0.24\textwidth]{./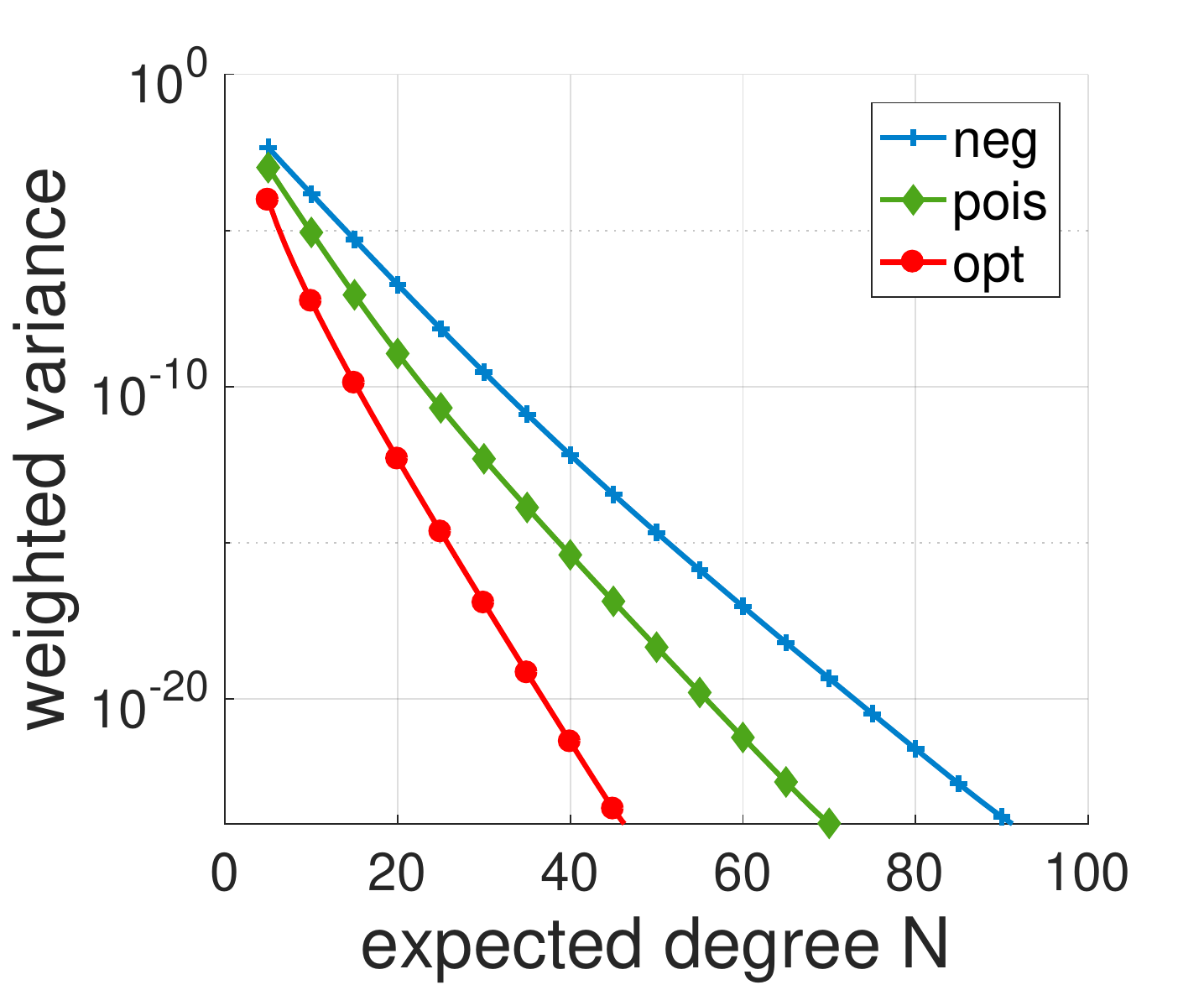}\label{fig:xsq}}%, x\in[0.05,0.95]
\subfigure[\text{\scriptsize$f(x) = \exp(x)$}]{\includegraphics[width=0.24\textwidth]{./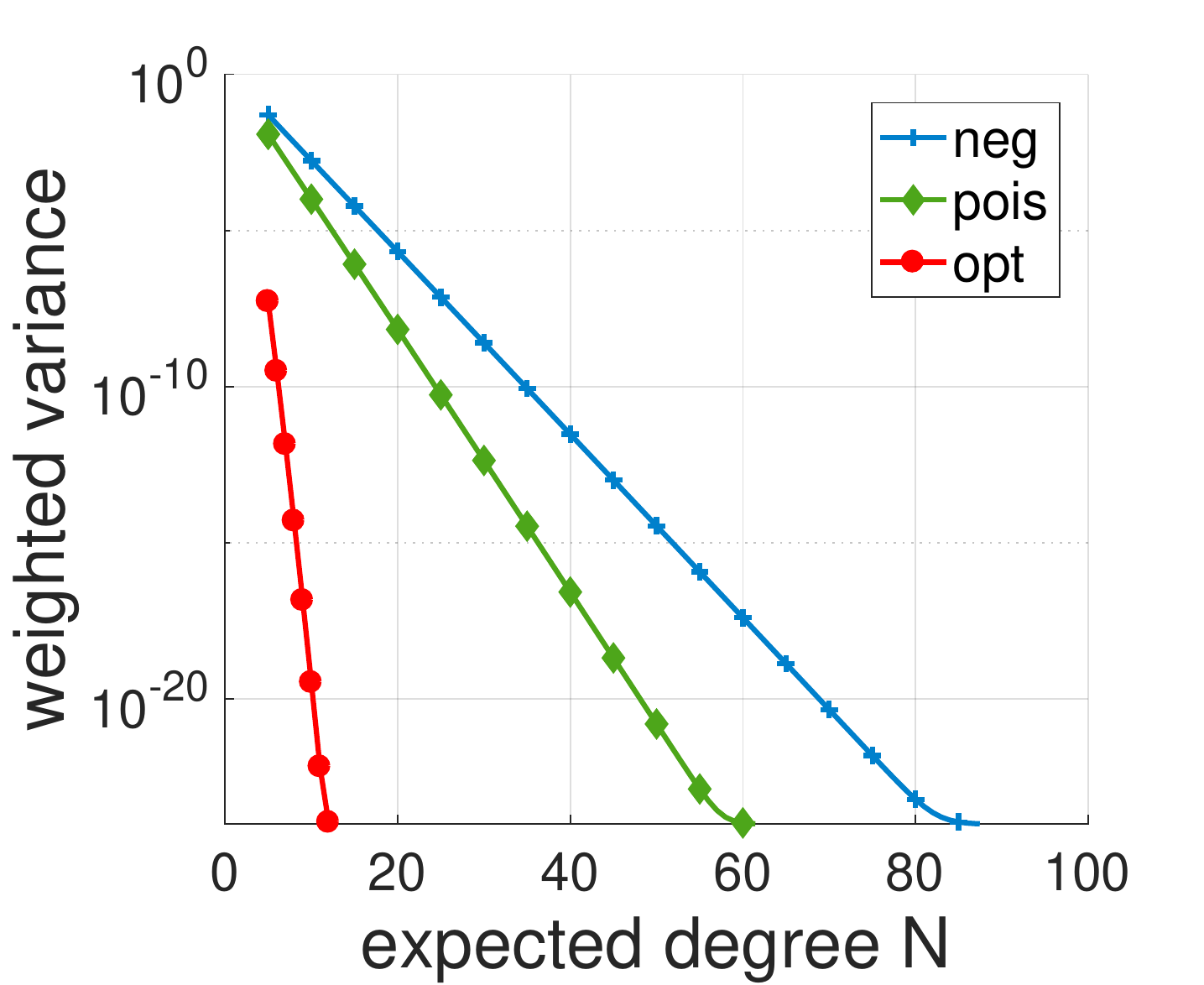}\label{fig:exp}}%, x\in[-1,1]
\subfigure[\text{\scriptsize coefficients of $\log(x)$}]{\includegraphics[width=0.22\textwidth]{./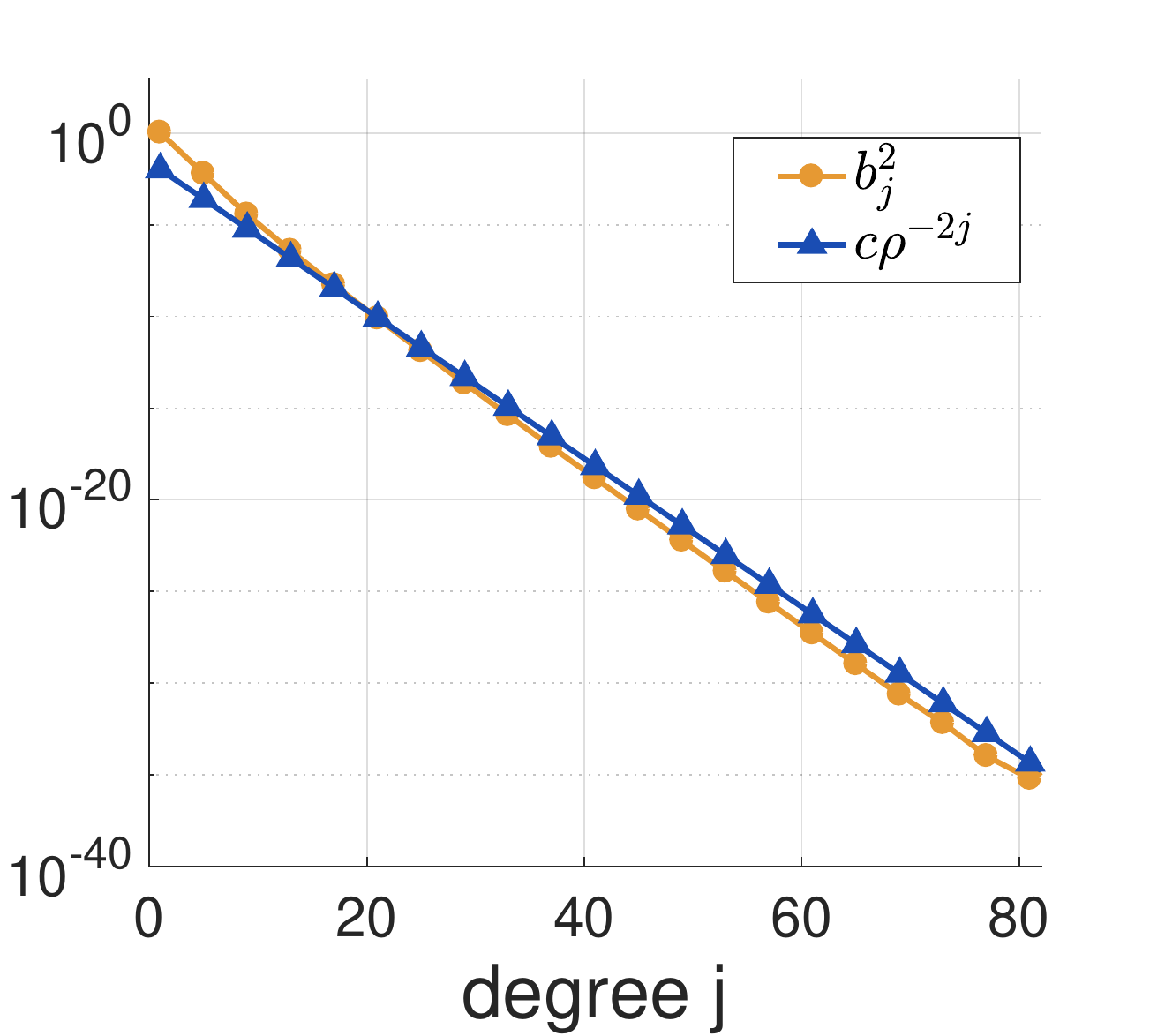}\label{fig:coeff}} 
\vspace{-0.1in}
\caption{Chebyshev weighted variance for three distinct distributions: negative binomial (neg), Poisson (pois) and the optimal distribution \eqref{eq:optdist} (opt) with the same mean $N$ under (a) $\log x$, (b) $\sqrt{x}$ on $[0.05,0.95]$ and (c) $\exp(x)$ on  $[-1,1]$, respectively. Observe that ``opt'' has the smallest variance among all distributions. (d) Comparison between $b_j^2$ and $c\rho^{-2j}$ for some constant $c>0$ and $\log x$.}\label{fig:var}
\vspace{-0.18in}
\end{figure*}

\section{{Stochastic Chebyshev gradient descent algorithms}} \label{sec:main2}
%\vspace{-0.05in}
%\section{Variance Reduction via Sharing Randomness}
In this section, we leverage unbiased gradient estimators based on \eqref{eq:unb2} in conjunction with our 
optimal degree distribution \eqref{eq:optdist} to design computationally efficient methods for solving \eqref{eq:specopt}.
%Now, we come back to design % present %the complete description of
%stochastic optimization methods %gradient descent (SGD) algorithm 
%for solving the optimization \eqref{eq:specopt}
%under unbiased gradient estimators based on \eqref{eq:unb2}.
%with
%the choice of degree distribution \eqref{eq:optdist}. % for solving \eqref{eq:specopt}.
%Our gradient estimates as
%stochastic gradient descent,
%To estimate the gradient, 
In particular, we propose to randomly sample a degree $n$ %$\{n_c\}_{c=1}^D$ 
from \eqref{eq:optdist}
%we sample $m_v$ Rademacher random vectors and $m_c$ random degrees from \eqref{eq:optdist} 
and estimate the gradient via Monte-Carlo method: % as
% We propose the unbiased estimator of gradient of spectral sums as
\begin{align} \label{eq:estder2}
\frac{\partial}{\partial \theta_i} \Sigma_f(A)
= \E{ \frac{\partial}{\partial \theta_i} \vv^\top \wpn{A} \vv} 
%\approx \frac{1}{MD} \sum_{c=1}^D \sum_{k=1}^M \vv^{(t,k)\top} \left( \sum_{j=0}^{n_c} \frac{b_j}{1 - \sum_{i=0}^{j-1} q^{*}_i} \frac{\partial \ww_j^{(t,k)}}{\partial \theta}\right)
\approx \frac{1}{M} \sum_{k=1}^M \vv^{(k)\top} \left( \sum_{j=0}^{n} \frac{b_j}{1 - \sum_{i=0}^{j-1} q^{*}_i} \frac{\partial \ww_j^{(k)}}{\partial \theta_i}\right)
\end{align}
where ${\partial \ww_j^{(k)}}/{\partial \theta_i}$ can be computed using a Rademacher vector $\vv^{(k)}$ and the recursive relation \eqref{eq:updatew}.

%\vspace{-0.05in}
\subsection{Stochastic Gradient Descent (SGD)}
%\vspace{-0.05in}
In this section, we consider the use of projected SGD in conjunction with \eqref{eq:estder2} to numerically solve the optimization \eqref{eq:specopt}. 
In the following, we provide a pseudo-code description of our proposed algorithm. 
\vspace{-0.05in}

%is given in Algorithm \ref{alg:sgd}.
%Based on the estimator, 
%we introduce an algorithm for optimizing spectral sums based on stochastic gradient descent (SGD) below.
% In section \ref{sec:exp}, we verify that other distributions have relatively high variance and they sometimes compute wrong descent direction while the optimal distribution has stable convergence.
\begin{algorithm}[H]
\caption{{SGD} for solving \eqref{eq:specopt}} \label{alg:sgd}
\begin{algorithmic}[1]
\STATE {\bf Input:} 
%matrix $A \in \SM$, scalar function $f$,parameter function $g$,
number of iterations $T$,
number of Rademacher vectors $M$, expected degree $N$ and $\theta^{(0)}$
%\STATE Initialize $\theta$
%\WHILE{not converged}
\FOR{$t=0$ {\bf} to $T-1$}
\STATE  Draw $M$ Rademacher random vectors $\{\vv^{(k)}\}_{k=1}^{M}$
and a random degree $n$ from \eqref{eq:optdist} given $N$
%\STATE  Draw a random degree $n$ from \eqref{eq:optdist} given $N$ %whose mean equals to $N$
\STATE Compute $\psi^{(t)}$ from \eqref{eq:estder2} at $\theta^{(t)}$ using $\{\vv^{(k)}\}_{k=1}^{M}$ and $n$
\STATE Obtain a proper step-size $\eta_t$
\STATE $\theta^{(t+1)} \leftarrow \proj{\theta^{(t)} - \eta_t \left(\psi^{(t)} + \nabla g(\theta^{(t)})\right)}$, 
where $\proj{\cdot}$ is the projection mapping into $\C$
%\ENDWHILE
\ENDFOR
%\STATE {\bf Return:} $A$
\end{algorithmic}
\end{algorithm}
\vspace{-0.15in}

%\subsection{Stochastic gradient descent}
%In the following sections, 
\iffalse
\textcolor{red}{In line 4 of Algorithm \ref{alg:sgd}, 
one should know both upper and lower bound of eigenvalues of $A(\theta^{(t)})$.
For upper bound, we use fast and efficient power method \cite{davidson1975iterative}.
Since it only requires few matrix-vector multiplications, this does not hurt total algorithm complexity 
and we verify that it takes very small portion of running time in practice. 
And we restrict the lower bound by constructing $A(\theta) + \varepsilon^2 I$ for small $\varepsilon > 0$ (see Section \ref{sec:exp}).}
\fi

\iffalse
%In our experiments, we choose $m=30$, $n=10$ 
Although we minimize the variance of estimator 
with a degree distribution, 
the variance can be large in some problems.
For example,
we should choose the parameter as $m=70, n=10$ and $T=6$ 
for stable convergence.
To reduce the value of parameters, 
we introduce advanced variance reduction techniques in next section.
\fi

In order to analyze the convergence rate, 
we assume that  %the followings: %make the following assumptions for rigorous convergence guarantee of proposed algorithms:
$(\mathcal{A}0)$ all eigenvalues of $A(\theta)$ for $\theta \in \C'$ are in the interval $[a,b]$ for some open $\C'\supseteq\C$,
$(\mathcal{A}1)$ $\Sigma_f(A(\theta)) + g(\theta)$ is continuous and $\alpha$-strongly convex with respect to $\theta$ % in $\mathcal{R}$
and $(\mathcal{A}2)$ $A(\theta)$ is $L_A$-Lipschitz for $\normf{\cdot}$,
%and $L_{\mathtt{nuc}}$-Lipschitz for $\normnuc{\cdot}$, and 
$g(\theta)$ is $L_g$-Lipschitz and $\beta_g$-smooth.
% %\footnote{We provide formal definitions of the assumptions in the supplementary materials.}
% \begin{itemize}
% %\setlength{\itemindent}{-.05in}
% \setlength\itemsep{0.05em}
% %\item[{\bf (A0)}] Every $\theta^{(t)}$ is in some bounded region $\mathcal{R}$, %during the update using the gradient
% \item[{\bf (A0)}] All eigenvalues of $A(\theta)$ for $\theta \in \C$ are in the interval $[a,b]$ for some $a,b > 0$,
% \item[{\bf (A1)}] $\Sigma_f(A(\theta)) + g(\theta)$ is $\alpha$-strongly convex with respect to $\theta$, % in $\mathcal{R}$,
% \item[{\bf (A2)}] $A(\theta)$ is $L_A$-Lipschitz for $\normf{\cdot}$ and $L_{\texttt{tr}}$-Lipschitz for $\tr{\cdot}$, and $g(\theta)$ is $L_g$-Lipschitz.
% %\item[{\bf (A3)}] $g(\theta)$ is $\beta_g$-smooth. % in $\mathcal{R}$.
% \end{itemize}
%\vspace{-0.08in}
The formal definitions of the assumptions are in the supplementary material.
These assumptions hold for many target applications, including the ones explored in Section~\ref{sec:exp}.
In particular, we note that assumption $(\mathcal{A}0)$ can be often satisfied with a careful choice of ${\cal C}$. 
%Theses conditions  %essential in general convex optimization and 
%which hold for many spectral-sums optimizations including nuclear norm and log-determinant.
%for example, nuclear norm and negative log-determinant are convex,
%as we discuss in Section \ref{sec:exp}.
%and kernel matrices is smooth with respect to their hyperparameters.
%\textcolor{red}
It has been studied that (projected) SGD has a sublinear convergence rate for a smooth strongly-convex objective
if the variance of gradient estimates is uniformly bounded \cite{robbins1951stochastic,nemirovski2009robust}.
Motivated by this, we first
%To guarantees such a convergence rate of 
%In what follows, 
derive the following upper bound on the variance of gradient estimators 
under the optimal degree distribution \eqref{eq:optdist}.
%, which guarantees such a convergence rate.
\begin{lemma} \label{lmm:varbound}
Suppose that assumptions $(\mathcal{A}0)$-$(\mathcal{A}2)$ %(A0)-(A2) 
hold and $A(\theta)$ is $L_{\mathtt{nuc}}$-Lipschitz for $\normnuc{\cdot}$.
Let $\psi$ be the gradient estimator \eqref{eq:estder2} at $\theta \in \C$ 
using Rademacher vectors $\{\vv^{(k)}\}_{k=1}^M$ and degree $n$ drawn from the optimal distribution \eqref{eq:optdist}. Then, 
$
\mathbf{E}_{\vv,n} [\norm{\psi}_2^2] \leq
\left({2 L_A^2}/{M} + d^\prime L_{\mathtt{nuc}}^2\right)
\left(C_1 + {C_2 N^4}{\rho^{-2N}}\right)
$
where $C_1,C_2>0$ are some constants independent of $M,N$.
\end{lemma}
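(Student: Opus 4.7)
\emph{Overall strategy and Step~1.} The plan is to bias--variance-decompose $\mathbf{E}_{\vv,n}[\|\psi\|_2^2]$ using the unbiasedness from Lemma~\ref{lmm:unbiased}, and then apply the law of total variance conditional on the random degree $n$ to isolate (a) the Hutchinson variance from the Rademacher vectors $\{\vv^{(k)}\}$ and (b) the variance from the random degree $n$. Each piece will be reduced to a sup-norm Chebyshev quantity whose expectation is controlled by the explicit geometric form of the optimal distribution in \eqref{eq:optdist}. Concretely, I first write $\mathbf{E}[\|\psi\|_2^2] = \|\nabla_\theta \Sigma_f(A)\|_2^2 + \sum_{i=1}^{d'}\mathrm{Var}(\psi_i)$. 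For the squared-mean term, I use $\partial\Sigma_f/\partial\theta_i = \mathrm{tr}(f'(A)\partial A/\partial\theta_i)$, the trace--H\"older inequality $|\mathrm{tr}(XY)|\le\|X\|_{\mathtt{op}}\|Y\|_{\mathtt{nuc}}$, and the nuclear-norm Lipschitz hypothesis $\|\partial A/\partial\theta_i\|_{\mathtt{nuc}}\le L_{\mathtt{nuc}}$. Since $f$ is analytic on $[a,b]$, $\|f'(A)\|_{\mathtt{op}}$ is uniformly bounded in $\theta$, so summing over $i$ gives $\|\nabla_\theta \Sigma_f(A)\|_2^2 \le d' L_{\mathtt{nuc}}^2\cdot O(1)$, contributing to the $d' L_{\mathtt{nuc}}^2\, C_1$ piece.

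\emph{Step~2: Hutchinson and random-degree variance.} For $\mathrm{Var}(\psi_i)$ I apply the law of total variance $\mathrm{Var}(\psi_i) = \mathbf{E}_n[\mathrm{Var}_\vv(\psi_i\mid n)] + \mathrm{Var}_n(\mathbf{E}_\vv[\psi_i\mid n])$. Given $n$, $\psi_i$ is an $M$-sample Hutchinson estimator $\frac{1}{M}\sum_k \vv^{(k)\top} X_i^{(n)} \vv^{(k)}$ for the symmetric matrix $X_i^{(n)} := \partial\widehat{p}_n(A)/\partial\theta_i$, so the Rademacher variance bound gives $\mathrm{Var}_\vv(\psi_i\mid n)\le 2\|X_i^{(n)}\|_F^2/M$. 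The Daleckii--Krein derivative formula, together with the mean-value bound on divided differences, yields $\|X_i^{(n)}\|_F \le \sup_{x\in[a,b]}|\widehat{p}_n'(x)|\cdot\|\partial A/\partial\theta_i\|_F$; summing over $i$ and using the Frobenius Lipschitz consequence $\sum_i\|\partial A/\partial\theta_i\|_F^2\le d' L_A^2$ delivers the $2L_A^2/M$ prefactor (with $d'$ absorbed into the constants). For the outer variance, $\mathbf{E}_\vv[\psi_i\mid n] = \mathrm{tr}(\widehat{p}_n'(A)\partial A/\partial\theta_i)$, and since unbiasedness gives $\mathbf{E}[\psi_i] = \mathrm{tr}(f'(A)\partial A/\partial\theta_i)$, another application of trace--H\"older yields $\mathrm{Var}_n(\mathbf{E}_\vv[\psi_i\mid n]) \le L_{\mathtt{nuc}}^2\, \mathbf{E}_n\!\bigl[\sup_{x\in[a,b]}|\widehat{p}_n'(x)-f'(x)|^2\bigr]$, producing the $d' L_{\mathtt{nuc}}^2$ factor.

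\emph{Step~3: Chebyshev sup-norm tail and main obstacle.} After these reductions the lemma reduces to showing that both $\mathbf{E}_n[\sup_{x\in[a,b]}|\widehat{p}_n'(x)|^2]$ and $\mathbf{E}_n[\sup_{x\in[a,b]}|\widehat{p}_n'(x)-f'(x)|^2]$ are bounded by $C_1' + C_2' N^4 \rho^{-2N}$. I would expand $\widehat{p}_n'(x) - f'(x) = \frac{2}{b-a}\bigl[\sum_{j=0}^n (\widetilde b_j^{(n)} - b_j)\, T_j'(\phi(x)) - \sum_{j>n} b_j\, T_j'(\phi(x))\bigr]$ with $\widetilde b_j^{(n)} = b_j/(1-\sum_{i<j}q_i^*)$ and $\phi(x) = \frac{2}{b-a}x - \frac{b+a}{b-a}$, then invoke the exponential decay $|b_j|\le 2U\rho^{-j}$ from \eqref{eq:decayrate}, the Markov brothers inequality $|T_j'|\le j^2$ on $[-1,1]$, and the explicit geometric form of $q_i^*$ in \eqref{eq:optdist}---whose support begins at $K=\max\{0, N-\lfloor\rho/(\rho-1)\rfloor\}\approx N$ and whose tail decays like $\rho^{-i}$---to argue that the rescaling factor $1/(1-\sum_{i<j}q_i^*)$ remains uniformly bounded. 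The error series then collapses to geometric tails of the form $\sum_{j\ge K} j^4\rho^{-2j} = O(N^4\rho^{-2N})$. The main obstacle is precisely this sup-norm Chebyshev estimate for the derivative: Lemma~\ref{lmm:2} handles only the weighted $L^2$ variance of $\widehat{p}_n$ itself (without differentiation), so I must track the Markov-type $j^2$ factors injected by differentiation (their squares producing the $N^4$) and verify that the geometric decay built into the optimal $q_i^*$ dominates these polynomial blow-ups; once this sup-norm tail estimate is in hand, assembling the three steps completes the bound.
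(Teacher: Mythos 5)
Your overall architecture matches the paper's: you split $\mathbf{E}_{\vv,n}[\norm{\psi}_2^2]$ into a Hutchinson-variance piece controlled by $\normf{\partial A/\partial\theta_i}$ (yielding the $2L_A^2/M$ prefactor) and a trace piece controlled by $\normnuc{\partial A/\partial\theta_i}\,\normt{\widehat{p}_n^{\,\prime}(A)}$ (yielding $d'L_{\mathtt{nuc}}^2$), and both reduce to the scalar quantity $\mathbf{E}_n\bigl[\bigl(\sum_{j=1}^n|\widehat{b}_j|j^2\bigr)^2\bigr]$ with $\widehat{b}_j=b_j/(1-\sum_{i<j}q_i^*)$. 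The paper reaches the same quantity via the explicit expansion of $\partial(\vv^\top\widehat{p}_n(A)\vv)/\partial\theta_i$ into products $T_r(\widetilde A)\frac{\partial A}{\partial\theta_i}U_{j-r}(\widetilde A)$ together with $\normt{T_r(\widetilde A)}\le 1$ and $\normt{U_j(\widetilde A)}\le j+1$, where you use Daleckii--Krein plus the Markov brothers' inequality; these are interchangeable, and your law-of-total-variance bookkeeping is equivalent to the paper's $\mathbf{E}_\vv[(\vv^\top G\vv)^2\mid n]\le 2\normf{G}^2+(\tr{G})^2$ followed by $\mathbf{E}_n$.

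The gap is in Step~3, exactly where you locate the main obstacle. The assertion that the rescaling factor $1/(1-\sum_{i<j}q_i^*)$ ``remains uniformly bounded'' is false: for the optimal distribution \eqref{eq:optdist} one has $1-\sum_{i=0}^{j-1}q_i^*=\Pr(n\ge j)=(N-K)(\rho-1)\rho^{K-j}$ for $j>K$, so the factor grows geometrically like $\rho^{j-K}$ and \emph{exactly cancels} the decay $|b_j|\le 2U\rho^{-j}$ from \eqref{eq:decayrate}. Hence the reweighted coefficients satisfy $|\widehat{b}_j|\approx \rho^{-K}/\bigl((N-K)(\rho-1)\bigr)$ uniformly over the whole tail $j>K$ --- they are flat, not decaying --- and the partial sums $\sum_{j=K+1}^{n}|\widehat{b}_j|j^2$ grow like $(n-K)\,n^2\rho^{-K}$ with the realized degree $n$. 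The stated bound $C_1+C_2N^4\rho^{-2N}$ is recovered only by taking the expectation over $n$ and exploiting that $q_n^*$ itself decays like $\rho^{-(n-K)}$, so that $\mathbf{E}_n[(n-K)^2 n^4]=O(N^4)$; this is precisely the computation carried out in the paper's Lemma~\ref{lmm:moment2}. Your heuristic $\sum_{j\ge K}j^4\rho^{-2j}=O(N^4\rho^{-2N})$ lands on the correct order of magnitude, but it is derived from the false boundedness premise and does not establish the needed moment estimate; the crux of the lemma is still open in your write-up. (A minor secondary point: bounding $\sum_i\normf{\partial A/\partial\theta_i}^2$ by $d'L_A^2$ rather than by $\normf{\partial A/\partial\theta}^2\le L_A^2$ introduces an extra factor of $d'$ into the $2L_A^2/M$ term that the stated bound does not have.)
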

%From the above lemma, we provide that Algorithm \ref{alg:sgd} has a sublinear convergence rate as following.
The above lemma allows us to provide a sublinear convergence rate for Algorithm \ref{alg:sgd}.

%In the following theorem, 
%we provide a sublinear convergence rate of Algorithm \ref{alg:sgd}. 
%and SVRG (Algorithm \ref{alg:svrg}), respectively,
%where they guarantee sublinear and linear rates 
%under the above conditions
%. Our unbiased estimator for optimization guarantee sublinear rate for SGD and linear rate for SVRG 
%These results are 
%which is equivalent to that of SGD for standard convex problems.
%All proof of theorems are provided in the supplementary material. 
%\subsection{Convergence analysis of SGD}
%The following theorem summarizes the performance of Algorithm \ref{alg:sgd} under above assumptions.
%\vspace{-0.05in}
\begin{theorem} \label{thm:sgd}
Suppose that assumptions $(\mathcal{A}0)$-$(\mathcal{A}2)$ hold 
and $A(\theta)$ is $L_{\mathtt{nuc}}$-Lipschitz for $\normnuc{\cdot}$.
%and $\abs{\tr{\partial A (\theta) / \partial \theta_i}} \leq L_{\mathtt{tr}}$ for $i=1, \dots , d^\prime$.
%Assume that if
%Assume that $\norm{ \nabla g(\theta)}^2 \leq B$ for some $B > 0$.
If one chooses the step-size $\eta_t = {1}/{\alpha t}$, then it holds that
%\clearpage
\begin{align*}
\mathbf{E}[\norm{\theta^{(T)} - \theta^*}_2^2] \leq \frac{4}{\alpha^2 T} \max \left(L_g^2, 
%\frac{1}{(b-a)^2} 
\left(\frac{2 L_A^2}{M} + d^\prime L_{\mathtt{nuc}}^2\right)
\left(C_1 + \frac{C_2 N^4}{\rho^{2N}}\right)
\right)
\end{align*}
where $C_1, C_2 > 0$ are constants independent of $M,N$, and
$\theta^* \in \C$ is the global optimum of \eqref{eq:specopt}.
%and $\theta^{(T)}$ is the output of Algorithm \ref{alg:sgd}. 
\end{theorem}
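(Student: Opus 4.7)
The plan is to adapt the classical convergence analysis of projected SGD for smooth strongly-convex objectives with a uniformly bounded stochastic gradient second moment, plugging in the variance bound of Lemma~\ref{lmm:varbound} at the end. Let $F(\theta) := \Sigma_f(A(\theta)) + g(\theta)$ and denote the stochastic search direction at iterate $t$ by $\Delta^{(t)} := \psi^{(t)} + \nabla g(\theta^{(t)})$. By Lemma~\ref{lmm:unbiased}, $\mathbf{E}_{\vv,n}[\Delta^{(t)} \mid \theta^{(t)}] = \nabla F(\theta^{(t)})$.

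First, I would use the non-expansiveness of the Euclidean projection $\Pi_{\cal C}$ onto the closed convex set ${\cal C}$ (which contains $\theta^*$) to write
\begin{align*}
\|\theta^{(t+1)} - \theta^*\|_2^2 \;\leq\; \|\theta^{(t)} - \eta_t \Delta^{(t)} - \theta^*\|_2^2 \;=\; \|\theta^{(t)}-\theta^*\|_2^2 - 2\eta_t \langle \Delta^{(t)}, \theta^{(t)}-\theta^*\rangle + \eta_t^2 \|\Delta^{(t)}\|_2^2.
\end{align*}
Taking conditional expectation with respect to the randomness $(\vv^{(k)}, n)$ at step $t$, unbiasedness gives $\mathbf{E}[\langle \Delta^{(t)},\theta^{(t)}-\theta^*\rangle] = \langle \nabla F(\theta^{(t)}), \theta^{(t)}-\theta^*\rangle$. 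Then I would invoke $\alpha$-strong convexity of $F$ (assumption $(\mathcal{A}1)$) together with optimality of $\theta^*$ (so $\langle \nabla F(\theta^*), \theta^{(t)}-\theta^*\rangle \geq 0$) to obtain the contraction
$\langle \nabla F(\theta^{(t)}), \theta^{(t)}-\theta^*\rangle \geq \alpha \|\theta^{(t)}-\theta^*\|_2^2$.

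Next, I would control the second moment term. Using $\|a+b\|_2^2 \leq 4\max(\|a\|_2^2,\|b\|_2^2)$ and the $L_g$-Lipschitz property of $g$ (so $\|\nabla g(\theta^{(t)})\|_2 \leq L_g$),
\begin{align*}
\mathbf{E}\bigl[\|\Delta^{(t)}\|_2^2\bigr] \;\leq\; 4\max\!\left(L_g^2,\; \mathbf{E}\bigl[\|\psi^{(t)}\|_2^2\bigr]\right) \;\leq\; 4\max\!\left(L_g^2,\; \Bigl(\tfrac{2L_A^2}{M} + d' L_{\mathtt{nuc}}^2\Bigr)\Bigl(C_1 + \tfrac{C_2 N^4}{\rho^{2N}}\Bigr)\right) =: 4G^2,
\end{align*}
where the last inequality is precisely Lemma~\ref{lmm:varbound}. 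Setting $a_t := \mathbf{E}[\|\theta^{(t)}-\theta^*\|_2^2]$ and taking total expectation yields the recursion
\begin{align*}
a_{t+1} \;\leq\; (1 - 2\eta_t \alpha)\,a_t + 4\eta_t^2 G^2.
\end{align*}

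Finally, I would plug in $\eta_t = 1/(\alpha t)$ and prove $a_T \leq 4G^2/(\alpha^2 T)$ by induction on $T$: the base case is handled by choosing the bound loose enough for small $t$ (a standard trick, e.g.\ taking $t_0$ large enough), and the inductive step follows from $(1-2/t)\cdot\frac{1}{t} + \frac{1}{t^2} = \frac{t-1}{t^2} \leq \frac{1}{t+1}$. Substituting the explicit form of $G^2$ yields the claimed bound. The only genuinely nontrivial ingredient is the variance bound already supplied by Lemma~\ref{lmm:varbound}; the remaining argument is the textbook SGD rate for strongly convex objectives, so the main care needed is in correctly combining the $L_g$ bound with the Chebyshev-based variance bound to obtain the $\max(\cdot,\cdot)$ structure stated in the theorem.
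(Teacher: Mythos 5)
Your proposal is correct and follows essentially the same route as the paper's proof: projection non-expansiveness, unbiasedness of the search direction, $\alpha$-strong convexity to get the contraction, the bound $\mathbf{E}[\|\Delta^{(t)}\|_2^2]\leq 4\max(L_g^2,\mathbf{E}[\|\psi^{(t)}\|_2^2])$ combined with Lemma~\ref{lmm:varbound}, and induction with $\eta_t=1/(\alpha t)$. The one place you are looser is the induction base case: the claimed rate has no slack constant, so ``taking $t_0$ large enough'' is not available, and the paper instead establishes $a_1\leq 4B^2/\alpha^2$ explicitly via strong convexity plus Cauchy--Schwarz, namely $\alpha^2\,\mathbf{E}[\|\theta^{(1)}-\theta^*\|_2^2]\leq\mathbf{E}[\|\psi^{(1)}+\nabla g^{(1)}\|_2^2]\leq 4B^2$.
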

The proofs of Lemma \ref{lmm:varbound} and Theorem \ref{thm:sgd} are given in the supplementary material.
%It has been studied that SGD has a sublinear convergence rate for a smooth strongly-convex objective
%if the variance of gradient estimates is uniformly bounded \cite{robbins1951stochastic,nemirovski2009robust}.
%In our proof,
%under the optimal degree distribution \eqref{eq:optdist}, 
%we derive the upper bound on the variance of gradient estimates to guarantee such a convergence rate.
%Observe that the convergence rate in above theorem is sublinear, which is equivalent to the standard SGD.
%The result is followed from the variance of our estimator is bounded with some mild conditions.
%If the number of trace probing vectors $M$ is large, it can be guaranteed better convergence and
%if $d^\prime$ is large, the algorithm converges much slower.
Note that {larger $M,N$ provide better convergence but they increase the computational complexity.}
The convergence is also faster with smaller $d^\prime$, which is also evident
in our experiments (see Section \ref{sec:exp}).

\vspace{-0.1in}
\subsection{Stochastic Variance Reduced Gradient (SVRG)}
\vspace{-0.05in}
%The degree distribution \eqref{eq:optdist} is designed for achieving small variances in gradient estimations.
In this section, we introduce a more advanced stochastic method using
%makes the variance of estimator small itself.
%In gradient-based optimization, many 
a further variance reduction technique, %s have been proposed for further improvement. 
%In particular, we incorporate the
inspired by the stochastic variance reduced gradient method (SVRG) \cite{johnson2013accelerating}.
The full description of the proposed SVRG scheme for solving the optimization \eqref{eq:specopt}
is given below.
%\vspace{-0.06in}

%and 
%adopt the idea of SVRG to our problem, as stated formally in Algorithm \ref{alg:svrg}.
%describe SVRG based algorithm in following.
%and refer the paper for interested readers on more details of SVRG.
%that has been used in various optimization problems with remarkable improvements.
%\vspace{-0.2in}
\begin{algorithm}[H]
\caption{{SVRG} for solving \eqref{eq:specopt}} \label{alg:svrg}
{
% \small
\begin{algorithmic}[1]
\STATE {\bf Input:}  
number of inner/outer iterations $T,S$,
%matrix $A\in \SM$, scalar function $f$, parameter function $g$,
number of Rademacher vectors $M$, 
expected degree $N$, step-size $\eta$ and initial parameter $\theta^{(0)} \in \C$
%inner and outer iteration $T_{\mathtt{inner}}, T_{\mathtt{outer}}$, step length $\eta > 0$
%\STATE Initialize $\widetilde{\theta}$
\STATE $\widetilde{\theta}^{(1)} \leftarrow \theta^{(0)}$
\FOR {$s = 1$ to $S$}
%\WHILE{not converged}
% \STATE Choose a polynomial degree $n$ at random.
% \STATE Draw trace random vectors $\vv^{(1)}, \dots, \vv^{(m)}$.
\STATE $\widetilde{\mu}^{{(s)}} \leftarrow \nabla \Sigma_f(A(\widetilde{\theta}^{(s)}))$ and $\theta^{(0)} \leftarrow \widetilde{\theta}^{(s)}$
% %\STATE $\nabla \Gamma^{\text{batch}} \leftarrow \mathbf{0}_{d \times d}$
\FOR {$t = 0$ to $T-1$}
\STATE Draw $M$ Rademacher random vectors $\{\vv^{(k)}\}_{k=1}^M$ and a random degree $n$ from \eqref{eq:optdist}
%whose mean equals to $N$
\STATE Compute $\psi^{(t)},\widetilde{\psi}^{(s)}$ from \eqref{eq:estder2} at $\theta^{(t)}$ and $\widetilde{\theta}^{(s)}$, respectively using $\{\vv^{(k)}\}_{k=1}^M$ and $n$
%\STATE $\Gamma_{\widetilde{\theta}}\leftarrow$ Compute gradient estimate \eqref{eq:estder2} at $\widetilde{\theta}$ using
%$\{\vv^{(k)}\}_{p=1}^{m_v}$ and $\{n_l\}_{l=1}^{m_c}$.
% \STATE Compute gradient estimates $\widetilde{\Gamma}$ at $\widetilde{A}$ using  $\{\vv^{(k)}\}_{p=1}^m$ and $\{n_i\}_{i=1}^T$
% \STATE Compute gradient estimates $\Gamma$ at $A^{(t)}$ using  $\{\vv^{(k)}\}_{p=1}^m$ and $\{n_i\}_{i=1}^T$
% \STATE Find the proper step-size $\eta$
\STATE $\theta^{(t+1)} \leftarrow \proj{\theta^{(t)} - \eta \left( 
\psi^{(t)} - \widetilde{\psi}^{(s)}
%\Gamma_{\theta} - \Gamma_{\widetilde{\theta}} 
+ \widetilde{\mu}^{(s)} + \nabla g(\theta^{(t)})\right)}$
% using $\vv^{(k)}$ and $n$.
% %\STATE $\nabla \Gamma^{\text{batch}} \leftarrow \nabla \Gamma^{\text{batch}} + \nabla \Gamma$ 
\ENDFOR
% \STATE Find appropriate step-size $\eta$.
% \STATE $A \leftarrow A - \frac{\eta}{m} \sum_{p=1}^m \nabla \Gamma^{(k)}$
% %\STATE $\mathbf{w}_0 \leftarrow$ Generate a random vector for trace estimator.
% %%\STATE $\Gamma \leftarrow \Gamma + $ {\bf Algorithm 2}$(A, f)$
% %%\STATE $\mathbf{w}_0 \leftarrow \mathbf{v}$
% %\FOR {$j = 1$ to $n$}
% %\STATE $\mathbf{w}_{j} \leftarrow A \mathbf{w}_{j-1}$
% %\ENDFOR
% %\FOR {$i=1$ to $n$}
% %\STATE $\mathbf{u}_i \leftarrow 0$
% %\FOR {$j=i$ to $n$}
% %\STATE $\mathbf{u}_i \leftarrow \mathbf{u}_i + a_j \mathbf{y}_i$ 
% %\ENDFOR
% %\STATE $\Gamma \leftarrow \Gamma + \mathbf{w}_j \mathbf{\gamma}_j^\top / m$
% %\ENDFOR
\STATE $\widetilde{\theta}^{(s+1)} \leftarrow \frac1T\sum_{t=1}^T \theta^{(t)}$
%\ENDWHILE
\ENDFOR
% \STATE {\bf Return:} $A$
\end{algorithmic}
}
\end{algorithm}
%\vspace{-0.15in}
%\subsection{Variance Reduction for Estimating Gradients} % for stochastic gradient descent}
%In spectral sums approximation, the key idea of variance reduction is sharing randomness among estimates.
%One can apply the idea of sharing randomness into gradient estimate for solving \eqref{eq:specopt}.
%The variance of gradient estimator grows with matrix dimension. 
%Stochastic gradient descents is fast and efficient in large-scale problems
%but it can be converged slowly due to large variance.
%The degree distribution \eqref{eq:optdist} is designed for achieving 
%small variances in gradient estimations.
%In this section, we introduce 
%makes the variance of estimator small itself.
%In gradient-based optimization, many 
%a further variance reduction technique, %s have been proposed for further improvement. 
%In particular, we incorporate the
%inspired by the stochastic variance reduced gradient method (SVRG)
%\cite{johnson2013accelerating}, and refer the paper for interested readers on more details of SVRG.
%that has been used in various optimization problems with remarkable improvements.
The main idea of SVRG is 
to subtract a mean-zero random variable to the original stochastic gradient estimator,
where the randomness between them is shared. % across the gradient.
%and in many optimizations it was reported with successful results.
%The key idea of SVRG is to share the randomness across the gradients
%This reduces the variance of gradient as in the line search and leads to linear convergence rate.
%so that the variance is reduced as in line search.
The SVRG algorithm %studied in \cite{johnson2013accelerating}
was originally designed for optimizing finite-sum objectives, i.e., $\sum_{i} f_i(x)$, whose
randomness is from the index $i$. 
%For example,a single training data is chosen at uniformly random and the gradient is estimated across the chosen index.
On the other hand, 
the randomness in our case is from polynomial degrees and trace probing vectors
for optimizing objectives of spectral-sums.
%the randomness in our problems can be considered as .
%of our gradient estimator corresponds to 
%Motivated by this, 
%By modifying randomness selection, we can use SVRG in optimizing spectral sums.
%For SVRG in optimizing spectral sums, We modify randomness selection.
%Namely, %at a high level,
%the idea of sharing randomness is critical for both SVRG in this section
%and the line search in the previous section. %, although corresponding reasons are different.
%the original SVRG to our setup and 
%We adopt the idea of SVRG to our problem, as stated formally in Algorithm \ref{alg:svrg}.
% \begin{lemma}
% \begin{align*}
% \norm{\nabla \mathbf{v}^\top \widehat{p}_{n}(A) \mathbf{v} - 
% \nabla \mathbf{v}^\top \widehat{p}_{n}(B) \mathbf{v}}
% \leq O( \norm{A - B})
% \end{align*}
% \end{lemma}
%We emphasize again that 
This leads us to use the same randomness in $\{\vv^{(k)}\}_{k=1}^{M}$ and $n$ for estimating both $\psi^{(t)}$ and $\widetilde{\psi}^{(s)}$ in
line 7 of Algorithm \ref{alg:svrg}. % which is crucial for improved convergence.
%We remark that SVRG allows the constant step-size for its convergence \cite{johnson2013accelerating} 
%and the line search might be less effective.
We remark that unlike SGD,
Algorithm \ref{alg:svrg} requires the expensive computation of exact gradients every $T$ iterations.
The next theorem establishes that if one sets $T$ correctly only $O(1)$ gradient computations are required (for a fixed suboptimality)
since we have a linear convergence rate. 
\begin{theorem} \label{thm:svrg}
Suppose that assumptions $(\mathcal{A}0)$-$(\mathcal{A}2)$ hold % and $A(\theta^{(t)}) \in \SM$ has eigenvalues in $[a,b]$ for $b,a > 0$. 
and $A(\theta)$ is $\beta_A$-smooth for $\normf{\cdot}$.
Let $\beta^2 = 2\beta_g^2 + \left(\frac{L_A^4 + \beta_A^2}{M} + L_A^4 \right)\left(D_1 + \frac{D_2 N^8}{\rho^{2N}} \right)$
for some constants $D_1, D_2 > 0$ independent of $M,N$.
Choose $\eta = \frac{\alpha}{7 \beta^2}$ and $T \geq 25 \beta^2 / \alpha^2$. Then, it holds that 
\begin{align*}
\mathbf{E}[\norm{\widetilde{\theta}^{(S)} - \theta^*}_2^2] \leq r^S
\mathbf{E}[\norm{\theta^{(0)} - \theta^*}_2^2],
%\qquad\mbox{with probability at least $1-\delta$},
\end{align*}
where %$S>0$, % is any multiple of $T$ (i.e., $T$ divides $S$),  
$0 < r < 1$ is some constant % and.
and $\theta^* \in \C$ is the global optimum of \eqref{eq:specopt}.
%and $\theta^{(S)}$ is the output of Algorithm \ref{alg:svrg}. 
\end{theorem}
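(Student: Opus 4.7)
The approach is to apply the standard SVRG template of Johnson and Zhang, exploiting the fact that in line 7 of Algorithm~\ref{alg:svrg} the same Rademacher vectors $\{\vv^{(k)}\}_{k=1}^M$ and the same sampled degree $n$ are used for both $\psi^{(t)}$ and $\widetilde\psi^{(s)}$. Write $F(\theta) := \Sigma_f(A(\theta)) + g(\theta)$ and let $\widehat g^{(t)} := \psi^{(t)} - \widetilde\psi^{(s)} + \widetilde\mu^{(s)} + \nabla g(\theta^{(t)})$ denote the SVRG direction used in line 8. First I would verify unbiasedness: by Lemma~\ref{lmm:unbiased} applied to both $\psi^{(t)}$ and $\widetilde\psi^{(s)}$, $\mathbf{E}[\psi^{(t)}\mid\theta^{(t)}] = \nabla\Sigma_f(A(\theta^{(t)}))$ and $\mathbf{E}[\widetilde\psi^{(s)}\mid\widetilde\theta^{(s)}] = \widetilde\mu^{(s)}$, so the $-\widetilde\psi^{(s)} + \widetilde\mu^{(s)}$ pair has mean zero and $\mathbf{E}[\widehat g^{(t)}\mid \theta^{(t)}, \widetilde\theta^{(s)}] = \nabla F(\theta^{(t)})$.

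The core technical step is a variance bound of the form
\begin{equation*}
\mathbf{E}_{\vv,n}\bigl\|\widehat g^{(t)} - \nabla F(\theta^{(t)})\bigr\|_2^2 \;\leq\; \beta^2\bigl(\|\theta^{(t)} - \theta^*\|_2^2 + \|\widetilde\theta^{(s)} - \theta^*\|_2^2\bigr)
\end{equation*}
with $\beta^2$ matching the theorem statement. The $g$-contribution is immediate from $(\mathcal{A}2)$ since $\nabla g$ is $\beta_g$-Lipschitz. For the spectral-sum contribution I would show that, conditional on $\{\vv^{(k)}\}$ and $n$, the map $\theta\mapsto\psi(\theta)$ defined in~\eqref{eq:estder2} is Lipschitz in $\theta$ with a (random) constant $L(\vv,n)$ satisfying $\mathbf{E}_{\vv,n}[L(\vv,n)^2] = O\bigl((L_A^4{+}\beta_A^2)/M + L_A^4\bigr)\bigl(D_1 + D_2 N^8\rho^{-2N}\bigr)$. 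The constant $L(\vv,n)$ is produced by a two-layer induction on the Chebyshev recurrence~\eqref{eq:updatew}: first Lipschitzness of $\ww_j(\theta)$ in $\theta$ propagates via $L_A$-Lipschitzness of $A$ from $(\mathcal{A}2)$; then Lipschitzness of $\partial_{\theta_i}\ww_j(\theta)$ propagates via $L_A$-Lipschitzness together with $\beta_A$-smoothness of $A$, picking up one extra factor of $j$ because the derivative recurrence couples $\partial_{\theta_i}\ww_j$ to $\ww_j$ through $\partial A/\partial\theta_i$. Summing the reweighted series with coefficients $b_j/(1-\sum_{i<j}q_i^\ast)$ and taking expectation over $n\sim\{q_i^\ast\}$ as in~\eqref{eq:optdist} reproduces the same geometric-tail computation used in Lemma~\ref{lmm:varbound}, but the extra polynomial factor from the derivative layer turns the $N^4$ of that lemma into $N^8$.

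With the variance bound in hand, the rest is the classical SVRG inner-loop analysis. Nonexpansiveness of $\Pi_\C$ gives the expansion
\begin{equation*}
\|\theta^{(t+1)} - \theta^*\|_2^2 \;\leq\; \|\theta^{(t)} - \theta^*\|_2^2 - 2\eta\langle \widehat g^{(t)}, \theta^{(t)} - \theta^*\rangle + \eta^2\|\widehat g^{(t)}\|_2^2.
\end{equation*}
Taking conditional expectation, combining unbiasedness with $\alpha$-strong convexity of $F$ from $(\mathcal{A}1)$, and inserting the variance bound yields
\begin{equation*}
\mathbf{E}\|\theta^{(t+1)} - \theta^*\|_2^2 \;\leq\; (1 - 2\alpha\eta + 2\eta^2\beta^2)\,\mathbf{E}\|\theta^{(t)} - \theta^*\|_2^2 + 2\eta^2\beta^2\,\mathbf{E}\|\widetilde\theta^{(s)} - \theta^*\|_2^2.
\end{equation*}
Averaging over $t = 0,\dots,T-1$, dividing by $T$, and using $\widetilde\theta^{(s+1)} = \tfrac1T\sum_t\theta^{(t)}$ together with convexity of $\|\cdot - \theta^*\|_2^2$ produces a one-epoch contraction $\mathbf{E}\|\widetilde\theta^{(s+1)} - \theta^*\|_2^2 \leq r\,\mathbf{E}\|\widetilde\theta^{(s)} - \theta^*\|_2^2$ with $r = \frac{1}{\alpha\eta(1-2\eta\beta^2)T} + \frac{2\eta\beta^2}{1-2\eta\beta^2}$. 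Substituting $\eta = \alpha/(7\beta^2)$ and $T \geq 25\beta^2/\alpha^2$ makes $r<1$ by routine arithmetic, and iterating $S$ outer epochs gives the claim.

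The hard part will be the variance bound on $\mathbf{E}\|\psi^{(t)} - \widetilde\psi^{(s)}\|_2^2$. Carrying Lipschitz continuity through the \emph{joint} recurrence for $(\ww_j,\partial_{\theta_i}\ww_j)$ requires tracking constants that grow only polynomially in $j$, so that after weighting by $b_j/(1-\sum_{i<j}q_i^\ast)$ and averaging over $n$ drawn from~\eqref{eq:optdist}, the tail contribution collapses to $N^8\rho^{-2N}$ rather than blowing up exponentially in $N$; keeping this tight is precisely what makes the polynomial-pre-factor (versus $\rho$-type) dependence on $N$ usable for choosing $\eta$ and $T$.
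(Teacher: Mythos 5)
Your proposal follows essentially the same route as the paper: the key ingredient is exactly the paper's Lemma~\ref{lmm:smooth} (an expected-smoothness bound for the shared-randomness estimator, with the $N^4\to N^8$ inflation coming from the extra polynomial factors in the derivative recurrence, as you describe), which is then fed into the standard SVRG inner-loop recursion, telescoped, and combined with Jensen's inequality over the epoch average. The only cosmetic differences are that the paper centers its variance bound at the estimator evaluated at $\theta^*$ (the usual Johnson--Zhang trick, yielding a second-moment bound on the SVRG direction) rather than at $\nabla F(\theta^{(t)})$, and its contraction factor comes out as $r=\frac{1+2T\eta^2\beta^2}{2\eta T(\alpha-\eta\beta^2)}$, which is the form your telescoped recursion actually produces; neither difference affects correctness.
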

%The convergence rate of SVRG is known as linear under convexity and smoothness of unbiased function estimator, 
%which is equivalent with gradient descent.
%Observe that the convergence rate of Algorithm \ref{alg:svrg} is linear which is better than that of Algorithm \ref{alg:sgd}.
%Recently, \cite{garber2015fast, allen2016improved} show that the linear rate is still hold for the sum of non-convex objectives, 
%but smoothness condition is still essential.
The proof of the above theorem is given in the supplementary material, where
we utilize the recent analysis of SVRG for the sum of smooth non-convex objectives \cite{garber2015fast, allen2016improved}. % that assumes
The key additional component in our analysis
is to characterize $\beta>0$ in terms of $M, N$
so that the unbiased gradient estimator \eqref{eq:estder2} is $\beta$-smooth in expectation under the optimal degree distribution \eqref{eq:optdist}. % and $\beta=\beta(M,N)$.

%by characterizing $\beta$ in terms of $M, N$. 
%One can choose proper $M,N$ to guarantee fast convergence of Algorithm \ref{alg:svrg} 
%and we observe that it outperforms other methods for $M=100,N=15$ in our experiments.
%}
%so that Algorithm 
%sufficiently large $M$.% in a probabilistic sense. % with high probability.
%is not convex because (reweighted) Chebyshev series are not convex in general.
%We particularly observe that Algorithm \ref{alg:svrg} outperforms than other gradient descent methods in practice (see Section \ref{sec:exp}).%}
%However, they can be smooth functions for sufficiently large random samples, e.g., Rademacher random vectors, with high probability, 
%This comes from that any polynomial with finite degree is smooth so that if the trace estimator is accurate then our estimator guarantees smoothness.
%We provide the convergence analysis of SVRG of Algorithm \ref{alg:svrg} in below.

%\vspace{-0.1in}
\section{Applications} \label{sec:exp}
%\vspace{-0.05in}

In this section, 
%we first justify that the variance with our optimal degree distribution \eqref{eq:optdist} is smaller than other ones for simple functions.
%$f(x)=\sqrt{x}$ and $f(x) = \log(x)$, respectively.
%Next, 
we apply the proposed methods to two machine learning tasks: matrix completion and
%of spectral sums optimization \eqref{eq:specopt};
learning Gaussian processes. These
correspond to minimizing spectral-sums 
$\Sigma_f$ with $f(x)=x^{1/2}$ and $\log x$, respectively.
We evaluate our methods under real-world datasets for both experiments.
%and all experiments were run on a single threaded machine of Intel Core i7 @3.5GHz with 64GB memory.

\subsection{Matrix completion} % of interest
%For real-world application, 
%First, we apply our method to low-rank matrix recovery,
%which is also known as matrix completion.
The goal 
is to recover a low-rank matrix $\theta \in [0,5]^{d \times r}$ %for $d \ll d^\prime$
when a few of its entries 
are given.
%Matrix completion has been widely studied in various optimization formulations and  many efficient methods were developed for large-scale settings.
%In general, low-rank assumption has been adopted for minimizing entry-wise errors. 
Since the rank function is neither differentiable nor convex, 
its relaxation such as Schatten-$p$ norm has been used in respective optimization formulations. %applied.
\iffalse
In particular, we consider the smoothed nuclear norm (i.e., Schatten-$1$ norm) minimization \cite{lu2015smoothed,mohan2012iterative}
that %The optimization 
corresponds to $\Sigma_f(A)$ with $f(x) = x^{1/2}$ and $A(\theta) = \theta \theta^\top + \varepsilon I$ for some $\varepsilon > 0$.
\fi
In particular, we consider the smoothed nuclear norm (i.e., Schatten-$1$ norm) minimization \cite{lu2015smoothed,mohan2012iterative}
that %The optimization 
corresponds to
\begin{align*}
\min_{\theta \in [0,5]^{d \times r}} 
\mathtt{tr} ( A^{1/2}) + %\left( A right)^{\frac{p}{2}} + 
%\lambda \| \mathcal{P}_{\Omega}(\theta) - \mathcal{P}_{\Omega}(M)\|_F^2
\lambda \sum_{(i,j)\in \Omega} \left( \theta_{i,j} - R_{i,j}\right)^2
\end{align*}
where $A = \theta \theta^\top + \varepsilon I$,
$R \in [0,5]^{d \times r}$ is a given matrix with missing entries, 
$\Omega$ indicates the positions of known entries and
$\lambda$ is a weight parameter and $\varepsilon>0$ is a smoothing parameter.
\iffalse
that can be expressed as
%\begin{align} \label{eq:mc}
$\min_{\theta \in [0,5]^{d \times r}} 
\mathtt{tr} ( A^{1/2}) + %\left( A right)^{\frac{p}{2}} + 
%\lambda \| \mathcal{P}_{\Omega}(\theta) - \mathcal{P}_{\Omega}(M)\|_F^2
\lambda \sum_{(i,j)\in \Omega} \left( \theta_{i,j} - R_{i,j}\right)^2$
%\end{align}
where $A = \theta \theta^\top + \varepsilon^2 I$,
$R \in \mathbb{R}^{d \times r}$ is given matrix with missing entries, 
$\Omega$ indicates the positions of known entries and
$\lambda, \varepsilon>0$ are weight and smoothing parameters, respectively. % and $p>0$.
\fi
%For given known entry set $\Omega$, we set $\mathcal{P}_{\Omega}(M)_{i,j}$ to $M_{i,j}$
%if $(i,j) \in \Omega$ and zero otherwise.
%Here, $\mathcal{P}_{\Omega}(X)_{i,j}$ equals to $X_{i,j}$ if $(i,j) \in \Omega$ and set to $0$ otherwise.
\iffalse
There have been studied many efficient approaches for MC without spectral-sums
%One can apply other efficient MC solvers without using spectral sums
\cite{jin2016provable,cherapanamjeri2017nearly}.
However,
%Although  \cite{jin2016provable},
we demonstrate that %our method outperforms than other spectral-sums based methods
MC can be an toy problem of spectral-sums optimization 
and ours can be applicable to general spectral-sums optimization.
% Among many efficient approaches, we choose gradient-based spectral sums type
% optimization.
\fi
Observe that $\norm{A}_{\mathtt{mv}} = \norm{\theta}_{\mathtt{mv}} = O(d r)$,
%but the time-complexity using \eqref{eq:estder} is %intractable
%due to large dimension of $\theta$.Instead, 
and the derivative estimation in this case can be amortized to compute %as:
using $O(d M (N^2 + Nr))$ operations.
%since 
%$
%\nabla_\theta \Sigma_f(A) %\mathtt{tr}(A^{p/2}) %
%= 2 \nabla_A \mathtt{tr}(A^{1/2}) \theta.$
More details on this and our experimental settings are given in the supplementary material.
%\clearpage
%\footnote{The details of derivation is in the supplementary material.}
%the derivative estimation as :
%$\nabla_\theta \Sigma_f (A) = 2 \theta \ \nabla_A \Sigma_f(A)$ and amortize the derivative as
%In this case, derivative estimation in $\eqref{eq:estder}$ can be amortized as
% \begin{align*}
% \nabla_\theta \Sigma_f(A) %\mathtt{tr}(A^{p/2}) %
% = 2 \nabla_A \mathtt{tr}(A^{p/2}) \theta \approx \frac{2}{mT} \sum_{t=1}^T \sum_{k=1}^m \sum_{i=0}^{n_t-1}
% \left( 2 - \mathds{1}_{i=0}\right) \ww_i^{(t,k)}
% \left( \sum_{j=i}^{n_t-1} \frac{b_j}{1-\sum_{l=0}^{j-1}q^{*}_l} \yy_{j-i}^{(t,k)} \right)^\top\theta,
% \end{align*}
\iffalse
\begin{align*}
\nabla_\theta \Sigma_f(A) %\mathtt{tr}(A^{p/2}) %
= 2 \nabla_A \mathtt{tr}(A^{p/2}) \theta \approx \frac{2}{M} \sum_{k=1}^M \sum_{i=0}^{n}
\left( 2 - \mathds{1}_{i=0}\right) \ww_i^{(k)}
\left( \sum_{j=i}^{n-1} \frac{b_j}{1-\sum_{l=0}^{j-1}q^{*}_l} \yy_{j-i}^{(k)} \right)^\top\theta,
\end{align*}
where $\yy_{j}^{(k)} = 2 \ww_{j}^{(k)} + \yy_{j-2}^{(k)}, \yy_1^{(k)}=2A\vv^{(k)}$ and $\yy_0^{(k)}=\vv^{(k)}$.
% In addition to matrix vector multiplications, 
% it consists of the sum of outer products of two vectors, 
% and one of them is the sum of $O(n)$ vectors.
%Observe that it is needed 
%matrix-vector multiplications, vector-vector outer products $O(mTN)$ time each and vector-vector additions $O(mTN^2)$ time (in expectation). Hence, 
One can check that the time-complexity to obtain the above approximation is
$O(M (N^2 \ell + N\ell r))$. % and for small $m,N,T=O(1)$ it is reduced to $O(\ell r)$.
\fi

\begin{figure}[t]
%\vspace{-0.18in}
\centering
\subfigure[]{\includegraphics[width=0.24\textwidth]{./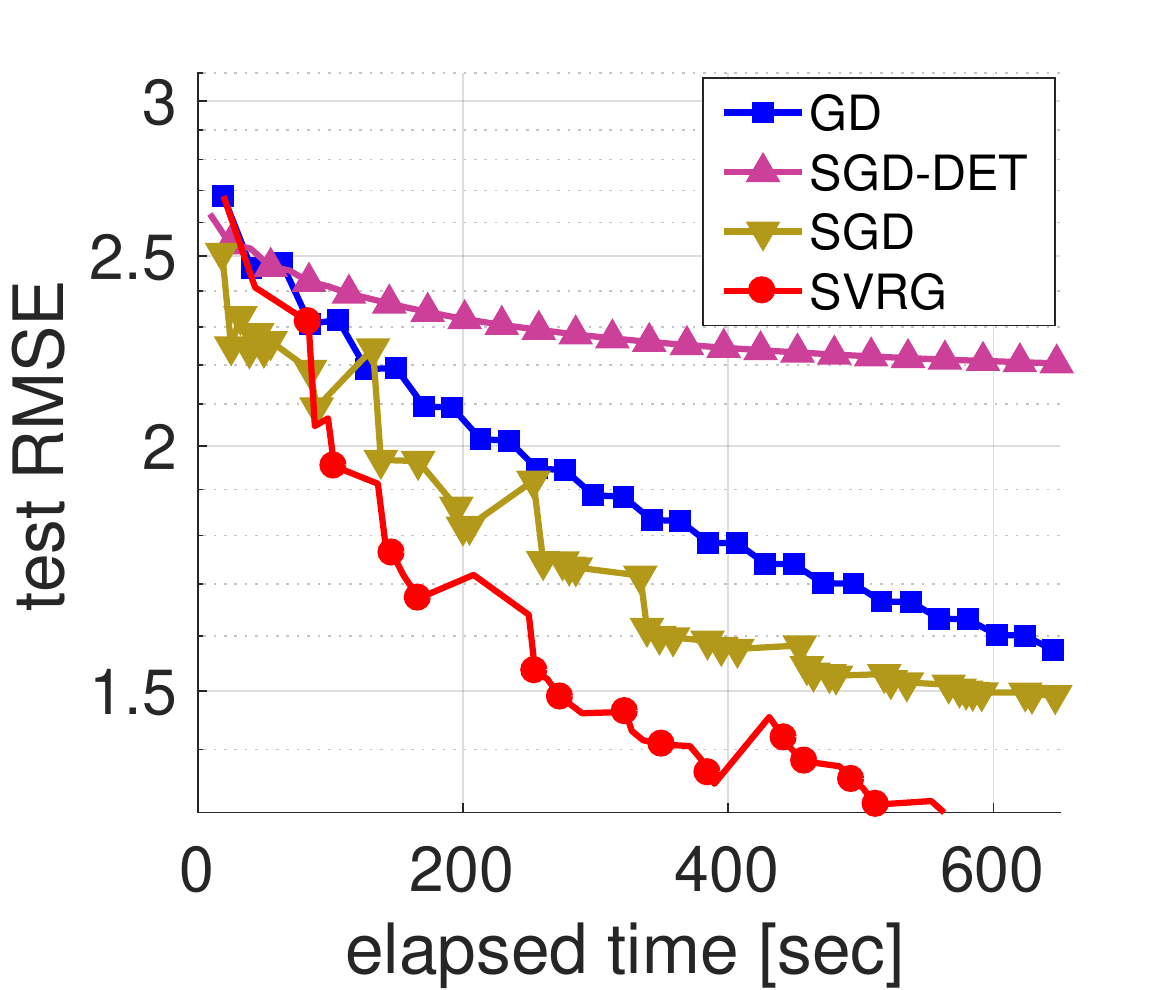} \label{fig:mc1}} \hspace{-0.1in}
\subfigure[]{\includegraphics[width=0.24\textwidth]{./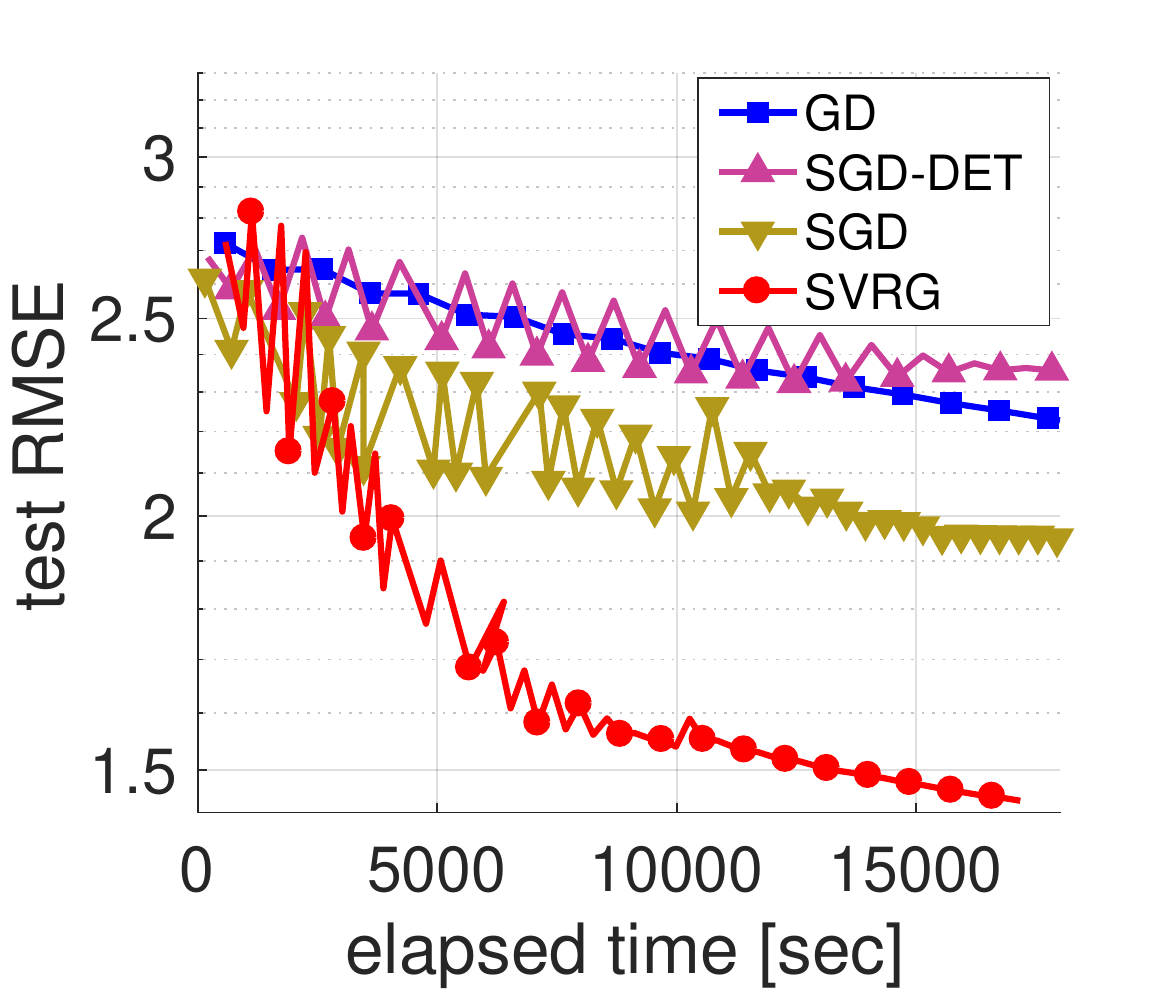}\label{fig:mc2}} \hspace{-0.1in}
\subfigure[]{\includegraphics[width=0.24\textwidth]{./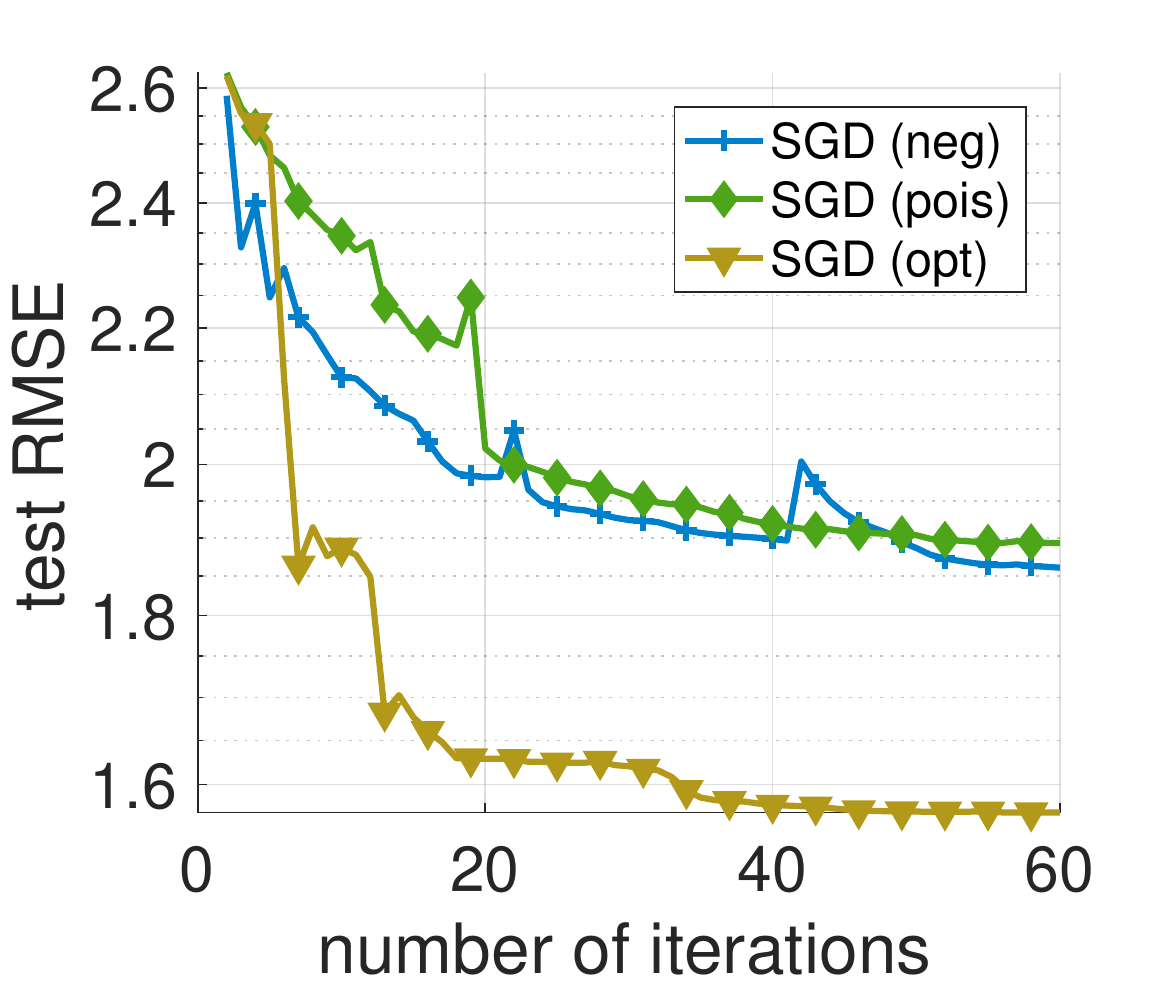}\label{fig:mc3}}
\subfigure[]{\includegraphics[width=0.24\textwidth]{./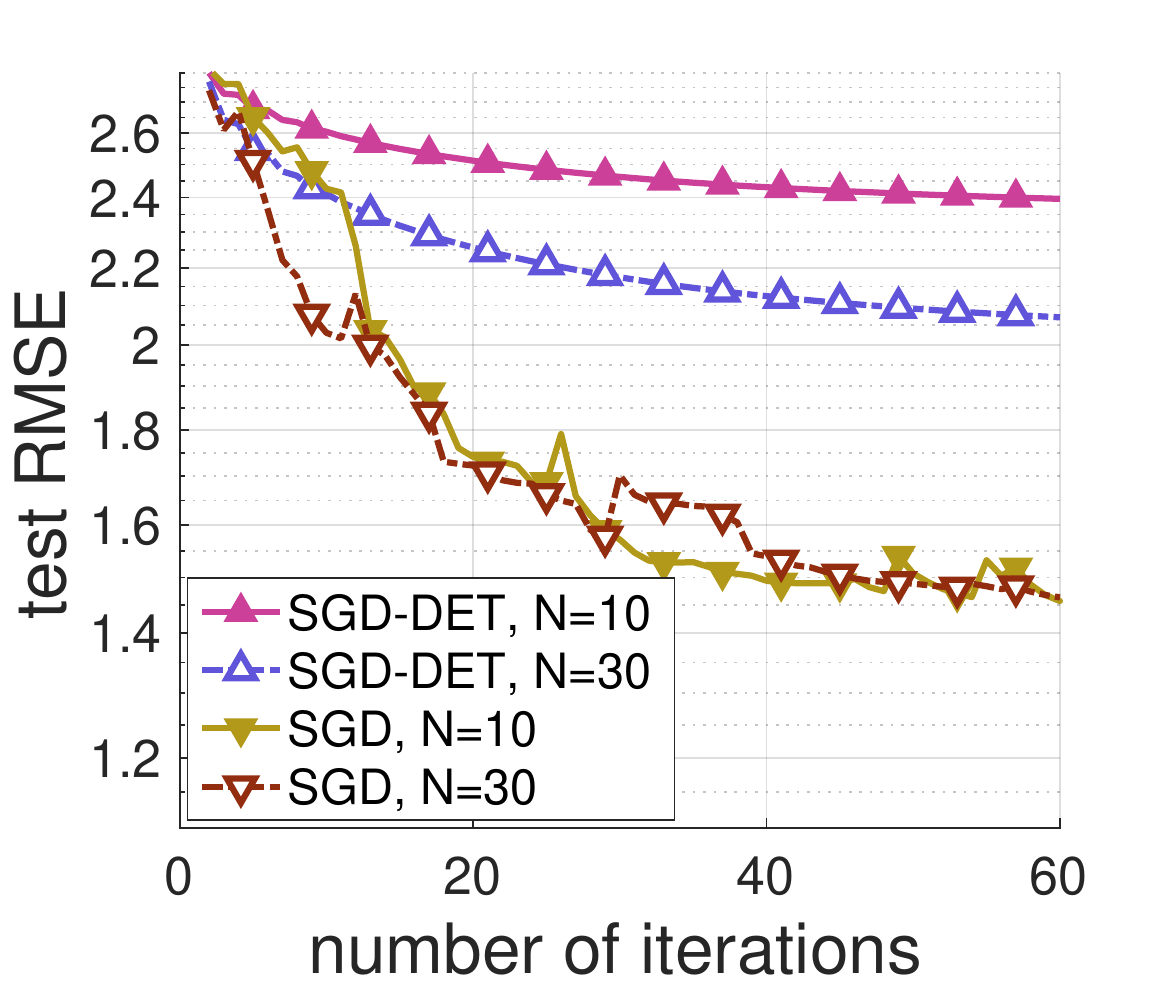}\label{fig:mc4}}
%\subfigure[]{\includegraphics[width=0.24\textwidth]{./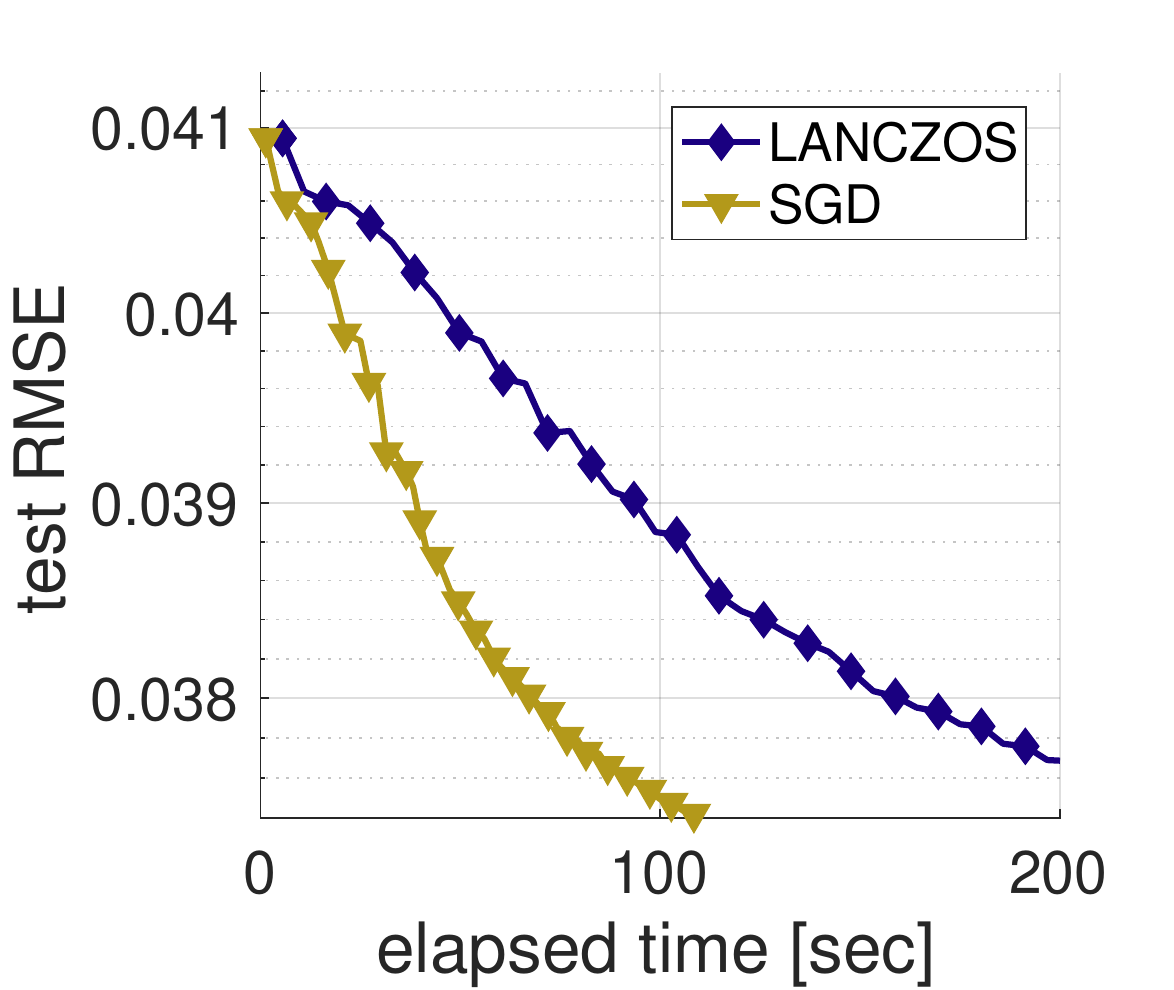}\label{fig:gp}}
\vspace{-0.1in}
\caption{Matrix completion results %via smoothed Schatten $p$-norm minimization
under (a) MovieLens 1M and (b) MovieLens 10M. 
%{SGD-DET} has biased error and worse performance than \textsc{SGD-DET} 
%that converges even faster than \textsc{GD}. {SVRG} is the fastest one.
%converges even faster than other methods. 
(c) Algorithm \ref{alg:sgd} (\textsc{SGD}) in MovieLens 1M under %various degree distributions. 
%The optimal distribution 
%leads to more stable convergence than 
other distributions such as negative binomial (neg) and Poisson (pois).
(d) \textsc{SGD} and \textsc{SGD-DET} 
under $N=10, 30$.
%a larger degree $N=30$ helps \textsc{SGD-DET} under MovieLens 1M, while 
%$N=10$ is already enough for \textsc{SGD} to reach the performance of $N=30$.
%the error achieves to the same regardless of the degree. % since it utilizes the unbiased estimator.
% (d)
% Hyperparameter learning for Gaussian process under natural sound data comparing \textsc{SGD-RND} 
% with a method for approximating spectral-sums based on Lanczos quadrature (\textsc{LANCZOS}). 
% % %The performance of ours is superior, e.g., converges $8$ times faster than others.
}
\vspace{-0.12in}
\end{figure}

\iffalse
{\bf Setups.} 
We use matrix $R$ from MovieLens 1M 
%(about $10^6$ integer ratings from $1$ to $5$ from $6,040$ users on $3,706$ movies)
and 10M 
%(about $10^7$ ratings from $0.5$ to $5$ with intervals $0.5$ from $10,677$ users on $71,567$ movies) 
datasets.\footnote{The MovieLens dataset is avaiable in \url{https://grouplens.org/datasets/movielens/}.} 
We randomly select $90\%$ of each dataset for training and use the rest for testing.
For solving \eqref{eq:mc}, we run
%following optimization methods:
the exact gradient descent (\textsc{GD}), % which is a baseline,
Algorithm \ref{alg:sgd} (\textsc{SGD-RND})
and Algorithm \ref{alg:svrg} (\textsc{SVRG-RND}).
We also consider
a variant of \textsc{SGD-RND} using a deterministic polynomial degree, referred as \textsc{SGD-DET}, %i.e.,
where it uses biased gradient estimations.
%biased stochastic gradient descent using deterministic Chebyshev expansion (\textsc{SGD-DET}), 
%that using randomized degree (\textsc{SGD-RND})
%and stochastic variance reduced gradients (\textsc{SVRG-RND}).
We choose $p=1$, the (mean) polynomial degree $N=15$.
and the number of trace random vectors $M=100, 200$ for \textsc{SVRG} and \textsc{SGD-\{RND,DET\}}, respectively, 
%in order to make the computational complexity comparable.
for comparable complexity at each gradient update.
Especially for \textsc{SVRG-RND}, we choose ${\mathtt{NInIter}}=100$ and the exact gradient is computed only once over all iterations.
%It is rarely sampled random degree with very large number which makes algorithm very slow. To resolve this issue, we truncate degrees larger than $50$.
%In addition, 
We decrease step-sizes exponentially with ratio $0.97$ (without using line search).
%This is because the gradient has large dimension ($d$-by-$d^\prime$) and step length is less effective in optimization.
After performing all gradient updates, we finally apply low-rank approximation using truncated SVD with rank $10$ once 
and measure the test root mean square error (RMSE). 
\fi

%We illustrate the results in Figure \ref{fig:mc}. 
%{\bf Results.}
We use the MovieLens 1M and 10M datasets \cite{movielens} (they correspond to $d=3,706$ and $10,677$, respectively)
and benchmark
%following optimization methods:
the gradient descent ({GD}), % which is a baseline,
Algorithm \ref{alg:sgd} ({SGD})
and Algorithm \ref{alg:svrg} ({SVRG}).
We also consider
a variant of {SGD} using a deterministic polynomial degree, referred as {SGD-DET}, %i.e.,
where it uses biased gradient estimators.
%Unless stated otherwise, we use $M=30$, $N=15$.
We report the results for MovieLens 1M in Figure \ref{fig:mc1} and 10M in \ref{fig:mc2}.
% Although 
For both datasets, {SGD-DET} performs badly due to
its biased gradient estimators.
%is highly biased and 
%converges with relatively large biased error.
%given similar sample complexity.
%We verify that the biased error of \textsc{SGD-DET} (i.e., gap with \textsc{GD}) 
%is reduced only with large polynomial degree, e.g., $n = 100$. 
%\textcolor{red}{(experiments will be added.)}
%becomes when polynomial degree is quite large.
On the other hand, {SGD} %randomized Chebyshev expansion 
converges much faster and outperforms {GD},
where {SGD} for 10M converges much slower than that for 1M 
due to the larger dimension 
%since the dimension of parameter 
$d^\prime = d r$ %is large 
(see Theorem \ref{thm:sgd}).
%}
%with much smaller bias error %when the average polynomial degree is 
%as we expected.
%which implies that %even if biased estimator approximates accurate, 
%the performance of biased estimator has poor quality in optimization.
%Since \textsc{SVRG-RND} is jointly optimized both bias and variance,
Observe that {SVRG} is the fastest one,
%we observe that it converges even faster than \textsc{GD}, 
e.g., compared to {GD},
about 2 times faster to achieve RMSE $1.5$ for MovieLens 1M and up to 6 times faster to achieve RMSE $1.8$ for MovieLens 10M as shown in Figure \ref{fig:mc2}.
The gap between {SVRG} and {GD} is expected to increase for larger datasets.
We also test {SGD} under other degree distributions: negative binomial (neg) and Poisson (pois) by choosing parameters so that their means equal to $N=15$. 
As reported in Figure \ref{fig:mc3},
%we verify that 
other distributions have relatively large variances so that 
they converge slower than the optimal distribution (opt).
%This also can be justified from Theorem \ref{thm:sgd}.
%they often estimate wrong gradient descents and rapidly increase the test error. 
%On the other hand, the proposed optimal distribution (opt) shows stable convergence.
{In Figure \ref{fig:mc4}, we compare {SGD-DET} with {SGD} of the optimal distribution under the (mean) polynomial degrees $N=10,30$. 
Observe that a larger degree ($N=30$) reduces the bias error in {SGD-DET},
while {SGD} achieves similar error regardless of the degree.
The above results confirm that 
the unbiased gradient estimation and our degree distribution \eqref{eq:optdist}  
are crucial for SGD.
%This is due to the unbiased nature of {SGD}.
%use an unbiased estimator so that there is no biased error 
%and degree only affects the variance in optimization.
%has no biased error, butthey only differ with the variance. This makes sense because SGD-RND has no bias error.
}

\subsection{\bf Learning for Gaussian process regression}
%\subsection{Hyper-parameter learning for Gaussian process}
Next, we apply our method to hyperparameter learning for Gaussian process (GP) regression.
%Given training dataset $\left\{\mathbf{x}_i \in \mathbb{R}^{d^\dagger}\right\}_{i=1}^{d}$ and their labels $\mathbf{y}\in \mathbb{R}^{d}$,
%the goal is to predict labels of unseen dataset.
%Specifically, 
\iffalse
GP defines a distribution over functions,
which follow multivariate Gaussian distribution with
%has a multivariate Gaussian distribution.
%GP is a stochastic process of random variables on $\mathbf{x}\in \mathbb{R}^{d^\dagger}$,
mean function $\mu_\theta:\mathbb{R}^{\ell}\rightarrow \mathbb{R}$ 
and covariance (i.e., kernel) function $a_\theta: \mathbb{R}^{\ell} \times \mathbb{R}^{\ell} \rightarrow \mathbb{R}$.
%, i.e., they are parameterized by hyperparameter $\theta$.
%For regression, we model GP by given 
\fi
%Given $d$ training data with corresponding outputs,
%the goal of GP regression is to learn a hyperparameter $\theta$ for predicting the output of a new/test input.
Given training data $\left\{\mathbf{x}_i \in \mathbb{R}^{\ell}\right\}_{i=1}^{d}$ with corresponding outputs $\mathbf{y}\in \mathbb{R}^{d}$,
the goal of GP regression is to learn a hyperparameter $\theta$ for predicting the output of a new/test input.
\iffalse
To this end, we set the kernel matrix $A=A(\theta) \in \mathcal S^{d \times d}$ of $\{\mathbf{x}_i\}_{i=1}^{d}$
such that $A_{i,j} = a_\theta\left(\mathbf{x}_i,\mathbf{x}_j\right)$. %and the mean function to be zero. 
\fi
%In particular, 
\iffalse
One can find a good hyperparameter by minimizing the non-convex negative log-marginal likelihood %with respect to $\theta$: %, that is,
which contains the log-determinant of a kernel matrix $A\in \SM$ constructed by the training data and $\theta$.
\fi
The hyperparameter $\theta$ constructs the kernel matrix $A(\theta) \in \SM$ of the training data $\{\mathbf{x}_i\}_{i=1}^d$ (see \cite{rasmussen2004gaussian}).
One can find a good hyperparameter by minimizing the negative log-marginal likelihood with respect to $\theta$: %, that is,
\begin{align*}
%\log p\left(\mathbf{y} | \{ \mathbf{x}_i\} \right) 
\mathcal{L}%\left(\theta | \mathbf{y}\right) 
&:= -\log p\left(\mathbf{y} | \{ \mathbf{x}_i\}_{i=1}^d \right) 
= \frac12 \mathbf{y}^\top A(\theta)^{-1}\mathbf{y} +\frac12 \log \det A(\theta) + \frac{n}{2}\log 2\pi.
\end{align*}
%and predict ${y} = \mathbf{a}^\top A^{-1} \mathbf{y}$
%where $\mathbf{a}_{i} = a_{\theta}(\mathbf{x}_i, \mathbf{x})$  (see \cite{rasmussen2004gaussian}).
%   where $A(\theta)$ is the kernel matrix constructed by $\{\mathbf{x}_i\}_{i=1}^d$ (see \cite{rasmussen2004gaussian}).
\iffalse
For handling large-scale datasets, we use the structured kernel interpolation framework proposed in \cite{wilson2015kernel} 
and it allows that $\theta = [\theta_i] \in \mathbb{R}^3$ and $\|A\|_{\mathtt{mv}} = \| \frac{\partial A}{\partial \theta_i}\|_{\mathtt{mv}} = O(d)$.
Therefore, the complexity for computing the gradient estimator \eqref{eq:estder2} becomes $O(MN d)$.
\fi
For handling large-scale datasets, \cite{wilson2015kernel} proposed the structured kernel interpolation framework 
assuming $\theta = [\theta_i] \in \mathbb{R}^3$ and
\begin{align*}
A(\theta) = W K W^\top + \theta_1^2 I, \quad
K_{i,j} = \theta_2^2 \exp \left( {\norm{\mathbf{x}_i - \mathbf{x}_j}_2^2}/{2 \theta_3^2}\right),
\end{align*}
where $W \in \mathbb{R}^{d \times r}$ is some sparse matrix
and $K \in \mathbb{R}^{r \times r}$ is a dense kernel with $r \ll d$.
%, $\theta = [\theta_i] \in \mathbb{R}^3$ is hyperparameter.
%for $d \gg d_1$
%and $\sigma^2$ is the independent Gaussian noise.
Specifically, in \cite{wilson2015kernel}, $r$ ``inducing'' points are selected 
and entries of $W$ are computed via interpolation with the inducing points.
Under the framework, matrix-vector multiplications with $A$ can be performed even faster, 
requiring $\norm{A}_{\mathtt{mv}} = \norm{W}_{\mathtt{mv}} + \norm{K}_{\mathtt{mv}} = O(d + r^2)$ operations.
%If we choose $r=O(1)$, then the complexity for computing gradient estimator becomes $O(d)$.
%In particular, 
%$K : \mathbb{R}^3 \rightarrow \mathbb{R}^{d_1 \times d_1}$
% In order to apply gradient descents, 
% the derivative of the second term in \eqref{eq:gpopt} can be approximated using efficient linear solver (e.g., conjugate gradient), 
% but evaluating the first term remains computational bottleneck.
% One can make the problem tractable by using proposed gradient estimator \eqref{eq:estder2} with $f=-\log$.
From $\|A\|_{\mathtt{mv}} = \| \frac{\partial A}{\partial \theta_i}\|_{\mathtt{mv}}$ and $d^\prime = 3$, 
the complexity for computing gradient estimation \eqref{eq:estder2} becomes $O(MN (d + r^2))$. 
If we choose $M,N,r=O(1)$, the complexity reduces to $O(d)$.
The more detailed problem description and our experimental settings are given in the supplementary material.

\begin{wrapfigure}[15]{r}{0.45\textwidth}
%\begin{figure}
\vspace{-0.2in}
%\centering
%\subfigure{\includegraphics[width=0.25\textwidth]{./fig_sound_nips.pdf}}
%\vspace{-0.13in}
\begin{center}
\hspace{-0.1in}
\subfigure[]{\includegraphics[width=0.24\textwidth]{./fig_sound_nips.pdf}} \hspace{-0.18in}
\subfigure[]{\includegraphics[width=0.24\textwidth]{./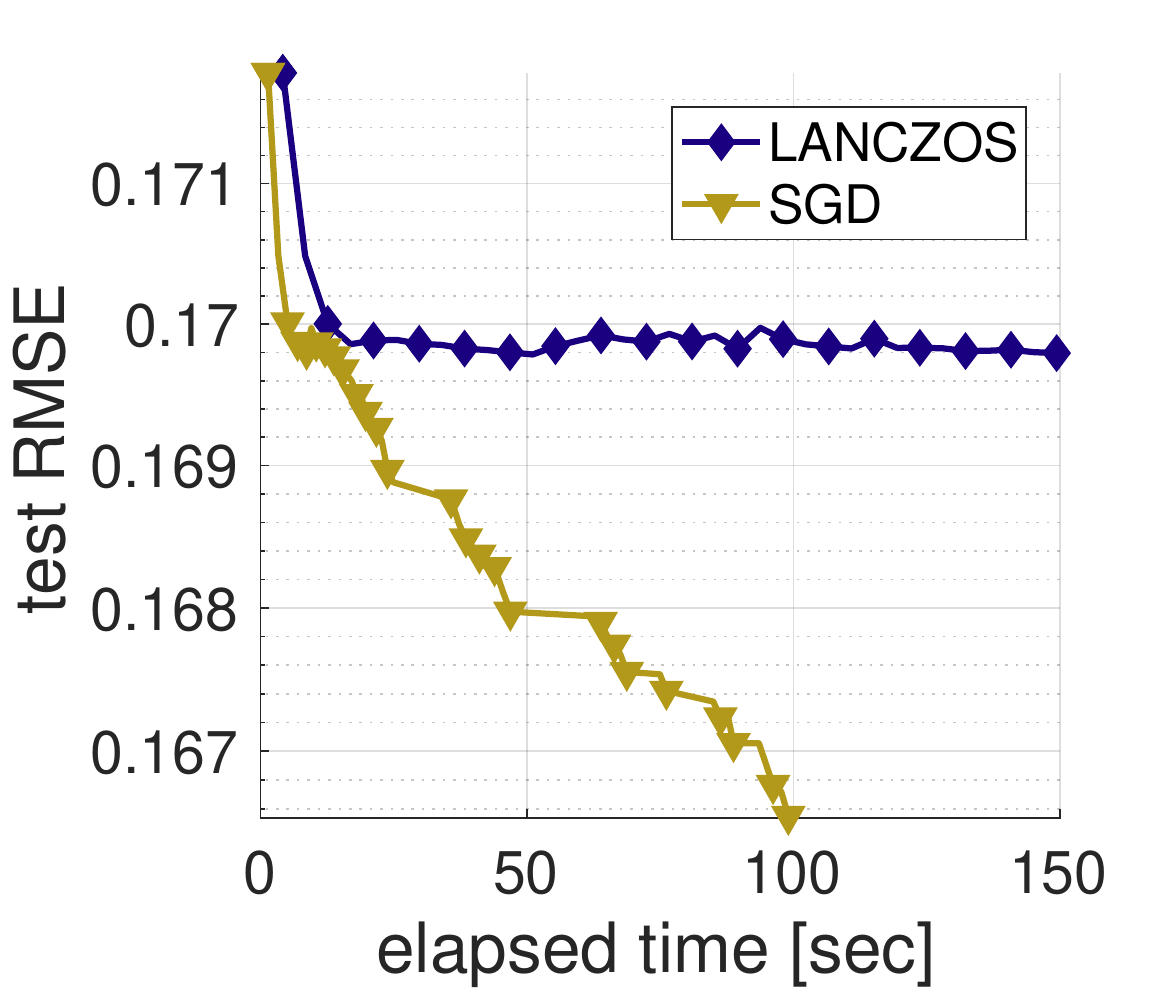}}
\end{center}
\vspace{-0.1in}
\caption{Hyperparameter learning for Gaussian process in modeling (a) sound dataset and (b) Szeged humid dataset 
comparing \textsc{SGD} to stochastic Lanczos quadrature (\textsc{LANCZOS}).
%We compare Algorithm \ref{alg:sgd} (\textsc{SGD-RND}) with %other methods based on deterministic Chebyshev expansion (\textsc{SGD-DET}) and 
%a method for approximating spectral sums based on Lanczos quadrature (\textsc{LANCZOS}). 
%The performance of ours is superior, e.g., converges $8$ times faster than others.
}\label{fig:gp}
%\end{figure}
\end{wrapfigure}
%{\bf Setups and results.}
We benchmark GP regression under natural sound dataset used in \cite{wilson2015kernel} and Szeged humid dataset \cite{szeged} 
where they correspond to $d=35,000$ and $16,930$, respectively.
%and randomly choose $35,000$ points for training and $691$ for testing.
% \begin{wrapfigure}[15]{r}{0.5\textwidth}
% %\begin{figure}
% \vspace{-0.28in}
% %\centering
% %\subfigure{\includegraphics[width=0.25\textwidth]{./fig_sound_nips.pdf}}
% %\vspace{-0.13in}
% \begin{center}
% \hspace{-0.15in}
% \subfigure[]{\includegraphics[width=0.24\textwidth]{./fig_sound_nips.pdf}} \hspace{-0.15in}
% \subfigure[]{\includegraphics[width=0.24\textwidth]{./fig_humid16K_nips.pdf}}
% \end{center}
% \vspace{-0.2in}
% \caption{Hyperparameter learning for Gaussian process in modeling (a) sound dataset and (b) Szeged humid dataset.
% %We compare Algorithm \ref{alg:sgd} (\textsc{SGD-RND}) with %other methods based on deterministic Chebyshev expansion (\textsc{SGD-DET}) and 
% %a method for approximating spectral sums based on Lanczos quadrature (\textsc{LANCZOS}). 
% %The performance of ours is superior, e.g., converges $8$ times faster than others.
% }\label{fig:gp}
% %\end{figure}
% \end{wrapfigure}
Recently, \cite{dong2017scalable} utilized 
an approximation to derivatives of log-determinant
based on
%deterministic Chebyshev expansion \eqref{eq:specopt} 
%and 
stochastic Lanczos quadrature \cite{ubaru2017fast} (LANCZOS). 
%which we refer as \textsc{LANCZOS}.
%respectively. 
%We refer them as \textsc{SGD-DET} and \textsc{LANCZOS}
%for this task (we refer this as \textsc{SGD-DET}).
%In addition, they also introduced %compared spectral sums estimator with Chebyshey expansion with 
%One corresponds to deterministic Chebyshev expansion as discussed in \autoref{sec:prelim} 
%and the other utilizes stochastic Lanczos quadrature \cite{ubaru2017fast}.
%$\Sigma_f$ approximation with stochastic Lanczos quadrature \cite{ubaru2017fast}.
%Both methods use matrix-vector multiplications 
%Their methods have same computational complexity 
%with ours, but theirs are biased estimator.
We compare it with Algorithm \ref{alg:sgd} ({SGD})
which utilizes with unbiased gradient estimators
%and its version using a fixed polynomial degree, say \textsc{SGD-DET}.
%We did not run Algorithm \ref{alg:svrg} since
while SVRG requires the exact gradient computation at least once
which is intractable to run in these cases. % setting.
%Note that only \textsc{SGD-RND} use unbiased gradient estimations.
%where they have the same time-complexity.
%However, they are biased estimator and 
%approximation errors are piled up so that
%the convergence might be slower.
%This is because SVRG-RND requires the exact gradient computation at least once, 
%but it is intractable in our setting.
%Instead, we apply line search by sharing randomness of the estimator.
\iffalse
We set $M=30, N=15$ and $r = 3000$ for all benchmark algorithms, 
% and the number of trace vectors at each gradient iteration is $30$ for all algorithms, i.e.,
% %\textsc{SGD-DET} and \textsc{LANCZOS} are $30$, 
% for \textsc{SGD-RND}, we use $m=3$ vectors for each random degree.
hence, they have the same time-complexity.
\fi
%to compute a single gradient estimation.
%We also select $r = 3000$ induced points for kernel interpolation.
% and verify that small number of probing vectors is enough for optimization.
% because the parameter space is much smaller than MC experiments.
% We compare our algorithm \textsc{SGD-RND} with \textsc{SGD-DET} and stochastic Lanczos quadrature (\textsc{LANCZOS}) in Figure \ref{fig:gp}.
%As a result, 
%{\bf Results.}
As reported in Figure \ref{fig:gp},
%\textsc{SGD-DET} and \textsc{LANCZOS} are comparable  
%with respect to convergence of test RMSE. 
%On the other hand,
{SGD} converges faster than {LANCZOS} for both datasets 
%achieves much smaller error with fast convergence and 
and it runs $2$ times faster to achieve RMSE $0.0375$ under sound dataset 
%and $5.6$ times faster for RMSE $0.147$ under humid dataset.
%This justifies that the biased error is critical for optimizing hyperparameters in Gaussian process.
and under humid dataset {LANCZOS} can be often stuck at a local optimum, 
while SGD avoids it due to the use of unbiased gradient estimators.

%\vspace{-0.08in}
\section{Conclusion}
%\vspace{-0.08in}
We proposed an optimal variance unbiased estimator for spectral-sums and their gradients.
We applied our estimator in the SGD and SVRG frameworks, and analyzed convergence.
The proposed optimal degree distribution is a crucial component of the analysis. 
We believe that the proposed stochastic methods are of broader interest in many machine learning tasks
involving spectral-sums.

\subsection*{Acknowledgement}
This work was supported by the National Research Foundation of Korea(NRF) grant funded by the Korea government(MSIT) (2018R1A5A1059921). Haim Avron acknowledges the support of the Israel Science Foundation (grant no. 1272/17).

\bibliography{biblist}
\bibliographystyle{icml2018}

\clearpage
%\setcounter{page}{1}
%%\setcounter{linenumber}{1}
% \title{Stochastic Chebyshev Gradient Descent %Optimal Unbiased Chebyshev Expansions
% \\ for Optimizing Large-scale Matrices}
\title{
% Stochastic Chebyshev Gradient Descent %Optimal Unbiased Chebyshev Expansions
% \\ for Optimizing Large-scale Matrices
Stochastic Chebyshev Gradient Descent \\
for Spectral Optimization
}

\appendix

\section{Details of experiments}
\subsection{Matrix completion}
For matrix completion, 
% we use datasets from MovieLens 1M (about $10^6$ integer ratings from $1$ to $5$ from $6,040$ users on $3,706$ movies)
% and 10M (about $10^7$ ratings from $0.5$ to $5$ with intervals $0.5$ from $10,677$ users on $71,567$ movies).
the problem can be expressed via the convex smoothed nuclear norm minimization as 
\begin{align} \label{eq:mc}
\min_{\theta \in [0,5]^{d \times r}} 
\mathtt{tr} ( A^{1/2}) + %\left( A right)^{\frac{p}{2}} + 
%\lambda \| \mathcal{P}_{\Omega}(\theta) - \mathcal{P}_{\Omega}(M)\|_F^2
\lambda \sum_{(i,j)\in \Omega} \left( \theta_{i,j} - R_{i,j}\right)^2,
\end{align}
where $A = \theta \theta^\top + \varepsilon I$,
$R \in [0,5]^{d \times r}$ is a given matrix with missing entries, 
$\Omega$ indicates the positions of known entries and
$\lambda$ is a weight parameter and $\varepsilon>0$ is a smoothing parameter. % and $p>0$.
In this case, the gradient estimator $\eqref{eq:estder2}$ can be amortized as
\begin{align} \label{eq:amort}
\nabla_\theta \mathtt{tr}(A^{p/2}) %\Sigma_f(A) %\mathtt{tr}(A^{p/2}) %
&\approx
\frac{2}{M} \sum_{k=1}^M \sum_{i=0}^{n-1}
\left( 2 - \mathds{1}_{i=0}\right) \ww_{i}^{(k)}
\left( \sum_{j=i}^{n-1} \frac{b_{j+1}}{1-\sum_{\ell=0}^{j}q^{*}_\ell} \yy_{j-i}^{(k)} \right)^\top\theta
\end{align}
where 
%\begin{align*}
%\ww_{j+1}^{(k)} = 2\ww_{j}^{(k)} - \ww_{j-1}^{(k)}, \ \  \ww_1^{(k)} = \widetilde{A} \vv, \ \ \ww_0^{(k)} = \vv^{(k)},
%\end{align*}
%(which comes from the recursive relation of Chebyshev polynomials),
%$\yy_{j}^{(k)} = 2 \ww_{j}^{(k)} + \yy_{j-2}^{(k)}, \yy_1^{(k)}=2\widetilde{A}\vv^{(k)}, \ \yy_0^{(k)}=\vv^{(k)}$
\begin{align*}
\ww_{j+1}^{(k)} &= 2\ww_{j}^{(k)} - \ww_{j-1}^{(k)},\quad \ww_1^{(k)} = \widetilde{A} \vv, \quad \ww_0^{(k)} = \vv^{(k)},\\
\yy_{j+1}^{(k)} &= 2 \ww_{j+1}^{(k)} + \yy_{j-1}^{(k)},\quad \yy_1^{(k)} = 2\widetilde{A}\vv^{(k)}, \quad \yy_0^{(k)}=\vv^{(k)}
\end{align*}
and $\widetilde{A} = \left(\frac{2}{b-a}A -\frac{b+a}{b-a}I \right)$ 
for the lower/upper bound on $A$'s eigenvalues $a,b \in \mathbb{R}^+$.
This comes from the following lemma, whose proof is in Section \ref{sec:lmms}.
\begin{lemma} \label{lmm:amort}
Suppose $f$ is an analytic function and $p_n(x):= \sum_{j=0}^n b_j T_j(x)$ is its truncated Chebyshev series of degree $n\geq 1$ for $x \in [-1,1]$. Let $A = \theta \theta^\top + \varepsilon I $ for $\theta \in \mathbb{R}^{d \times r}, \varepsilon > 0$ such that all eigenvalues of $A$ are in $[-1,1]$. Then, for any $\vv \in \mathbb{R}^d$, it holds that
\begin{align*}
\nabla_\theta 
%\frac{\partial}{\partial \theta}
\vv^\top p_n(A) \vv 
= 2 \sum_{i=0}^{n-1} \left(2 - \mathds{1}_{i=0}\right) \ww_i \Bigg(\sum_{j=i}^{n-1} b_{j+1} \yy_{j-i}\Bigg)^\top \theta,
\end{align*}
where $\ww_{j+1} = 2A\ww_j - \ww_{j-1}, \ww_1 = A \vv, \ww_0 = \vv$  and  $\yy_{j+1} = 2\ww_{j+1} + \yy_{j-1}, \yy_1 = 2A \vv, \yy_0 = \vv$.
\end{lemma}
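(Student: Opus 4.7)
The plan is to reduce the lemma, via the chain rule and linearity in the $b_j$'s, to a matrix-gradient identity for a single Chebyshev polynomial, and then prove that identity by a compact generating-function calculation.

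Since $A=\theta\theta^\top+\varepsilon I$, we have $\partial A/\partial\theta_{k,\ell}=e_k\theta_\ell^\top+\theta_\ell e_k^\top$, and for any polynomial $f$ the chain rule yields $\nabla_\theta\vv^\top f(A)\vv = 2\,(\nabla_A\vv^\top f(A)\vv)\,\theta$; the factor $2$ emerges because $\nabla_A\vv^\top A^k\vv=\sum_{s=0}^{k-1}(A^s\vv)(A^{k-1-s}\vv)^\top$ is symmetric under $s\leftrightarrow k-1-s$, so the two cross-terms in the chain rule merge. Expanding $p_n=\sum_{m=0}^n b_mT_m$ and rearranging the double sum with the renaming $k=i+j$, it suffices to establish, for each $m\ge 1$,
\[
\nabla_A\vv^\top T_m(A)\vv \;=\; \sum_{\substack{i+j=m-1\\ i,j\ge 0}}(2-\mathds{1}_{i=0})\,\ww_i\,\yy_j^\top.
\]

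For this identity I first recognize $\yy_j=U_j(A)\vv$ with $U_j$ the Chebyshev polynomial of the second kind: using the classical relation $2T_j(x)=U_j(x)-U_{j-2}(x)$, the paper's recursion $\yy_{j+1}=2\ww_{j+1}+\yy_{j-1}$ with $\yy_0=\vv$, $\yy_1=2A\vv$ is exactly the three-term recurrence $U_{j+1}=2xU_j-U_{j-1}$ applied at $A$ to $\vv$. Setting $M(z):=I-2Az+z^2I$, for $|z|$ small enough (so that the Neumann series for $M(z)^{-1}$ converges in operator norm, which holds since $\sigma(A)\subseteq[-1,1]$) the standard generating-function identities read
\[
W(z):=\sum_{m\ge 0}\ww_m z^m=(I-Az)M(z)^{-1}\vv,\qquad Y(z):=\sum_{m\ge 0}\yy_m z^m=M(z)^{-1}\vv,
\]
together with the key relation $2W(z)-\vv=(1-z^2)Y(z)$, obtained by direct algebra on $M(z)$. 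Differentiating $G(z):=(I-Az)M(z)^{-1}=\sum_m T_m(A)z^m$ in a symmetric direction $\delta A$, using $D(M^{-1})[\delta M]=-M^{-1}(\delta M)M^{-1}$ and the mutual commutativity of all matrix polynomials in $A$, one obtains
\[
\vv^\top DG(z)[\delta A]\,\vv = z\,(2W(z)-\vv)^\top(\delta A)\,Y(z) = z(1-z^2)\,Y(z)^\top(\delta A)\,Y(z).
\]
Read as an identity of matrix-valued power series in $z$, this is $\sum_{m\ge 1} z^m\,\nabla_A\vv^\top T_m(A)\vv=z\,(2W(z)-\vv)\,Y(z)^\top$; expanding the right-hand side by Cauchy product produces exactly $\sum_{m\ge 1}z^m\sum_{i+j=m-1}(2-\mathds{1}_{i=0})\,\ww_i\,\yy_j^\top$, and matching coefficients of $z^m$ finishes the proof.

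The main obstacle will be the generating-function step: one must pick $|z|$ small enough for $M(z)^{-1}$ to admit a norm-convergent series, differentiate the rational matrix function entrywise, and translate the scalar Frechet-derivative identity into a matrix identity by pairing against an arbitrary symmetric $\delta A$. Everything else---the chain-rule reduction and the final index bookkeeping---is routine; an induction on $m$ carrying the auxiliary quantities $\ww_\ell^\top DT_m(A)[\delta A]\vv$ is available as a fallback if one prefers to avoid infinite series.
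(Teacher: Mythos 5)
Your proposal is correct, and it reaches the same key intermediate identity as the paper, namely $\nabla_A \vv^\top T_m(A)\vv=\sum_{i+j=m-1}\left(2-\mathds{1}_{i=0}\right)\ww_i\yy_j^\top$, but by a genuinely different route. The paper works directly with the parametric derivatives: it differentiates the three-term recurrence $\ww_{j+1}=2A\ww_j-\ww_{j-1}$ and proves by induction on $j$ that $\vv^\top \frac{\partial \ww_{j+1}}{\partial \theta_i}=\sum_{k=0}^{j}\left(2-\mathds{1}_{k=0}\right)\ww_k^\top \frac{\partial A}{\partial \theta_i}\yy_{j-k}$, then specializes $\frac{\partial A}{\partial \theta_{\ell,m}}=\mathbf{e}_\ell\theta_{:,m}^\top+\theta_{:,m}\mathbf{e}_\ell^\top$ and verifies the symmetry $\sum_k\left(2-\mathds{1}_{k=0}\right)\ww_k\yy_{j-k}^\top=\sum_k\left(2-\mathds{1}_{k=0}\right)\yy_{j-k}\ww_k^\top$ by a separate computation using $2\ww_j=\yy_j-\yy_{j-2}$ in order to collapse the two chain-rule terms into the factor $2$. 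You instead first isolate the Fr\'echet/matrix gradient via the chain rule and then obtain the per-degree identity by differentiating the Chebyshev generating functions $W(z)=(I-Az)M(z)^{-1}\vv$ and $Y(z)=M(z)^{-1}\vv$; your calculation is accurate (I checked $DG(z)[\delta A]=-z\,\delta A\,M^{-1}+2z(I-Az)M^{-1}\delta A\,M^{-1}$, the Cauchy-product expansion, and the reindexing to the lemma's double sum), and it has the pleasant side effect that the symmetry of the gradient matrix is immediate from $(2W(z)-\vv)Y(z)^\top=(1-z^2)Y(z)Y(z)^\top$, so the factor $2$ comes for free rather than requiring the paper's auxiliary identity. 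The only points needing care are the ones you already flag: justifying invertibility of $M(z)$ and the interchange of the $A$-derivative with the infinite sum for small $|z|$ (or, equivalently, working with formal power series, since all coefficient identities are determined by the recurrences); the induction you mention as a fallback is essentially the paper's own argument.
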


Observe that $\norm{A}_{\mathtt{mv}} = \norm{\theta}_{\mathtt{mv}} = O(d r)$,
%but the time-complexity using \eqref{eq:estder} is %intractable
%due to large dimension of $\theta$.Instead, 
and the computation for \eqref{eq:amort} can be amortized %as:
using $O(M (n^2 d + n d r))$ operations. For $M,n,r = O(1)$, the complexity reduces to $O(d)$.% since 
%\footnote{The details of derivation is in the supplementary material.}
%the derivative estimation as :
%$\nabla_\theta \Sigma_f (A) = 2 \theta \ \nabla_A \Sigma_f(A)$ and amortize the derivative as

%$q_\ell^*$ is the proposed optimal degree distribution \eqref{eq:optdist}, 
%which comes from that $\nabla_\theta \Sigma_f(A(\theta)) = 2 \nabla_A \Sigma_f(A) \theta$.
After update the parameter $\theta$ in a direction of gradient estimator, 
we project $\theta$ onto $[0,5]^{d \times r}$, that is,
\begin{align*}
\proj{\theta_{i,j}} = 
\begin{dcases}
\theta_{i,j}, \quad &\text{ if  } \theta_{i,j} \in [0,5], \\
0, \quad &\text{ if  } \theta_{i,j} < 0, \\
5, \quad &\text{ otherwise.}
\end{dcases}
\end{align*}
In addition, after performing all gradient updates, we finally apply low-rank approximation using truncated SVD with rank $10$ once and measure the test root mean square error (RMSE). 

\iffalse
%For given known entry set $\Omega$, we set $\mathcal{P}_{\Omega}(M)_{i,j}$ to $M_{i,j}$
%if $(i,j) \in \Omega$ and zero otherwise.
%Here, $\mathcal{P}_{\Omega}(X)_{i,j}$ equals to $X_{i,j}$ if $(i,j) \in \Omega$ and set to $0$ otherwise.
%There have been studied many efficient approaches for MC without spectral-sums
%One can apply other efficient MC solvers without using spectral sums
%\cite{jin2016provable,cherapanamjeri2017nearly}.
However,
%Although  \cite{jin2016provable},
we demonstrate that %our method outperforms than other spectral-sums based methods
MC can be an toy problem of spectral-sums optimization 
and ours can be applicable to general spectral-sums optimization.
% Among many efficient approaches, we choose gradient-based spectral sums type
% optimization.
\fi

\textbf{Setup.}
We use matrix $R$ from MovieLens 1M (about $10^6$ integer ratings from $1$ to $5$ from $6,040$ users on $3,706$ movies)
and 10M (about $10^7$ ratings from $0.5$ to $5$ with intervals $0.5$ from $10,677$ users on $71,567$ movies) 
datasets \cite{movielens}.%\footnote{The MovieLens dataset is avaiable in \url{https://grouplens.org/datasets/movielens/}.} 
We randomly select $90\%$ of each dataset for training and use the rest for testing.
We choose the (mean) polynomial degree $N=15$ and the number of trace random vectors $M=100$ for {SVRG} 
and $M = 200$ for {SGD-DET, SGD}, respectively, 
%in order to make the computational complexity comparable.
for comparable complexity at each gradient update.
Especially, for \textsc{SVRG}, we choose $T=100$.% and the exact gradient. %is computed only once over all iterations.
%It is rarely sampled random degree with very large number which makes algorithm very slow. To resolve this issue, we truncate degrees larger than $50$.
%In addition, 
We decrease step-sizes exponentially with ratio $0.97$ over the iterations.

\subsection{Gaussian process (GP) regression}
Given training data $\left\{\mathbf{x}_i\in \mathbb{R}^\ell\right\}_{i=1}^{d}$ with corresponding outputs $\mathbf{y}\in \mathbb{R}^{d}$,
the goal of GP regression is to learn a hyperparameter $\theta$ for predicting the output of a new/test input.
GP defines a distribution over functions,
which follow multivariate Gaussian distribution with
mean function $\mu_\theta:\mathbb{R}^{\ell}\rightarrow \mathbb{R}$ 
and covariance (i.e., kernel) function $a_\theta: \mathbb{R}^{\ell} \times \mathbb{R}^{\ell} \rightarrow \mathbb{R}$.
%, i.e., they are parameterized by hyperparameter $\theta$.
To this end, we set the kernel matrix $A=A(\theta) \in \mathcal S^{d \times d}$ of $\{\mathbf{x}_i\}_{i=1}^{d}$
such that $A_{i,j} = a_\theta\left(\mathbf{x}_i,\mathbf{x}_j\right)$ and the mean function to be zero. 
One can find a good hyperparameter by minimizing the negative log-marginal likelihood with respect to $\theta$: %, that is,
\begin{align} \label{eq:gplik}
%\log p\left(\mathbf{y} | \{ \mathbf{x}_i\} \right) 
\mathcal{L}%\left(\theta | \mathbf{y}\right) 
&:= -\log p\left(\mathbf{y} | \{ \mathbf{x}_i\}_{i=1}^d \right) 
= \frac12 \mathbf{y}^\top A^{-1}\mathbf{y} +\frac12 \log \det A + \frac{n}{2}\log 2\pi,
\end{align}
and predict ${y} = \mathbf{a}^\top A^{-1} \mathbf{y}$
where $\mathbf{a}_{i} = a_{\theta}(\mathbf{x}_i, \mathbf{x})$  (see \cite{rasmussen2004gaussian}).
Gradient-based methods can be used for optimizing \eqref{eq:gplik} using its partial derivatives:
\begin{align*}
\frac{\partial \mathcal{L}}{\partial \theta_i}
= -\frac12 \left( \mathbf{y}^\top \frac{\partial A}{\partial \theta_i}\right) A^{-1} \left( \frac{\partial A}{\partial \theta_i} \mathbf{y}\right)-\frac12 
\frac{\partial\log \det A}{\partial \theta_i}.
%\tr{A^{-1} \frac{\partial A}{\partial \theta_i}}.
\end{align*}
Observe that the first term can be computed by an efficient linear solver, e.g., conjugate gradient descents \cite{saad2003iterative},
while the second term is computationally expensive for large $d$.
Hence, one can use our proposed gradient estimator \eqref{eq:estder2} for $\Sigma_f(A)$ with $f(x)=\log x$.

For handling large-scale datasets, \cite{wilson2015kernel} proposed the structured kernel interpolation framework 
assuming $\theta = [\theta_i] \in \mathbb{R}^3$ and
\begin{align*}
A(\theta) = W K W^\top + \theta_1^2 I, 
K_{i,j} = \theta_2^2 \exp \left( {\norm{\mathbf{x}_i - \mathbf{x}_j}_2^2}/{2 \theta_3^2}\right),
\end{align*}
where $W \in \mathbb{R}^{d \times r}$ is some sparse matrix
and $K \in \mathbb{R}^{r \times r}$ is a dense kernel with $r \ll d$.
%, $\theta = [\theta_i] \in \mathbb{R}^3$ is hyperparameter.
%for $d \gg d_1$
%and $\sigma^2$ is the independent Gaussian noise.
Specifically, the authors select $r$ ``inducing'' points
and compute entries of $W$ via interpolation with the inducing points.
Under the framework, matrix-vector multiplications with $A$ can be performed even faster, 
requiring $\norm{A}_{\mathtt{mv}} = \norm{W}_{\mathtt{mv}} + \norm{K}_{\mathtt{mv}} = O(d + r^2)$ operations.
%If we choose $r=O(1)$, then the complexity for computing gradient estimator becomes $O(d)$.
%In particular, 
%$K : \mathbb{R}^3 \rightarrow \mathbb{R}^{d_1 \times d_1}$
% In order to apply gradient descents, 
% the derivative of the second term in \eqref{eq:gpopt} can be approximated using efficient linear solver (e.g., conjugate gradient), 
% but evaluating the first term remains computational bottleneck.
% One can make the problem tractable by using proposed gradient estimator \eqref{eq:estder2} with $f=-\log$.
From 
$\|A\|_{\mathtt{mv}} = \| \frac{\partial A}{\partial \theta_i}\|_{\mathtt{mv}}$ and $d^\prime = 3$, 
the complexity for computing gradient estimation \eqref{eq:estder2}
becomes $O(MN (d + r^2))$. % in expectation.
If we choose $M,N,r=O(1)$, the complexity reduces to $O(d)$.
%where we recall that $m$ is the number of Rademacher vectors,
%$N$ is the expected polynomial degree and $T$ is the number of degree samples.

\textbf{Setup.} We benchmark GP regression under natural sound dataset used in \cite{wilson2015kernel, dong2017scalable} and 
Szeged humid data \cite{szeged}.
We randomly choose $35,000$ points for training and $691$ for testing in sound dataset
and choose $16,930$ points for training and $614$ points for test in Szeged 2015-2016 humid dataset.
We set the polynomial degree $N=15$ and $M=30$ trace vectors for all algorithms.
We also select $r = 3000$ induced points for kernel interpolation.
Since GP regression is non-convex problem, the gradient descent methods are sensitive to the initial point.
We select a good initial point using random grid search. 
We observe that our algorithm (SGD) utilizing unbiased gradient estimator performs well for any initial point.
On the other hand, since LANCZOS is type of biased gradient descent methods, 
it is often stuck on a bad local optimum.

\section{Proof of theorems}
\subsection{Smoothness and strong convexity of matrix functions}
We first provide the formal definitions of the assumptions in Section \ref{sec:main2}.
Let $\C\subseteq \mathbb{R}^{d^\prime}$ be a non-empty, closed convex domain 
and $h : \mathbb{R}^{d^\prime} \rightarrow \mathbb{R}$ be a continuously differentiable function.
\begin{definition}
A function $h$ is $L$-Lipschitz continuous (or $L$-Lipschitz) on $\C$ if for all $\theta, \theta^\prime \in \C$, 
there exists a constant $L > 0$ such that
\begin{align*}
\abs{h(\theta) - h(\theta^\prime)} \leq L \normt{\theta - \theta^\prime}.
\end{align*}
\end{definition}
\begin{definition}
A function $h$ is $\beta$-smooth on $\C$ if its gradient is $\beta$-Lipschitz such that
%for all $\theta, \theta^\prime \in \C$, there exists a constant $L > 0$ such that
\begin{align*}
\normt{\nabla h(\theta) - \nabla h(\theta^\prime)} \leq \beta \normt{\theta - \theta^\prime}.
\end{align*}
\end{definition}
\begin{definition} 
A function $h$ is $\alpha$-strongly convex on $\C$ if for all $\theta, \theta^\prime \in \C$, 
there exists a constant $\alpha > 0$ such that
\begin{align*}
\inner{ \nabla h(\theta) - \nabla h(\theta^\prime), \theta - \theta^\prime } \geq \alpha \normt{\theta - \theta^\prime}^2.
\end{align*}
\end{definition}

The above definition can be extended to functions map into matrix space.
For example, suppose $A : \mathbb{R}^{d^\prime} \rightarrow \mathbb{R}^{d \times d}$ is a function of 
$\theta \in \C$ and assume that all $\partial A_{j,k} / \partial \theta_i$ 's exist and are continuous. 

\begin{definition}
A function $A(\theta)$ is $L_A$-Lipschitz with respect to $\normf{\cdot}$ 
if for all $\theta, \theta^\prime \in \C$, there exists a constant $L_A > 0$ such that
\begin{align*}
\normf{  A(\theta)- A(\theta^\prime)} \leq L_A \normt{\theta - \theta^\prime}.
\end{align*}
Similarly, $A(\theta)$ is $L_{\mathtt{nuc}}$-Lipschitz with respect to $\normnuc{\cdot}$ (matrix nuclear norm)
there exists a constant $L_{\mathtt{nuc}} > 0$ such that
\begin{align*}
\normnuc{  A(\theta)- A(\theta^\prime)} \leq L_{\mathtt{nuc}} \normt{\theta - \theta^\prime}.
\end{align*}
\end{definition}
%In addition, the smoothness is also extended to such functions. 
\begin{definition}
Let $A : \mathbb{R^{d^\prime}} \rightarrow \SM$ be a continuously differentiable function of $\theta \in \C$. 
If $A(\theta)$ is $\beta_A$-smooth if for all $\theta, \theta^\prime \in \C$, there exists a constant $\beta_A > 0$ such that 
\begin{align*}
\normf{ \frac{\partial A(\theta)}{\partial \theta} - \frac{\partial A(\theta^\prime)}{\partial \theta}}
\leq
\beta_A \normt{\theta - \theta^\prime}.
\end{align*}
\end{definition}

\subsection{Proof of Theorem \ref{thm:optdist} : optimal degree distribution}
By adding $\sum_{j=1}^\infty \rho^{-2j} = 1/(\rho^2 - 1)$ in both sides of \eqref{eq:degprob}, 
%Recall that 
%the problem of minimizing an upper bound of the variance 
the optimization \eqref{eq:degprob} is equivalent to 
\begin{align} \label{eq:problem}
\min_{\{q_n\}_{n=0}^\infty} \ \sum_{j=1}^\infty \frac{\rho^{-2j}}{1 - \sum_{n=0}^{j-1} q_n}
\quad 
\text{subject to} \ \ &\ \sum_{n=1}^\infty n q_n = N, \ \sum_{n=0}^\infty q_n = 1 \ \ \text{and} \ \ q_n \geq 0. 
%\vspace{-0.1in}
%\quad \text{for } N \geq 1.
\end{align}
Note that the equality conditions can be written as%are equivalent with the following:
\begin{align} \label{eq:eqcond}
N = \sum_{n=1}^\infty n q_n 
= \sum_{n=1}^\infty \sum_{j=1}^n q_n
= \sum_{j=1}^\infty \sum_{n=j}^\infty q_n
= \sum_{j=1}^\infty \left( 1 - \sum_{n=0}^{j-1} q_n \right).
\end{align}
By Cauchy-Schwarz inequality for infinite series, we have
%We first note that the solution of \eqref{eq:optx} exists since
\begin{align*}
N \sum_{j=1}^\infty \frac{\rho^{-2j}}{1 - \sum_{n=0}^{j-1} q_n}
&= \left(\sum_{j=1}^\infty \left( 1 - \sum_{n=0}^{j-1} q_n \right) \right) \left( \sum_{j=1}^\infty \frac{\rho^{-2j}}{1 - \sum_{n=0}^{j-1} q_n}\right) \\
&\geq \left( \sum_{j=1}^\infty \rho^{-j} \right)^2 = \frac1{(\rho-1)^2}
\end{align*}
and the equality holds when 
$q_0 = 1 - N(\rho-1)\rho^{-1}$ and $q_n = N(\rho-1)^2 \rho^{-(n+1)}$ for $n \geq 1$.
However, this solution is not feasible when a given integer $N$ is greater than $\frac{\rho}{\rho - 1}$ (due to $q_0 < 0$).
The solution of \eqref{eq:problem} exists since the feasible region is closed and nonempty. For example, 
\begin{align} \label{eq:optq}
q_{n}^* = 
\begin{dcases}
0 \quad &\text{for } \ 0 \leq n \leq k, \\
1 - \frac{(N-k-1)(\rho-1)}{\rho} \quad &\text{for } \ n = k+1, \\
\frac{(N - k - 1)(\rho-1)^2}{\rho^{n-k}} \quad &\text{for } \ k+2 \leq n \\
\end{dcases}
\end{align}
with $k := N-1-\lfloor \frac{\rho}{\rho-1}\rfloor$ is feasible and achieves the objective function of \eqref{eq:problem}
\begin{align*}
\frac{1 - \rho^{-2(k+1)}}{\rho^2-1} + \frac{1}{(N-k-1)(\rho-1)^2 \rho^{2(k+1)}}.
\footnotemark
\end{align*}
\footnotetext{If $k=-1$, it is equivalent to the minimum from Cauchy-Schwarz inequality.}To figure out that $q^*$ is the optimal solution, 
one can investigate KKT conditions of \eqref{eq:problem}.
However, in general, KKT theorem can not be applied to infinite dimensional problems.
Instead, we consider the finite dimensional approximation of \eqref{eq:problem}:
\begin{align} \label{eq:finite}
\min_{q_0,\dots,q_T} &\sum_{j=1}^T \frac{\rho^{-2j}}{1 - \sum_{n=0}^{j-1} q_n}
 \quad
\text{subject to} \quad \sum_{n=0}^T n q_n = N, \sum_{n=0}^T q_n = 1 \quad \text{and} \quad q_n \geq 0. %\ \ \text{for } 1 \leq n \leq T.
\end{align}
As we show in later, 
one can obtain the optimal solution of \eqref{eq:finite} for sufficently large $T$ using KKT conditions, which is
\begin{align} \label{eq:solq}
q_{n} = 
\begin{dcases}
0 \quad &\text{for } \ 0 \leq n \leq k, \\
1 - \frac{(N-k-1)(\rho-1)}{1 - \rho^{-T+k+1}}\rho^{-1} \quad &\text{for } \ n = k+1, \\
\frac{(N - k - 1)(\rho-1)^2}{1 - \rho^{-T+k+1}}  \rho^{-n +k} \quad &\text{for } \ k+2 \leq n \leq T-1 \\
\frac{(N - k - 1)(\rho-1)}{\rho^{T-k-1}-1}  \quad &\text{for } \ n = T \\
\end{dcases}
\end{align}
with $k := N-1-\lfloor \frac{\rho}{\rho-1}\rfloor$ and achieves the minimum 
\begin{align} \label{eq:optfinite}
\frac{1 - \rho^{-2(k+1)}}{\rho^2-1} + \frac{\left(1 - \rho^{-T+k+1}\right)^2}{(N-k-1)(\rho-1)^2 \rho^{2(k+1)}}.
\end{align}

We will show that the minimum of the infinite problem \eqref{eq:problem} is equivalent to the limit of \eqref{eq:optfinite} 
(a similar approach was introduced in \cite{schochetman1992finite}).
We first extend $q_n$ to the point with infinite dimension.

Let ${q}^{(T)}_n = q_n$ for $n \leq T$ and ${q}^{(T)}_n = 0$ for $n > T$,
then ${q}^{(T)}=(q_0^{(T)}, q_1^{(T)}, \dots)$ is a feasible point of \eqref{eq:problem}.
Note that $\lim_{T \rightarrow \infty} q^{(T)}_n = q^*_n$ for all $n$. Define that 
\begin{align*}
f(q, T) = 
\begin{dcases}
&\sum_{j=1}^T \frac{\rho^{-2j}}{1 - \sum_{n=0}^{j-1} q_n} := C(q;T), \qquad T = 1,2,\dots, \\
&\sum_{j=1}^\infty \frac{\rho^{-2j}}{1 - \sum_{n=0}^{j-1} q_n} := C(q), \quad \qquad T = \infty
\end{dcases}
\end{align*}
for $q = (q_0, q_1, \dots )$. 
We claim that $f$ is continuous. 
Suppose $T_i \in \mathbb{N}$ is a nondecreasing infinite sequence such that $T_i > k, T_i \rightarrow \infty$ and $q^{(T_i)} \rightarrow  q^*$ as $i \rightarrow \infty$.
Consider that
\begin{align} \label{eq:thm3_cont}
\left| f(q^*, \infty) - f(q^{(T_i)}, T_i) \right|
&= \left| C(q^*) - C(q^{(T_i)}; T_i) \right| \nonumber \\
&= \left| \sum_{j=1}^\infty \frac{\rho^{-2j}}{1 - \sum_{n=0}^{j-1} q^*_n}
- \sum_{j=1}^{T_i} \frac{\rho^{-2j}}{1 - \sum_{n=0}^{j-1} q^{(T_i)}_n} \right| \nonumber \\
&\leq \left| \sum_{j=T_i+1}^\infty \frac{\rho^{-2j}}{1 - \sum_{n=0}^{j-1} q^*_n} \right|
+ \left| \sum_{j=1}^{T_i} {\rho^{-2j}}\left(\frac1{1 - \sum_{n=0}^{j-1} q^{*}_n} - \frac1{1 - \sum_{n=0}^{j-1} q^{(T_i)}_n}\right) \right| \nonumber\\
&\leq \frac{\rho^{-T_i-k}}{(N-k-1)(\rho-1)}
+ \frac{\rho^{-T_i-k-1}}{(N-k-1)(\rho-1)^2}
%\left| \sum_{j=1}^{T_i} \frac{\rho^{-2j}\sum_{n=0}^{j-1} (q_n^* - q_n^{(T_i)})}{(1 - \sum_{n=0}^{j-1} q^{*}_n)(1 - \sum_{n=0}^{j-1} q^{(T_i)}_n)} \right|
\end{align}
and \eqref{eq:thm3_cont} goes to zero as $i \rightarrow \infty$. %, since $\lim_{T \rightarrow \infty} q_n^{(T)} = q_n^*$.
In addition, the feasible set of \eqref{eq:finite} is nondecreasing, i.e., if we define the feasible regions as
%\begin{align*}
%X(T) := \left\{q : \sum_{n=0}^T n q_n = N, \sum_{n=0}^T q_n = 1, q_n \geq 0 \right\} \subseteq
%X:=\left\{q : \sum_{n=0}^\infty n q_n = N, \sum_{n=0}^\infty q_n = 1, q_n \geq 0\right\}
%\end{align*}
\begin{align*}
X(T) &:= \left\{q : \sum_{n=0}^T n q_n = N, \sum_{n=0}^T q_n = 1, q_n \geq 0, q_n = 0 \quad \text{for} \ n > T\right\}, \\
X &:=\left\{q : \sum_{n=0}^\infty n q_n = N, \sum_{n=0}^\infty q_n = 1, q_n \geq 0\right\}
%\underbrace{\left\{q : \sum_{n=0}^T n q_n = N, \sum_{n=0}^T q_n = 1, q_n \geq 0 \right\}}_{:= X(T)} 
%\subseteq \underbrace{\left\{q : \sum_{n=0}^\infty n q_n = N, \sum_{n=0}^\infty q_n = 1, q_n \geq 0\right\}}_{:=X}
\end{align*}
then $X(T) \subseteq X(T+1)$ for any $T$.
This leads to $\lim_{T \rightarrow \infty} X(T) = \cup_{T \geq 1} X(T) = X$.
Therefore, by the Berge's Maximum Theorem \cite{berge1963topological}, the minimum of the finite dimensional problem \eqref{eq:optfinite} 
converges to that of infinite problem \eqref{eq:problem}, i.e.,
\begin{align*}
\min\left\{
\sum_{j=1}^\infty \frac{\rho^{-2j}}{1 - \sum_{n=0}^{j-1} q_n}
: q \in X %\sum_{i=1}^\infty n q_n = N, \sum_{i=0}^\infty q_n = 1, q_n \geq 0 
\right\}
&= \lim_{T\rightarrow \infty} 
\min\left\{
\sum_{j=1}^T \frac{\rho^{-2j}}{1 - \sum_{n=0}^{j-1} q_n}
: q \in X(T) %\sum_{i=1}^\infty n q_n = N, \sum_{i=0}^\infty q_n = 1, q_n \geq 0 
\right\} \\
&= \lim_{T\rightarrow \infty} 
\left( \frac{1 - \rho^{-2(k+1)}}{\rho^2-1} + \frac{\left(1 - \rho^{-T+k+1}\right)^2}{(N-k-1)(\rho-1)^2 \rho^{2(k+1)}}\right) \\
&= \frac{1 - \rho^{-2(k+1)}}{\rho^2-1} + \frac{1}{(N-k-1)(\rho-1)^2 \rho^{2(k+1)}}.
\end{align*}
Since $q^*$ in \eqref{eq:optq} achieves the above minimum,
it follows that $q^*$ in \eqref{eq:optq} is the minimizer of \eqref{eq:problem}.

The remaining part is to obtain the solution of the finite dimensional approximation \eqref{eq:finite} using KKT conditions.
Since the objective and all inequality conditions are {\it convex} functions, 
any feasible solution that satisfies KKT conditions are optimal.
Define the Lagrangian as
\begin{align*}
\mathcal{L}(q, \lambda, \nu)
= \sum_{j=1}^T \frac{\rho^{-2j}}{1 - \sum_{n=0}^{j-1} q_n}
+\lambda_1\left( \sum_{n=0}^T n q_n - N\right)
+ \lambda_2 \left( \sum_{n=0}^T q_n - 1\right)
- \sum_{n=0}^T \nu_n q_n
\end{align*}
where $\lambda_1,\lambda_2$ and $\nu_0, \dots, \nu_T$ are the Lagrangian multipliers of 
equality and inequality condition, respectively.
The corresponding KKT conditions are following:
\begin{itemize}
\item {\bf Stationary:} For $0 \leq n \leq T$,
\begin{align} \label{eq:kkt1}
\frac{\partial \mathcal{L}}{\partial q_n}
= \sum_{j=n+1}^{T} \frac{\rho^{-2j}}{(1 - \sum_{n^\prime=0}^{j-1} q_{n^\prime})^2}
+ \lambda_1 n + \lambda_2
- \nu_n = 0, \tag{C1}
\end{align}
\item {\bf Primal feasibility:} %For $j \geq 1$, 
\begin{align} \label{eq:kkt2}
\sum_{n=0}^T n q_n = N,\  \sum_{n=0}^T q_n = 1, \ q_n \geq 0, \tag{C2}
\end{align}
\item {\bf Dual feasibility:} For $0 \leq n \leq T$,%For $j \geq 0$, 
\begin{align} \label{eq:kkt3}
\nu_n \geq 0, \tag{C3}
\end{align}
\item {\bf Complementary slackness:} For $0 \leq n \leq T$,%For $j \geq 1$,
\begin{align} \label{eq:kkt4}
\nu_n q_n = 0. \tag{C4}
\end{align}
\end{itemize}

Consider $(q, \lambda, \nu)$ that satisfies the KKT conditions holds that 
$\nu_n = 0$, $k+1\leq n \leq T$ and $\nu_n \neq 0$, $0 \leq n \leq k$ for some $k \in [0,T]$.
By the complementary slackness \eqref{eq:kkt4}, $q_0=q_1=\dots=q_k = 0$.
Substracting two consecutive stationary conditions \eqref{eq:kkt1}, we obtain for $0 \leq n\leq T-1$
\begin{align} \label{eq:sub}
\frac{\partial \mathcal{L}}{\partial q_{n}} - \frac{\partial \mathcal{L}}{\partial q_{n+1}}
= \frac{\rho^{-2(n+1)}}{\left( 1 - \sum_{n^\prime = 0}^{n} q_{n^\prime}\right)^2} 
- \lambda_1 - \nu_n + \nu_{n+1}
%- \lambda \frac{1}{\left( \sum_{j=0}^n x_j \right)^2} - \eta_n + \eta_{n+1}
= 0, %\ \ \text{for } \ j \geq 0.
\end{align}
which implies that 
\begin{align} \label{eq:case2}
1 - \sum_{n^\prime=0}^n q_{n^\prime} = \frac{\rho^{-(n+1)}}{\sqrt{\lambda_1}}\quad \text{ for } k+1 \leq n \leq T-1.
\end{align}
Putting them together into the equality condition \eqref{eq:eqcond} gives
\begin{align*}
N = \sum_{n=0}^{T-1} \left( 1 - \sum_{n^\prime=0}^n q_{n^\prime}\right) %= k+1 + \sum_{n=k+1}^T \frac{1}{\sqrt{\lambda} \ \rho^{n+1}} 
= k+1 + \frac{1 - \rho^{-(T-k-1)}}{\sqrt{\lambda_1} \rho^{k+1}(\rho - 1)},
\end{align*}
equivalently, $\sqrt{\lambda_1} = \frac{1 - \rho^{-(T-k-1)}}{(N-k-1) \rho^{k+1}\left(\rho-1\right)}$.
Therefore, we obtain the solution from \eqref{eq:case2}:
\begin{align*} %\label{eq:solq}
q_{n} = 
\begin{dcases}
0 \quad &\text{for } \ 0 \leq n \leq k, \\
1 - \frac{(N-k-1)(\rho-1)}{1 - \rho^{-T+k+1}}\rho^{-1} \quad &\text{for } \ n = k+1, \\
\frac{(N - k - 1)(\rho-1)^2}{1 - \rho^{-T+k+1}}  \rho^{-n +k} \quad &\text{for } \ k+2 \leq n \leq T-1 \\
\frac{(N - k - 1)(\rho-1)}{\rho^{T-k-1}-1}  \quad &\text{for } \ n = T \\
\end{dcases}
\end{align*}
In order to satisfy the primal feasibility \eqref{eq:kkt2}, %, i.e., $x_j \geq 0$ for $j\geq k+1$, 
it should hold that
\begin{align} \label{eq:k1}
\frac{\rho (1 - \rho^{-T+k+1})}{\rho-1} \geq N-k-1 \quad \text{and} \quad k \leq N-1.
\end{align}

From \eqref{eq:sub}, the dual variables $\nu$ can be written as for $n \leq k$
\begin{align*}
\nu_{n} - \nu_{n+1} &= 
{\rho^{-2(n+1)}}
- \lambda_1
\end{align*}
and in order to satisfy the dual feasibility \eqref{eq:kkt3}, i.e., $\nu_n > 0$ for $n \leq k$,
the sufficient condition is
\begin{align} \label{eq:k2}
N-k-1 > \frac{(1-\rho^{-T+k+1})}{\rho - 1}.
\end{align}
To satisfy both \eqref{eq:k1} and \eqref{eq:k2}, there exists an integer in the interval $\left[\frac{(1-\rho^{-T+k+1})}{\rho - 1},\frac{\rho(1-\rho^{-T+k+1})}{\rho - 1} \right]$.
We now choose $T$ large enough such that 
\begin{align*}
\left\lfloor \frac{\rho}{\rho-1}\right\rfloor \leq \frac{\rho(1 - \rho^{-T+N})}{\rho - 1},
\end{align*}
and it holds that $\left\lfloor \frac{\rho}{\rho-1}\right\rfloor \in \left[\frac{(1-\rho^{-T+k+1})}{\rho - 1},\frac{\rho(1-\rho^{-T+k+1})}{\rho - 1} \right]$ for some $0 \leq k \leq N-1$.
By choosing $k := N-1-\lfloor \frac{\rho}{\rho-1}\rfloor$,
$\{q_n\}_{n=0}^T$ in \eqref{eq:solq} satisfies the KKT conditions and acheives the minimum
\begin{align*}
\frac{1 - \rho^{-2(k+1)}}{\rho^2-1} + \frac{\left(1 - \rho^{-T+k+1}\right)^2}{(N-k-1)(\rho-1)^2 \rho^{2(k+1)}}.
\end{align*}

\subsection{Proof of Theorem \ref{thm:sgd} : convergence analysis of SGD}
We recall that $\theta^{(t)} \in \C \subseteq \mathbb{R}^{d^\prime}$ by the parameter in the $t$-th iteration 
and $\theta^{(t)}_i$ by its element $i$-th position for $i = 1, \dots, d^\prime$.
For simplicity, we denote that
\begin{align*}h(\theta) := \Sigma_f(A(\theta)) + g(\theta)\end{align*}
and $\theta^*\in \C$ be the optimal of $h$.
Let $\psi^{(t)}$ be our unbiased gradient estimator for $\Sigma_f(A(\theta))$ using $\{\vv^{(k)}\}_{k=1}^M$ and $n$, that is, 
\begin{align*}
\mathbf{E}_{n,\vv}[\psi^{(t)}] = \frac{\partial}{\partial \theta} \Sigma_f(A(\theta))
\end{align*}
and $\nabla g^{(t)}$ be the derivative of $g(\theta)$ at $\theta^{(t)}$.
Unless stated otherwise, we use $\norm{\cdot}$ as the entry-wise $L_2$-norm, i.e., 
$L_2$-norm for vectors and Frobenius norm for matrices.
%We first demonstrate that $\psi^{(t)}_i$ can be expressed bilinear form in below proposition.
% \begin{proposition}
% Suppose that 
% \begin{align*}
% \psi^{(t)}_i = \vv^\top \left( \right) \vv
% \end{align*}
% \end{proposition}
%\begin{lemma} \label{lmm:varbound}
%\begin{align*}
%\mathbb{E}\left[ \norm{\psi^{(t)}}^2\right] \leq \frac{C_1 d N^4}{\rho^{2N} m^2 T^2}
%\end{align*}
%\end{lemma}
Now we are ready to show the convergence guarantee for SGD.
The iteration of SGD can be written as
\begin{align*}
\theta^{(t+1)} = \proj{\theta^{(t)} - \eta ( \psi^{(t)} + \nabla g^{(t)})}
\end{align*}
where $\proj{\cdot}$ is the projection mapping in $\C$.
The remaining part is similar with 
standard proof of the projected stochastic gradient descent.
First, we write the error between $\theta^{(t)}$ and $\theta^*$ as
\begin{align*}
\norm{\theta^{(t+1)} - \theta^*}^2 
&= \norm{\Pi_{\C}(\theta^{(t)} - \eta ( \psi^{(t)} + \nabla g^{(t)} )) - \theta^*}^2 \nonumber \\
&\leq \norm{\theta^{(t)} - \eta ( \psi^{(t)} + \nabla g^{(t)} ) - \theta^*}^2 \nonumber \\
&= \norm{\theta^{(t)} - \theta^*}^2 - 2\eta \inner{\psi^{(t)}+\nabla g^{(t)}, \theta^{(t)} - \theta^*} + \eta^2 \norm{\psi^{(t)}+\nabla g^{(t)}}^2 \\
&\leq 
\norm{\theta^{(t)} - \theta^*}^2 - 2\eta \inner{\psi^{(t)}+\nabla g^{(t)}, \theta^{(t)} - \theta^*} 
+ 2 \eta^2 \norm{\psi^{(t)}}^2 +2\eta^2 \norm{\nabla g^{(t)}}^2 \\
&\leq
\norm{\theta^{(t)} - \theta^*}^2 - 2\eta \inner{\psi^{(t)}+\nabla g^{(t)}, \theta^{(t)} - \theta^*} 
+ 2 \eta^2 \norm{\psi^{(t)}}^2 +2\eta^2 L_g^2
\end{align*}
where the inequality in the second line holds from the convexity of $\C$, 
the inequality in the fourth line follows from that $\norm{a + b}^2 \leq 2 \norm{a}^2 + 2 \norm{b}^2$
and the last inequality follows from Lipschitz continuity of $g$.
Taking the expectation with respect to random samples (i.e., random degree and vectors) in $t$-th iteration, which denoted as $\mathbf{E}_t[\cdot]$, we have 
\begin{align} \label{eq:converge0}
\mathbf{E}_t[\norm{\theta^{(t+1)} - \theta^*}^2]
%&= \norm{\theta^{(t)} - \theta^*}^2 - 2\eta \inner{\mathrm{E}[ \psi^{(t)}], \theta^{(t)} - \theta^*} + \eta^2 \E{ \norm{\psi^{(t)}}^2 } \nonumber \\
&\leq 
\norm{\theta^{(t)} - \theta^*}^2 - 2\eta \inner{\nabla h( \theta^{(t)}), \theta^{(t)} - \theta^*} + 4 \eta^2 B^2
\end{align}
where $B^2 := \max \left( \mathbf{E}_t[\norm{\psi^{(t)}}^2], L_g^2\right)$.
In addition, by $\alpha$-strong convexity of $h$, it holds that
\begin{align}\label{eq:convF}
\alpha \norm{ \theta^{(t)} - \theta^*}^2 \leq \inner{\nabla h( \theta^{(t)}), \theta^{(t)} - \theta^*}.
\end{align}
Combining \eqref{eq:converge0} with \eqref{eq:convF} and taking the expectation on both sides with respect to all random samples from $1,...,t$ iteration, we obtain that
\begin{align*}
%\inner{\nabla F( \theta^{(t)}), \theta^{(t)} - \theta^*}
\mathbf{E}[\norm{\theta^{(t+1)} - \theta^*}^2]
\leq 
(1-2\eta \alpha) \mathbf{E}[\norm{\theta^{(t)} - \theta^*}^2] + 4 \eta^2 B^2
\end{align*}
Applying $\eta = \frac1{\alpha t}$, we have 
\begin{align*}
\mathbf{E}[\norm{\theta^{(t+1)} - \theta^*}^2]
\leq
\left( 1 - \frac2{t}\right) \mathbf{E}[\norm{\theta^{(t)} - \theta^*}^2] + \frac{4 B^2}{\alpha^2 t^2}.
\end{align*}
Therefore, if 
$\mathbf{E}[\norm{\theta^{(1)} - \theta^*}^2] \leq 4B^2 / \alpha^2$ holds, then the result follows by induction on $t\geq 1$.
Under assumption that $\mathbf{E}[\norm{\theta^{(t)} - \theta^*}^2] \leq {4 B^2}/{(\alpha^2 t)}$, it is straightforward that 
\begin{align*}
\mathbf{E}[\norm{\theta^{(t+1)} - \theta^*}^2]
\leq
\left(1 - \frac2t\right) \frac{4B^2}{\alpha^2 t} + \frac{4B^2}{\alpha^2 t^2}
\leq \frac{4B^2}{\alpha^2} \left( \frac1{t+1}\right).
\end{align*}
To show the case of $t=1$, we recall the strong convexity of $h$ and use Cauchy-Schwartz inequality:
\begin{align*}
\alpha \norm{\theta^{(1)} - \theta^*}^2 \leq \inner{ \psi^{(1)} + \nabla g^{(1)}, \theta^{(1)} - \theta^*}
\leq \norm{\psi^{(1)} + \nabla g^{(1)}} \norm{\theta^{(1)} - \theta^*},
\end{align*}
which leads to that 
\begin{align*}
\alpha^2 \mathbf{E}[\norm{\theta^{(1)} - \theta^*}^2] 
\leq \mathbf{E}[\norm{\psi^{(1)} + \nabla g^{(1)}}^2]
\leq 4 B^2.
\end{align*}
Recall that Lemma \ref{lmm:varbound} implies that for all $t$
\begin{align*}
\mathbf{E}_{t} [\norm{\psi^{(t)}}^2] \leq
\left({2 L_A^2}/{M} + d^\prime L_{\mathtt{nuc}}^2\right)
\left(C_1 + {C_2 N^4}{\rho^{-2N}}\right).
\end{align*}
%We conclude the proof by introducing following lemma which provides the bound of $\mathbf{E}[\norm{\psi}^2]$.
for some constants $C_1, C_2 >0$. This completes the proof of Theorem \ref{thm:sgd}.
\subsection{Proof of Theorem \ref{thm:svrg} : convergence analysis of SVRG}
%Similar to proof of Theorem \ref{thm:sgd}, %we 
Denote the objective as $h(\theta) := \Sigma_f(A(\theta)) + g(\theta)$.
Let %$\theta^*\in \C$ be the optimal of $h$,
$\psi^{(t)},\widetilde{\psi}$ be our unbiased gradient estimator for $\Sigma_f(A(\theta))$ at $\theta^{(t)}$ and $\widetilde{\theta}^{(s)}$, respectively, and $\widetilde{\mu} = \nabla \Sigma_f(A(\widetilde{\theta}^{(s)}))$.
We use $\nabla g^{(t)}$ by the exact gradient of $g(\theta)$ at $\theta^{(t)}$, which is easy to compute.
The iteration of SVRG can be written as
\begin{align*}
\theta^{(t+1)} = \Pi_{\C}(\theta^{(t)} - \eta \xi^{(t)}),
\quad \text{where} \quad
\xi^{(t)}:=\psi^{(t)} - \widetilde{\psi} + \widetilde{\mu} + \nabla g^{(t)}
\end{align*}
where $\proj{\cdot}$ is the projection mapping in $\C$.
We first introduce the 
% show that the second moment of gradient estimate, i.e., 
% $\mathbf{E}[\norm{\xi^{(t)}}^2]$ is bounded. 
% To this end, we introduce following 
lemma that implies our unbiased estimator is $\beta$-smooth for some $\beta>0$.
\begin{lemma} \label{lmm:smooth}
Suppose that assumptions $(\mathcal{A}0)$-$(\mathcal{A}2)$ %(A0)-(A2) 
hold and assume that $A: \C \rightarrow \SM$ is $\beta_A$-smooth function with respect to $\normf{\cdot}$.
Let $\psi, \psi^\prime$ be our unbiased gradient estimator \eqref{eq:estder2} at $\theta, \theta^\prime \in \C \subseteq \mathbb{R}$ 
using the same $\{\vv^{(k)}\}_{k=1}^M$ and $n$ (drawn from \eqref{eq:optdist} with mean $N$).
Then, it holds that
\begin{align*}
\mathbf{E}_{n,\vv}\left[
\norm{\psi + \nabla g(\theta) - \psi^\prime - \nabla g(\theta^\prime)}_2^2
\right] 
\leq 
\left(2 \beta_g^2 +  \left(\frac{L_A^4 + \beta_A^2}{M} + L_A^4 \right)\left(D_1 + \frac{D_2 N^8}{\rho^{2N}} \right)\right)\norm{\theta - \theta^\prime}^2_2.
\end{align*}
where $D_1,D_2>0$ are some constants independent of $M,N$.
\end{lemma}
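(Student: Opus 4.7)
The plan is to split the quantity via $\|a+b\|_2^2 \le 2\|a\|_2^2+2\|b\|_2^2$, isolating the $g$-contribution from the Chebyshev part. The $g$-piece is bounded by $2\beta_g^2\|\theta-\theta'\|_2^2$ using $\beta_g$-smoothness, so the lemma reduces to bounding $\mathbf{E}_{n,\mathbf{v}}[\|\psi-\psi'\|_2^2]$. From \eqref{eq:estder2}, using the \emph{same} $\{\mathbf{v}^{(k)}\}$ and $n$ at both $\theta$ and $\theta'$, I would write $\psi-\psi' = \tfrac{1}{M}\sum_{k=1}^M\Phi^{(k)}$ with $\Phi^{(k)} := \sum_{j=0}^n\alpha_j\,\mathbf{v}^{(k)\top}\bigl(\tfrac{\partial\mathbf{w}_j^{(k)}(\theta)}{\partial\theta}-\tfrac{\partial\mathbf{w}_j^{(k)}(\theta')}{\partial\theta}\bigr)$ and coefficients $\alpha_j := b_j/(1-\sum_{i<j}q_i^*)$.

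Conditioning on $n$, the $\Phi^{(k)}$ are i.i.d.\ in $\mathbf{v}^{(k)}$, so I would decompose $\mathbf{E}_{\mathbf{v}}\|\psi-\psi'\|_2^2 = \|\mathbf{E}_{\mathbf{v}}\Phi^{(1)}\|_2^2 + \tfrac{1}{M}\mathrm{Var}_{\mathbf{v}}(\Phi^{(1)})$. The mean term reproduces a deterministic Lipschitz bound on $\nabla_\theta\,\mathrm{tr}(\widehat{p}_n(A(\theta)))$ and contributes the $L_A^4$ piece \emph{without} the $1/M$ factor; the variance term supplies the remaining $1/M$-scaled $L_A^4+\beta_A^2$ structure (with $\beta_A^2$ coming from the second-derivative-like cross terms that vanish under $\mathbf{E}_{\mathbf{v}}$).

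The core step is to bound, uniformly in $k$ and $\mathbf{v}^{(k)}$, the deviation $\bigl\|\tfrac{\partial\mathbf{w}_j^{(k)}(\theta)}{\partial\theta}-\tfrac{\partial\mathbf{w}_j^{(k)}(\theta')}{\partial\theta}\bigr\|_2$ in terms of $\|\theta-\theta'\|_2$. I would proceed by induction on $j$ using the recursion \eqref{eq:updatew}, combined with (i) $\|T_j(\widetilde A)\mathbf{v}\|_2\le\|\mathbf{v}\|_2$ since eigenvalues lie in $[a,b]$, (ii) $L_A$-Lipschitz-ness of $A$ in $\|\cdot\|_F$, and (iii) $\beta_A$-smoothness of $A$. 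Because $T_j'$ grows like $j^2$ (Markov's inequality for Chebyshev polynomials), applying this bound to the ``outer'' $\theta$-difference of an already-differentiated recursion leads to $\bigl\|\tfrac{\partial\mathbf{w}_j(\theta)}{\partial\theta}-\tfrac{\partial\mathbf{w}_j(\theta')}{\partial\theta}\bigr\|_2 \le (c_1 j^4 L_A^2 + c_2 j^3 \beta_A)\|\theta-\theta'\|_2$ for absolute constants $c_1,c_2$. Squaring and summing over $j$ then yields the stated $L_A^4+\beta_A^2$ dependence.

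Finally, I would plug in $|b_j|\le 2U\rho^{-j}$ from \eqref{eq:decayrate} to bound $|\alpha_j|$ and take $\mathbf{E}_n$ under the optimal distribution \eqref{eq:optdist}. Since $q_i^*$ is supported on $i\ge K := \max\{0,N-\lfloor\rho/(\rho-1)\rfloor\}$ and decays geometrically with rate $\rho$ beyond $K$, the expectation splits into a bounded deterministic part (giving the $D_1$ constant, coming from $j \le K$ where $1-\sum_{i<j}q_i^*$ stays bounded away from $0$) plus a tail of the form $\sum_{j>K} j^8\rho^{-2j}\cdot(\text{tail-weight})$, which simplifies to $D_2 N^8\rho^{-2N}$. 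The main obstacle I anticipate is the delicate bookkeeping for $\alpha_j$ on the tail, where the denominator $1-\sum_{i<j}q_i^*$ itself shrinks like $\rho^{-(j-K)}$ and can in principle blow up the coefficient; the explicit closed form of $\{q_i^*\}$ from Theorem~\ref{thm:optdist} is exactly what ensures that these denominators cancel cleanly against the numerator's $\rho^{-j}$, so that only polynomial-in-$N$ factors survive in the tail estimate.
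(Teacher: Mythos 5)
Your proposal follows essentially the same route as the paper's proof: split off the $g$-term via $\norm{a+b}^2\le 2\norm{a}^2+2\norm{b}^2$, use the mean-plus-variance decomposition of the quadratic form so that only the variance part acquires the $1/M$ factor, control the degree-$j$ contribution by $O(j^4)\,\normt{\theta-\theta'}$ using the Lipschitz/smoothness assumptions on $A$ together with perturbation bounds for Chebyshev polynomials, and finally take $\mathbf{E}_n$ under the optimal distribution to obtain $D_1+D_2N^8\rho^{-2N}$. The only cosmetic difference is that the paper bounds the closed-form matrix $G$ built from products $T_r(\widetilde A)\frac{\partial A}{\partial\theta_i}U_{j-r}(\widetilde A)$ directly, via $2\normf{\Delta G}^2+(\tr{\Delta G})^2$ and a separate perturbation lemma for $T_i$ and $U_i$, whereas you propose an induction on the recursion \eqref{eq:updatew}; both give the same $j^4$ growth and the same placement of $L_A^4$ and $\beta_A^2$ inside and outside the $1/M$ factor.
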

%In addition, from the $\beta_g$-smoothness of function $g$, we have for $\theta, \theta^\prime \in \C$
%\begin{align*}
%\norm{ \nabla g(\theta) - \nabla g(\theta^\prime)}_2 \leq \beta_g \norm{\theta - \theta^\prime}.
%\end{align*}
The proof of the above lemma is given in Section \ref{sec:lmms}. For notational simplicity, we denote 
\begin{align*}
\beta^2 := 2\beta_g^2 + \left(\frac{L_A^4 + \beta_A^2}{M} + L_A^4 \right)\left(D_1 + \frac{D_2 N^8}{\rho^{2N}} \right).
\end{align*}

The remaining part mimics the analysis of \cite{garber2015fast}.
Using the above lemma, the moment of the gradient estimator is bounded as
\begin{align} \label{eq:varboundsvrg}
\mathbf{E}_t[\norm{\psi^{(t)} - \widetilde{\psi} + \widetilde{\mu} + \nabla g^{(t)}}^2]
&\leq 2 \mathbf{E}_t[\norm{\psi^{(t)} + \nabla g^{(t)} - \psi^{*} - \nabla g^{*}}^2]  
+ 2 \mathbf{E}_t[\norm{\widetilde{\psi} - \psi^{*} - \nabla g^{*} - \widetilde{\mu}}^2] \nonumber \\
&\leq 2 \mathbf{E}_t[\norm{\psi^{(t)} + \nabla g^{(t)} - \psi^{*} - \nabla g^{*}}^2]  
+ 2 \mathbf{E}_t[\norm{\widetilde{\psi} + \nabla \widetilde{g} - \psi^{*} - \nabla g^{*}}^2] \nonumber \\
&\leq 2 \beta^2 \left(\norm{\theta^{(t)} - \theta^*}^2 + \norm{\widetilde{\theta} - \theta^*}^2\right)
\end{align}
where the inequality in the first line holds from $\norm{a + b}^2 \leq 2 (\norm{a}^2 + \norm{b}^2)$,
the inequality in the second line holds that $\mathbf{E}[ \norm{ X - \mathbf{E}[X] }^2] \leq \mathbf{E}[ \norm{X}^2]$ 
for any random variable $X$ 
and the last inequality holds from Lemma \ref{lmm:smooth}.

Now, we use similar procedures of Theorem \ref{thm:sgd} to obtain 
\begin{align*}
\norm{\theta^{(t+1)} - \theta^*}^2 &= \norm{\proj{\theta^{(t)} - \eta \xi^{(t)}}- \theta^*}^2 \\
&\leq \norm{\theta^{(t)} - \eta \xi^{(t)}- \theta^*}^2 \\
&= \norm{\theta^{(t)} - \theta^*}^2 - 2 \eta \inner{\theta^{(t)} - \theta^*, \xi_t} + \norm{\xi_t}^2.
\end{align*}
where the inequality holds from the convexity of $\C$. 
Taking the expectation with respect to random samples of $t$-th iteration, which denoted as $\mathbf{E}_t[\cdot]$, we obtain that 
\begin{align*}
\mathbf{E}_t [\norm{\theta^{(t+1)} - \theta^*}^2] 
&= \norm{\theta^{(t)} - \theta^*}^2 - 2 \eta \inner{\theta^{(t)} - \theta^*, \nabla h(\theta^{(t)})} + \eta^2 \mathbf{E}_t[ \norm{\xi_t}^2] \\
&\leq \norm{\theta^{(t)} - \theta^*}^2 - 2 \eta \alpha \norm{\theta^{(t)} - \theta^*}^2 + \eta^2 \mathbf{E}_t[ \norm{\xi_t}^2] \\
&\leq \norm{\theta^{(t)} - \theta^*}^2 - 2 \eta \alpha \norm{\theta^{(t)} - \theta^*}^2 + 
 2 \eta^2 \beta^2 \left(\norm{\theta^{(t)} - \theta^*}^2 + \norm{\widetilde{\theta} - \theta^*}^2\right)
\end{align*}
where the inequality in the second line holds from the $\alpha$-strong convexity of the objective 
and the last inequality holds from \eqref{eq:varboundsvrg}.
Taking the expectation over the randomness of all iterations, we have
\begin{align*}
\mathbf{E} [\norm{\theta^{(t+1)} - \theta^*}^2] 
- \mathbf{E} [\norm{\theta^{(t)} - \theta^*}^2]
&\leq 2 \eta \left(\eta \beta^2 - \alpha \right) \mathbf{E}[\norm{\theta^{(t)} - \theta^*}^2] + 2 \eta^2 \beta^2 \mathbf{E}[\norm{\widetilde{\theta} - \theta^*}^2]
\end{align*}
Summing both sides over $t=1,2,\dots, T$, it yields that
\begin{align*}
\mathbf{E} [\norm{\theta^{(T)} - \theta^*}^2] 
- \mathbf{E} [\norm{\theta^{(0)} - \theta^*}^2]
&\leq 2 \eta \left(\eta \beta^2 - \alpha \right) \sum_{t=0}^{T-1} \mathbf{E}[\norm{\theta^{(t)} - \theta^*}^2] + 2 T \eta^2 \beta^2 \mathbf{E}[\norm{\widetilde{\theta} - \theta^*}^2]
\end{align*}

Rearranging and using the facts that $\mathbf{E} [\norm{\theta^{(T)} - \theta^*}^2]  \geq 0$ and $\widetilde{\theta} = \widetilde{\theta}^{(s)}$, we get
\begin{align*}
2 \eta \left(\alpha - \eta \beta^2\right) \sum_{t=0}^{T-1} \mathbf{E}[\norm{\theta^{(t)} - \theta^*}^2]
\leq 
\left(1 + 2 T \eta^2 \beta^2\right) \mathbf{E}[\norm{\theta^{(0)} - \theta^*}^2].
\end{align*}
From $\widetilde{\theta}^{(s+1)} = \frac1T \sum_{t=1}^T \theta^{(t)}$ and Jensen's inequality, we have 
\begin{align*}
\mathbf{E}[ \norm{\widetilde{\theta}^{(s+1)} - \theta^*}^2]
\leq 
\frac1T \sum_{t=1}^T \mathbf{E}[ \norm{\theta^{(t)} - \theta^*}^2]
\leq
\frac{1 + 2 T \eta^2 \beta^2}{2 \eta T \left(\alpha - \eta \beta^2\right)} \mathbf{E}[\norm{\widetilde{\theta}^{(s)} - \theta^*}^2]
\end{align*}
Substituting $\eta = \frac{\alpha}{7\beta^2}$ and $T \geq \frac{49\beta^2}{2\alpha^2}$, we have that 
\begin{align*}
\mathbf{E}[\norm{\widetilde{\theta}^{(S)} - \theta^*}^2]
\leq
r^S
\mathbf{E}[\norm{\widetilde{\theta}^{(0)} - \theta^*}^2]
\end{align*}
for some $0 < r < 1$.

\subsection{Proof of lemmas} \label{sec:lmms}

\subsubsection{Proof of Lemma \ref{lmm:unbiased}} \label{sec:lmm:unbiased}
Without loss of generality, we choose $a=-1,b=1$.
An analytic function $f$ has an (unique) infinite Chebyshev series expansion:
$f(x) = \sum_{j=0}^\infty b_j T_j(x).$
and recall that our proposed estimator as
\begin{align*}
\wpn{x} = \sum_{j=0}^n \frac{b_j}{1 - \sum_{i=0}^{j-1} q_i} T_j(x).
\end{align*}
To prove that $\mathbf{E}_n\left[ \wpn{x}\right] = f(x)$, we define two sequences:
\begin{align*}
A_{M} \coloneqq \sum_{j=0}^{M} \sum_{n=j}^{M} q_n \frac{b_j T_j(x)}{1 - \sum_{i=0}^{j-1} q_i}, \quad
B_{M,K} \coloneqq \sum_{j=0}^{M} \sum_{n=j}^{K} q_n \frac{b_j T_j(x)}{1 - \sum_{i=0}^{j-1} q_i}.
\end{align*}
Then, it is easy to show that
\begin{align*}
\lim_{M \to \infty} A_M 
= \sum_{j=0}^\infty \sum_{n=j}^\infty q_n\frac{b_j T_j(x)}{1 - \sum_{i=0}^{j-1} q_i} 
= \sum_{n=0}^\infty q_n \left(\sum_{j=0}^n \frac{b_j T_j(x)}{1 - \sum_{i=0}^{j-1} q_i} \right) 
= \sum_{n=0}^\infty q_n \wpn{x} = \mathbf{E}_n\left[ \wpn{x}\right],
% = \sum_{i=0}^\infty q_i \left(\sum_{j=0}^n \frac{b_j T_j(x)}{1 - \sum_{k=0}^{j-1} q_k} \right)
% = \mathrm{E}_n \left[ \wpn{x}\right]
\end{align*}
and
\begin{align*}
\lim_{M \to \infty} \lim_{K \to \infty} B_{M,K} 
= \lim_{M \to \infty} \sum_{j=0}^{M} \left( \sum_{n=j}^{\infty} q_n\right) \frac{b_j T_j(x)}{1 - \sum_{i=0}^{j-1} q_i}
= \lim_{M \to \infty} \sum_{j=0}^{M} b_j T_j(x) = f(x).
\end{align*}
In general, $A_M$ and $B_{M,K}$ might not converge to the same values. %, i.e.,
% $$
% \lim_{M \to \infty} A_M \neq \lim_{M,K \to \infty} B_{M,K}.
% $$
Now, consider sufficiently large $K \geq M$. From the condition that $\lim_{n \to \infty} \sum_{i=n+1}^\infty q_i \wpn{x}$, we have
\begin{align*}
\mathbf{E}_n\left[ \wpn{x}\right] - f(x) &= \lim_{M \to \infty} \lim_{K \to \infty}  \left( A_M - B_{M,K} \right) 
= \lim_{M \to \infty} \lim_{K \to \infty} \left(\sum_{j=0}^M \sum_{n=M+1}^K q_n \frac{b_j T_j(x)}{1 - \sum_{i=0}^{j-1} q_i}\right) \\
&= \lim_{M \to \infty} \lim_{K \to \infty} \left( \sum_{n=M+1}^K q_n \right) \left( \sum_{j=0}^M \frac{b_j T_j(x)}{1 - \sum_{i=0}^{j-1} q_i} \right)\\
&= \lim_{M \to \infty} \left( \sum_{n=M+1}^\infty q_n \right) \left( \sum_{j=0}^M \frac{b_j T_j(x)}{1 - \sum_{i=0}^{j-1} q_i} \right) \\
&= \lim_{M \to \infty} \left( \sum_{n=M+1}^\infty q_n \right) \widehat{p}_M(x) = 0.
%&= \left( \lim_{M,K \to \infty} \sum_{k=M+1}^K q_k \right) \left(\lim_{M \to \infty} \sum_{j=0}^M \frac{b_j T_j(x)}{1 - \sum_{i=0}^{j-1} q_i} \right)\\
%&= \left( \lim_{M,K \to \infty} \sum_{k=M+1}^K q_k \right)
\end{align*}
Therefore, we can conclude that $\wpn{x}$ is an unbiased estimator of $f(x)$.
In addition, this also holds for the trace of matrices due to its linearity:
$\mathbf{E}_n\left[ \tr{\wpn{A}} \right] = \tr{f(A)}.$
%\mathrm{E}_n\left[ \vv^\top \wpn{A} \vv \right] = \vv^\top f(A) \vv.
By taking expectation over Rademacher random vectors $\vv$ and degree $n$,
we establish the unbiased estimator of spectral-sums:
\begin{align*}
\mathbf{E}_{n,\vv}\left[ \vv^\top \wpn{A} \vv \right]
=\mathbf{E}_{n}\left[\mathbf{E}_\vv\left[ \vv^\top \wpn{A} \vv | n\right]\right]
=\mathbf{E}_{n}\left[ \tr{ \wpn{A} }\right]
%=\mathbf{E}_{\vv}\left[ \vv^\top f(A) \vv\right]
=\tr{f(A)},
\end{align*}
%where $\mathrm{E}_\vv[\cdot]$ is the expectation over Rademacher random vectors.
%Since we choose random vector $\vv$ and degree distr
%Since polynomial $\wpn{A}$ is differentiable with respect to $A$,
%it is possible to exchange derivative operator and expectation
For fixed $\mathbf{v}$ and ${n}$, the function 
$h(\theta) \coloneqq \vv^\top \hat{p}_n(A(\theta)) \vv$ is a linear
combination of all entries of $A$, so the fact that all 
partial derivatives $\partial A_{j,k} /\partial \theta_i$
exist and are continuous implies that the partial derivatives of $h$
with respect to $\theta_1,\dots,\theta_{d^\prime}$ exist and are continuous.
In particular, since expectation over $\vv \in [-1,+1]^d$ is a finite sum, 
it is straightforward that the gradient operator and expectation operator can be interchanged:
\begin{align*}
\nabla_\theta \tr{f(A)} 
&= \nabla_\theta \E{\vv^\top \wpn{A} \vv} 
= \E{\nabla_\theta \vv^\top \wpn{A} \vv}.
%\nabla_A \mathrm{E}\left[ \vv^\top \wpn{A} \vv \right]
%= \mathrm{E}\left[ \nabla_A \vv^\top \wpn{A} \vv \right]
\end{align*}
In the case of trace probing vector $\vv$ is a continuous random vector, i.e., Gaussian, 
we turn to use the Leibniz rule which allows to interchange the gradient operator and expectation operator. 
Hence, we conclude the same result.
% \begin{align*}
% \nabla_\theta \tr{f(A)} 
% &= \nabla_\theta \E{\vv^\top \wpn{A} \vv} 
% = \E{\nabla_\theta \vv^\top \wpn{A} \vv}.
% %\nabla_A \mathrm{E}\left[ \vv^\top \wpn{A} \vv \right]
% %= \mathrm{E}\left[ \nabla_A \vv^\top \wpn{A} \vv \right]
% \end{align*}
This completes the proof of Lemma \ref{lmm:unbiased}.

\subsubsection{Proof of Lemma \ref{lmm:2}} \label{sec:lmm:2}
Without loss of generality, we choose $a=-1,b=1$. 
We first introduce the orthogonality of Chebyshev polynomials of the first kind, that is,
\begin{align*}
\int_{-1}^1 \frac{T_i(x) T_j(x)}{\sqrt{1-x^2}} dx = 
\begin{cases}
0 \qquad & i\neq j,\\
\pi \qquad & i=j=0, \\
\frac{\pi}{2} \qquad & i=j\neq 0.
\end{cases}
\end{align*}
Given functions $f,g$ defined on $[-1,1]$, Chebyshev induced inner-product and weighted norm are defined as
\begin{align*}
\inner{f,g}_C = \int_{-1}^1 \frac{f(x) g(x)}{\sqrt{1-x^2}} dx, 
\qquad 
\norm{f}_C^2 = \inner{f,f}_C.
\end{align*}

For a fixed $n$, the square of Chebyshev weighted error can be written as
\begin{align*}
\norm{\widehat{p}_n - f}_C^2 &= \norm{\widehat{p}_n - p_n + p_n - f}_C^2 
= \norm{p_n - f}_C^2 + 2 \inner{p_n - f , \widehat{p}_n - p_n}_C + \norm{\widehat{p}_n - p_n}_C^2 \\
&\overset{(\dagger)}{=} \norm{p_n - f}_C^2 + \norm{\widehat{p}_n - p_n}_C^2 \\
&= \left\|\sum_{j=n+1}^\infty b_j T_j\right\|_C^2 + \left\|\sum_{j=1}^n \frac{\sum_{k=0}^{j-1}q_n}{1 - \sum_{k=0}^{j-1}q_n} b_j T_j\right\|_C^2 \\
&\overset{(\ddagger)}{=} 
\frac{\pi}{2} \sum_{j=n+1}^\infty b_j^2 + 
\frac{\pi}{2} \sum_{j=1}^n \left( \frac{\sum_{i=0}^{j-1}q_i}{1 - \sum_{i=0}^{j-1}q_i} b_j \right)^2.
\end{align*}
Both the second equality $(\dagger)$ and the last equality $(\ddagger)$ 
come from the orthogonality of Chebyshev polynomials and the following facts:\
\begin{align*}
p_n - f \ &: \text{linear combination of } \ T_{n+1}(x), T_{n+2}(x), \cdots, \\
\widehat{p}_n - p_n \ &: \text{linear combination of } \ T_0(x), \cdots , T_n(x).
\end{align*}
The Chebyshev weighted variance can be computed by taking expectation with respect to $n$:
\begin{align*}
\frac{2}{\pi} \mathbf{E}_n [\norm{\widehat{p}_n - f}_C^2 ] &= \frac{2}{\pi} \sum_{n=0}^\infty q_n \norm{\widehat{p}_n - f}_C^2 
= q_0 \sum_{j=1}^\infty b_j^2 
+ \sum_{n=1}^\infty q_n \left(  \sum_{j=1}^n \left(\frac{b_j\sum_{i=0}^{j-1} q_i}{1 - \sum_{i=0}^{j-1} q_i} \right)^2 + \sum_{j=n+1}^\infty b_j^2 \right) \\
&= \sum_{j=1}^\infty b_j^2 \left( q_0 + \sum_{i=1}^{j-1} q_i + \left( \frac{\sum_{i=0}^{j-1} q_i}{1 - \sum_{i=0}^{j-1} q_i}\right)^2 \sum_{i=j}^\infty q_i \right) \\
&= \sum_{j=1}^\infty b_j^2 \left( \sum_{i=0}^{j-1} q_i + \left( \frac{\sum_{i=0}^{j-1} q_i}{1 - \sum_{i=0}^{j-1} q_i}\right)^2 \left( 1 - \sum_{i=0}^{j-1} q_i \right) \right) \\
&= \sum_{j=1}^\infty b_j^2 \left( \sum_{i=0}^{j-1} q_i +  \frac{\left(\sum_{i=0}^{j-1} q_i\right)^2}{1 - \sum_{i=0}^{j-1} q_i} \right) 
= \sum_{j=1}^\infty b_j^2 \left( \frac{\sum_{i=0}^{j-1} q_i}{1 - \sum_{i=0}^{j-1} q_i} \right).
%&= \sum_{j=1}^\infty b_j^2 \left( \frac{1}{1 - \sum_{n=0}^{j-1} q_n} \right) - \sum_{j=1}^\infty b_j^2.
\end{align*}
This completes the proof of Lemma \ref{lmm:2}.

\subsubsection{Proof of Lemma \ref{lmm:varbound}} \label{sec:lmm:varbound}
First, we define the $n$ degree Chebyshev polynomials of the first kind by $T_n(\cdot)$ and the second kind by $U_n(\cdot)$. One important property is that
$T_n^\prime (x) := \frac{d}{dx} T_n(x) = n U_{n-1}(x)$ for $n\geq 1$ (see \cite{mason2002chebyshev}). 
%Therefore, if $p(x) = \sum_{j=0}^n c_j T_j(x)$ is some Chebyshev series with coefficients $c_j$'s, its derivative equals to
%$p^\prime(x) = \sum_{j=1}^n c_j j U_{j-1}(x)$.
Consider our unbiased estimator with a single random sample, i.e., a Rademacher vector $\vv$ and a degree $n$ drawn from the optimal distribution \eqref{eq:optdist}.

From the intermediate result \eqref{eq:lmm4_main} in the proof of Lemma \ref{lmm:amort}, the gradient estimator can be written as following:
\begin{align} \label{eq:vGv}
\psi_i := \frac{\partial}{\partial \theta_i} \vv^\top \wpn{A(\theta)} \vv 
= \frac{2}{b-a}\vv^\top G \vv
\end{align}
where
\begin{align*}
G = \sum_{j=0}^{n-1} \widehat{b}_{j+1} \left( 2\sum_{r=0}^{j} {}^\prime T_r(\widetilde{A}) \dA U_{j-r}(\widetilde{A}) \right)
%\footnote{$\sum {}^\prime$ implies the summation where the first term is halved.}
\end{align*}
and $\widetilde{A} = \frac{2}{b-a} A(\theta) - \frac{b+a}{b-a} I$ % $\norm{\widetilde{A}}_2 \leq 1$. 
and $\sum {}^\prime$ implies the summation where the first term is halved.
% The expression \eqref{eq:vGv} comes from that matrix derivative is not commutative in general
% \footnote{
% $
% \frac{\partial}{\partial \theta_k} A^j = \sum_{i=0}^{j-1} A^i \dA A^{j-1-i}
% \neq 
% j A^{j-1} \dA
% $
% for $j \geq 1$.
% },
%but it holds that 
We also note that $\tr{G} = \tr{\frac{\partial A}{\partial \theta_i} \widehat{p}_n^\prime({A})}$.
Here, our goal is to find the upper bound of $\mathbf{E}_{n,\vv}[ \psi_i^2 ]$, that is, 
\begin{align*}
%\mathrm{E}_{n,\vv}\left[ \left( \frac{\partial}{\partial \theta_k} \vv^\top \wpn{A} \vv \right)^2 \right]= 
\frac{(b-a)^2}{4} \mathbf{E}_{n,\vv}[ \psi_i^2 ]
=
\mathbf{E}_{n,\vv}\left[ \left( \vv^\top G \vv \right)^2 \right]
=
\mathbf{E}_{n}\left[\mathbf{E}_{\vv}\left[ \left( \vv^\top G \vv \right)^2 \big| n \right]\right].
%\mathrm{E}_{\vv}\left[ \left( \frac{\partial}{\partial \theta_k} \vv^\top p_n(A) \vv \right)^2 \right]
%= 2 \normoff{G}^2 + \tr{G}^2
\end{align*}
From \cite{hutchinson1989stochastic,avron2011randomized}, we have that 
$\mathrm{Var}_{\vv}[\vv^\top A \vv] = 2 (\normf{A}^2 -  \sum_{i=1}^d A_{ii}^2)\leq 2 \normf{A}^2$ 
and $\mathbf{E}_\vv[\vv^\top G \vv] = \tr{G}$ for Rademacher random vector $\vv \in [-1,1]^d$ and $A\in \SM$.
Therefore, we have 
\begin{align} \label{eq:Gvar}
\mathbf{E}_{\vv}\left[ \left(\vv^\top G \vv\right)^2 \big| n\right]
= \mathrm{Var}_\vv[ \vv^\top G \vv\big | n] + \mathbf{E}_\vv\left[\vv^\top G \vv \big | n \right]^2 
\leq 2 \normf{G}^2 + \left( \tr{G}\right)^2.
\end{align}

The first term in \eqref{eq:Gvar} is bounded as 
\begin{align*}
2\normf{G}^2 
%&\leq 2\normf{G}^2 \\
&\leq 2 \normf{\dA}^2 \left( \sum_{j=1}^{n} \abs{\widehat{b}_{j}} \left(2 \sum_{r=0}^j {}^\prime \norm{T_r ( \widetilde{A})}_2 \norm{U_{j-r}(\widetilde{A})}_2\right)\right)^2 \\
&\leq 2 \normf{\dA}^2 \left( \sum_{j=1}^{n} \abs{\widehat{b}_{j}} \left(2 \sum_{r=0}^j {}^\prime (j - r + 1)\right) \right)^2\\
&= 2 \normf{\dA}^2 \left( \sum_{j=1}^{n} \abs{\widehat{b}_{j}} j^2 \right)^2
\end{align*}
which the first inequality comes from the triangle inequality of $\normf{\cdot}$ and
the fact that $\normf{X Y} \leq \normf{X} \norm{Y}_2$ for mutliplicable matrices $X$ and $Y$.
The inequality in the second line holds from $\norm{T_i(\widetilde{A})}_2 \leq 1$ 
and $\norm{U_i(\widetilde{A})}_2 \leq i+1$ for $i \geq 0$.

For second term in \eqref{eq:Gvar}, we use %Cauchy-Schwarz inequality as
the inequality that $\tr{XY} \leq \normnuc{X}\norm{Y}_2$ for real symmetric matrices $X,Y$ (see Section \ref{sec:lmm:others}) to obtain
\begin{align*}
\left( \tr{G}\right)^2 = \left( \tr{\dA {\widehat{p}_n}^\prime({A})}\right)^2 
&\leq \normnuc{\dA}^2 \normt{{\widehat{p}_n}^\prime({A})}^2 \\
%&\leq \normnuc{\dA}^2 d \normt{{\widehat{p}_n}^\prime({A})}^2 
&= \normnuc{\dA}^2 \left\| \sum_{j=1}^{n} \widehat{b}_j j U_{j-1}(\widetilde{A})\right\|_2^2 \\
&\leq \normnuc{\dA}^2 \left( \sum_{j=1}^{n} \abs{\widehat{b}_{j}} j^2 \right)^2
\end{align*}
where 
%the inequality in the first line holds from Cauchy-Schwarz inequality for matrix, 
the equality in the second line uses that $\left(\sum_{j=0}^n \widehat{b}_j T_j(x) \right)^\prime = \sum_{j=1}^{n} \widehat{b}_j j U_{j-1}(x)$
and the last inequality holds from $\norm{U_i(\widetilde{A})}_2 \leq i+1$.
%%Similar with the first term, we have
%%\begin{align*}
%%%\mathrm{E}_n\left[ \normf{\widehat{p}_n^\prime (A)}^2\right]
%%\norm{\widehat{p}_n^\prime (\widetilde{A})}_2
%%\leq \sum_{j=1}^n \widehat{b}_j j \norm{U_{j-1}(\widetilde{A})}_2
%%\leq \sum_{j=1}^n \abs{\widehat{b}_j} j^2.
%%\end{align*}
%%where the first inequality comes from the triangle inequality of $\norm{\cdot}_2$ and derivative rule of Chebyshev expansion
%%\footnote{
%%$
%%\left( \sum_{j=0}^n b_j T_j(x) \right)^\prime = \sum_{j=1}^{n} b_j j U_{j-1}(x)
%%$
%%}
%%and the second one holds that $\norm{U_{j-1}(\widetilde{A})}_2 \leq j$ for $j \geq 1$.
%%Thus, we have 
%%\begin{align} \label{eq:second}
%%\mathbf{E}_n\left[ (\tr{G})^2 \big | n\right]
%%\leq
%%\normf{\dA}^2 \mathbf{E}_n\left[ \normf{\widehat{p}_n^\prime (A)}^2\right]
%%\leq
%%d \normf{\dA}^2 \sum_{n=1}^\infty q_n^* \left( \sum_{j=1}^n \abs{\widehat{b}_j} j^2\right)^2.
%%\end{align}
Putting all together into \eqref{eq:Gvar}
and summing for all $i = 1, \dots, d^\prime$, 
we obtain that
\begin{align*}
\mathbf{E}_{n,\vv}[\psi^2]  = \sum_{i=1}^{d^\prime}\mathbf{E}_{n,\vv} [ \psi_i^2] 
&\leq \frac{4}{(b-a)^2}\sum_{i=1}^{d^\prime} \mathbf{E}_{n}\left[2 \normf{G}^2 + \left( \tr{G}\right)^2 \right]\\
&\leq \frac{4}{(b-a)^2}\sum_{i=1}^{d^\prime}\left(2\normf{\dA}^2 + \normnuc{\dA}^2 \right)\mathbf{E}_{n}\left[\left( \sum_{j=1}^{n} \abs{\widehat{b}_{j}} j^2 \right)^2\right]\\
%&= \frac{4(d+2)}{(b-a)^2} \normf{\frac{\partial A}{\partial \theta}}^2  \sum_{n=1}^\infty q_n^* \left( \sum_{j=1}^n \abs{\widehat{b}_j} j^2\right)^2 \\
&\leq \frac{4}{(b-a)^2} \left(2\normf{\frac{\partial A}{\partial \theta}}^2 + \sum_{k=1}^{d^\prime}\normnuc{\dA}^2 \right)
\mathbf{E}_{n}\left[\left( \sum_{j=1}^{n} \abs{\widehat{b}_{j}} j^2 \right)^2\right].
\end{align*}
%for some constants $C_1,C_2 > 0$ which do not depend on $N$. 
%The last inequality comes from Lemma \ref{lmm:moment2} which we refer in Section \ref{sec:lmms}.
When we estimate $\psi$ using $M$ Rademacher random vectors $\{\vv^{(k)}\}_{k=1}^M$, the variance in \eqref{eq:Gvar} is reduced by $1/M$.
Hence, we have
\begin{align*}
\mathbf{E}_{n,\vv}[\psi^2] 
&\leq \frac4{(b-a)^2} \left(\frac2M\normf{\frac{\partial A}{\partial \theta}}^2 + \sum_{k=1}^{d^\prime}\normnuc{\dA}^2 \right)
\mathbf{E}_{n}\left[\left( \sum_{j=1}^{n} \abs{\widehat{b}_{j}} j^2 \right)^2\right]\\
&\leq \frac4{(b-a)^2} \left(\frac{2 L_A^2}{M} + d^\prime L_{\mathtt{nuc}}^2\right)
\mathbf{E}_{n}\left[\left( \sum_{j=1}^{n} \abs{\widehat{b}_{j}} j^2 \right)^2\right].
\end{align*}

Finally, we introduce the following lemma to bound the right-hand side, 
where its proof is given in Section \ref{sec:lmm:moment2}. %\sec{lmm:2moment}.

\begin{lemma} \label{lmm:moment2}
Suppose that $q^*_n$ is the optimal degree distribution as defined in \eqref{eq:optdist} and 
$b_j$ is the Chebyshev coefficients of analytic function $f$. 
Define the weighted coefficient $\widehat{b}_j$ as
%\begin{align*}
$\widehat{b}_j= {b_j}/({1-\sum_{i=0}^{j-1} q^*_i})$
%\end{align*}
for $j\geq 0$ and conventionally $q^*_{-1} = 0$. Then, there exists constants $C_1, C_2 > 0$ independent of $M,N$ such that
\begin{align*}
\sum_{n=1}^\infty q^*_n \left( \sum_{j=1}^n | \widehat{b}_j | j^2 \right)^2 \leq C_1 + \frac{C_2 N^4}{\rho^{2N}}.
\end{align*}
\end{lemma}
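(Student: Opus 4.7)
The plan is to exploit the piecewise geometric structure of $\{q_i^*\}$ together with the Chebyshev coefficient decay $|b_j|\leq 2U\rho^{-j}$ from \eqref{eq:decayrate}. First, I would compute the denominator $1-\sum_{i=0}^{j-1}q_i^*$ in closed form: since $q_i^*=0$ for $i<K$, it equals $1$ for $j\leq K$; and for $j\geq K+1$, a short calculation using the geometric tail $q_i^*\propto \rho^{K-i-1}$ (for $i>K$) combined with the mass at $i=K$ telescopes to $(N-K)(\rho-1)\rho^{K-j}$. Plugging into the definition of $\widehat b_j$ yields the two-regime bound
\begin{align*}
|\widehat b_j|\leq 2U\rho^{-j}\ \ (j\leq K),\qquad |\widehat b_j|\leq \bar b:=\tfrac{2U}{(N-K)(\rho-1)\rho^{K}}\ \ (j>K),
\end{align*}
where the ``high'' bound is uniform in $j$ because the decay of $|b_j|$ is exactly canceled by the blow-up of $1/(1-\sum q_i^*)$.

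Next, I would bound the partial sum $S_n:=\sum_{j=1}^n|\widehat b_j|j^2$ by splitting at $K$. The ``low'' part is dominated by the absolutely convergent constant $C:=\sum_{j\geq 1}2Uj^2\rho^{-j}$; the ``high'' part (active only when $n>K$) is at most $\bar b\sum_{j=K+1}^n j^2\leq \bar b(n-K)n^2$. So $S_n\leq C+\bar b\max(n-K,0)\,n^2$, and by $(a+b)^2\leq 2a^2+2b^2$,
\begin{align*}
S_n^2\leq 2C^2+2\bar b^2\,(n-K)^2 n^4\quad\text{for }n>K,
\end{align*}
while $S_n^2\leq 2C^2$ for $n\leq K$.

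Then, I would integrate against $q_n^*$. The $2C^2$ piece contributes at most $2C^2$. For the $\bar b^2$ piece, applying $n^4\leq 8K^4+8(n-K)^4$ reduces matters to moment sums $M_k:=\sum_{m\geq 1}q_{K+m}^*m^k$ for $k\in\{2,6\}$. Because $q_{K+m}^*=(N-K)(\rho-1)^2\rho^{-m-1}$ is geometric in $m$ with ratio $\rho^{-1}$, one obtains $M_k\leq (N-K)\gamma_k$ for an absolute constant $\gamma_k=\gamma_k(\rho)$. Hence
\begin{align*}
\sum_{n>K}q_n^*(n-K)^2 n^4\leq 8\gamma_2(N-K)K^4+8\gamma_6(N-K),
\end{align*}
and combining with $\bar b^2=\tfrac{4U^2}{(N-K)^2(\rho-1)^2\rho^{2K}}$ yields a total $\bar b^2$-contribution of $O\!\bigl(\tfrac{K^4}{(N-K)\rho^{2K}}\bigr)+O\!\bigl(\tfrac{1}{(N-K)\rho^{2K}}\bigr)$. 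Using $K\leq N$ together with $\rho^{2K}=\rho^{2N-2(N-K)}\geq \rho^{-2\lfloor \rho/(\rho-1)\rfloor}\rho^{2N}$ rewrites the dominant $K^4$ term as $O(N^4/\rho^{2N})$, while the residual is $O(1)$ and is absorbed into $C_1$.

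The main obstacle will be handling the two regimes of $K$ uniformly: when $K=0$ (which forces $N\leq \rho/(\rho-1)$, so $N-K=N$), the factor $1/(N-K)$ carries a $1/N$, but $K^4=0$ and the whole bound collapses to an $O(1)$ term absorbable into $C_1$; when $K>0$, $N-K=\lfloor\rho/(\rho-1)\rfloor$ is an absolute constant, so $1/(N-K)$ and the companion $K^4/\rho^{2K}$ convert cleanly into $N^4/\rho^{2N}$ up to a constant depending only on $\rho$ and $U$. Carefully tracking these constants---verifying they depend only on $\rho$ and $U$ and thus are independent of $M$ and $N$---is the bookkeeping one must do, but is routine given the closed forms established in the first step.
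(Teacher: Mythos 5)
Your proposal is correct and follows essentially the same route as the paper: compute $1-\sum_{i=0}^{j-1}q_i^*$ in closed form, split the partial sum at $K$ into a uniformly bounded low part (the paper's $\Lambda$) and a polynomially growing high part weighted by the geometric tail of $\{q_i^*\}$, and finish with the bookkeeping $K=O(N)$, $N-K=O(1)$. Your use of $(a+b)^2\leq 2a^2+2b^2$ and moment sums $M_k$ in place of the paper's exact expansion of the square is only a cosmetic simplification, not a different argument.
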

To sum up, we conclude that
\begin{align*}
\mathbf{E}_{n,\vv}[\psi^2] 
&\leq \left(\frac{2 L_A^2}{M} + d^\prime L_{\mathtt{nuc}}^2\right)
\left( C_1 + \frac{C_2 N^4}{\rho^{2N}}\right)
\end{align*}
for some constant $C_1, C_2 > 0$.
This completes the proof of Lemma \ref{lmm:varbound}.

\subsubsection{Proof of Lemma \ref{lmm:amort}} \label{sec:lmm:amort}

We consider more general case in which $A \in \SM$ is a function of parameter $\theta = [\theta_1, \dots, \theta_{d^\prime}]$, and our goal is to derive a closed form of $\frac{\partial}{\partial \theta_i} \vv^\top \pn{A} \vv$ with allowing only vector operations.
We begin by observing that for any polynomial $p_n$ and symmetric matrix $A \in \SM$,
the derivative of $\mathbb{E}_{\vv}[ \vv^\top \pn{A} \vv]$ can be expressed by a simple formulation, that is, 
$$
\frac{\partial}{\partial \theta_i} \mathbb{E}_{\vv}[ \vv^\top \pn{A} \vv]
=
\frac{\partial}{\partial \theta_i} \tr{p_n(A)} = p_n'(A) \dA.$$
However, it does not holds that
\begin{align*}
\frac{\partial}{\partial \theta_i} \vv^\top p_n(A) \vv 
= \frac{\partial}{\partial \theta_i} \tr{p_n(A) \vv \vv^\top }
\neq p_n'(A) \vv \vv^\top \frac{\partial A}{\partial \theta_i}.
\end{align*}
for some vector $\vv \in \mathbb{R}^d$. 
This is because of %matrix multiplication is not commutative 
$\frac{\partial}{\partial \theta_i} \tr{A^j \vv \vv^\top} \neq j A^{j-1} \vv \vv^\top \dA$ in general.

If $p_n(x)$ is the truncated Chebyshev series, i.e., $p_n(x) = \sum_{j=0}^n b_j T_j(x)$,
we can compute $\frac{\partial}{\partial \theta_i} \vv^\top p_n(A)\vv$ efficiently using the recursive relation of Chebyshev polynomials, that is, 
\begin{align*}
T_{j+1}(x) = 2xT_j(x) - T_{j-1}(x),
\end{align*}
where $T_j(x)$ is the Chebyshev polynomial of the first-kind with degree $j$.
%This recursion also holds for matrix $A$, 
%thus, it holds that 
%$\ww_{j+1} = 2 A \ww_j - \ww_{j-1}$ 
%where $\ww_j := T_j(A)\vv$ for $j\geq 0$.
Let $\ww_j := T_j(A)\vv$ for $j\geq 0$, and we have that
%that is, $T_{j+1}(A) = 2AT_j(A) - T_{j-1}(A)$. Denote $\ww_j := T_j(A)\vv$ for $j\geq 0$, and we obtain that
%Putting all together, we obtain that
\begin{align} \label{eq:lmm4_dpn}
\frac{\partial}{\partial \theta_i} \vv^\top p_n(A)\vv
&= 
\frac{\partial}{\partial \theta_i}\left(\sum_{j=0}^n b_j \vv^\top T_j(A)\vv \right)
=
\sum_{j=0}^n b_j \vv^\top \left(\frac{\partial}{\partial \theta_i}  T_j(A)\vv \right) 
= \sum_{j=0}^n b_j \vv^\top \frac{\partial \ww_j}{\partial \theta_i}.
\end{align}
In the right hand side,
$\vv^\top (\frac{\partial \ww_j}{\partial \theta_i})$ 
can be computed using the recursion $\ww_{j+1} = 2 A \ww_j - \ww_{j-1}$:
\begin{align*}
\vv^\top \frac{\partial \ww_{j+1}}{\partial \theta_i}
&= 
\vv^\top \frac{\partial}{\partial \theta_i} \left( 2A \ww_j - \ww_{j-1}\right) 
= 2 \ \vv^\top \frac{\partial}{\partial \theta_i} \left({A \ww_j}\right)  - \vv^\top \frac{\partial \ww_{j-1}}{\partial \theta_i} \\
&= 2 \ \left( \vv^\top \frac{\partial A}{\partial \theta_i} \ww_{j} + \vv^\top A \frac{\partial \ww_j}{\partial \theta_i} \right)  - 
\vv^\top \frac{\partial \ww_{j-1}}{\partial \theta_i}
\end{align*}
where $\vv^\top \frac{\partial \ww_{1}}{\partial \theta_i}=\vv^\top \frac{\partial A}{\partial \theta_i} \vv$ and $\vv^\top \frac{\partial \ww_{0}}{\partial \theta_i} = 0.$
%\begin{align*}
%\vv^\top \frac{\partial \ww_{1}}{\partial \theta_i}
%= 
%\vv^\top \frac{\partial A}{\partial \theta_i} \vv, \quad
%\vv^\top \frac{\partial \ww_{0}}{\partial \theta_i} = 0.
%\end{align*}
Applying induction on $j\geq 1$, we can obtain that %the explicit expression as
\begin{align} \label{eq:lmm4_dw}
\vv^\top \frac{\partial \ww_{j+1}}{\partial \theta_i}
= \sum_{k=0}^j \left(2 - \mathds{1}_{k=0}\right)\ww_k^\top \frac{\partial A}{\partial \theta_i} \yy_{j-k},
\end{align}
where $\yy_{j+1}= 2A\yy_j - \yy_{j-1} = 2\ww_{j+1} + \yy_{j-1}, \yy_1 = 2A \vv$ and $\yy_0 = \vv$. 
\footnote{
Indeed, $\yy_j = U_j(A) \vv$ for $j \geq 1$, where $U_j(x)$ is the $j$-th Chebyshev polynomial of the second-kind.
% with degree $j$. Note that $\frac{d}{dx} T_j(x) = j U_{j-1}(x)$.
}
Putting \eqref{eq:lmm4_dw} to \eqref{eq:lmm4_dpn}, we get
\begin{align} \label{eq:lmm4_main}
\frac{\partial}{\partial \theta_i} \vv^\top p_n(A) \vv &= \sum_{j=0}^{n-1} b_{j+1} \vv^\top \frac{\partial \ww_{j+1}}{\partial \theta_i}
= \sum_{j=0}^{n-1} b_{j+1} \left( \sum_{k=0}^{j}\left(2 - \mathds{1}_{k=0}\right)\ww_k^\top \frac{\partial A}{\partial \theta_i} \yy_{j-k}\right).
\end{align}
In case when $A = \theta \theta^\top + \varepsilon I$ and $\theta \in \mathbb{R}^{d \times r}$,
it holds that for $\ell=1, \dots, d$ and $m=1,\dots,r$,
\begin{align} \label{eq:lmm4_dtheta}
\frac{\partial A}{\partial \theta_{\ell,m}} = \mathbf{e}_\ell \theta_{:,m}^\top + \theta_{:,m} \mathbf{e}_\ell^\top,
\end{align}
where $\theta_{:,m} \in \mathbb{R}^d$ is the $m$-th column of $\theta$ and 
$\mathbf{e}_{\ell} \in \mathbb{R}^d$ is a unit vector with the index $\ell$.
Finally, we substitute \eqref{eq:lmm4_dtheta} to \eqref{eq:lmm4_main} to have
\begin{align*}
\left[\frac{\partial}{\partial \theta} \vv^\top p_n(A) \vv\right]_{\ell, m}
&=
\frac{\partial}{\partial \theta_{\ell, m}} \vv^\top p_n(A) \vv
= \sum_{j=0}^{n-1} b_{j+1} \left( \sum_{k=0}^{j}\left(2 - \mathds{1}_{k=0}\right)\ww_k^\top \left(\frac{\partial A}{\partial \theta_{\ell,m}} \right)\yy_{j-k}\right) \\
&=
\sum_{j=0}^{n-1} b_{j+1} \left( \sum_{k=0}^{j}\left(2 - \mathds{1}_{k=0}\right)\ww_k^\top \left(\mathbf{e}_\ell \theta_{:,m}^\top + \theta_{:,m} \mathbf{e}_\ell^\top\right) \yy_{j-k}\right) \\
%&=
%\sum_{j=0}^{n-1} b_{j+1} \left( \sum_{k=0}^{j}\left(2 - \mathds{1}_{k=0}\right) \left(\ww_k^\top \mathbf{e}_\ell \theta_{:,m}^\top \yy_{j-k} + \ww_k^\top \theta_{:,m} \mathbf{e}_\ell^\top \yy_{j-k} \right) \right) \\
&\stackrel{(\dagger)}{=}
\sum_{j=0}^{n-1} b_{j+1} \left( \sum_{k=0}^{j}\left(2 - \mathds{1}_{k=0}\right) \left(\mathbf{e}_\ell^\top \ww_k \yy_{j-k}^\top \theta_{:,m} + \mathbf{e}_\ell^\top \yy_{j-k} \ww_k^\top \theta_{:,m} \right) \right)\\
&=\mathbf{e}_\ell^\top  \Bigg[
\sum_{j=0}^{n-1} b_{j+1} \left( \sum_{k=0}^{j}\left(2 - \mathds{1}_{k=0}\right) \left(\ww_k \yy_{j-k}^\top  + \yy_{j-k} \ww_k^\top \right) \right)\Bigg] \theta_{:,m} \\
&\stackrel{(\ddagger)}{=}
\mathbf{e}_\ell^\top  \Bigg[
 \sum_{j=0}^{n-1} b_{j+1} \left( \sum_{k=0}^{j}\left(2 - \mathds{1}_{k=0}\right) 2 \ \ww_k \yy_{j-k}^\top \right)\Bigg] \theta_{:,m} \\
&=\mathbf{e}_\ell^\top  \Bigg[
2 \sum_{j=0}^{n-1} b_{j+1} \left( \sum_{k=0}^{j}\left(2 - \mathds{1}_{k=0}\right) \ww_k \yy_{j-k}^\top \right) \theta \Bigg]  \mathbf{e}^\prime_m \\
&= \Bigg[
2 \sum_{k=0}^{n-1} \left(2 - \mathds{1}_{k=0}\right) \ww_k \Bigg( \sum_{j=k}^{n-1} b_{j+1} \yy_{j-k}\Bigg)^\top \theta \Bigg]_{\ell, m}
\end{align*}
where $\mathbf{e}^\prime_m \in \mathbb{R}^r$ is the unit vector with index $m$ satisfying with $\theta_{:,m} = \theta \mathbf{e}^\prime_m$.
The equality $(\dagger)$ holds from that $\mathbf{a}^\top \mathbf{b} = \mathbf{b}^\top \mathbf{a}$ for any two vectors $\mathbf{a}$ and $\mathbf{b}$, and for the equality $(\ddagger)$ it is easy to check that
$
\sum_{k=0}^{j}\left(2 - \mathds{1}_{k=0}\right) \ww_k \yy_{j-k}^\top
=
\sum_{k=0}^{j}\left(2 - \mathds{1}_{k=0}\right) \yy_{j-k} \ww_k^\top
$
using $2 \ww_j = \yy_j - \yy_{j-2}$ for $j \geq 2$. Thus, 
\begin{align*}
\nabla_\theta \vv^\top p_n(A) \vv
=
2 \sum_{k=0}^{n-1} \left(2 - \mathds{1}_{k=0}\right) \ww_k \Bigg( \sum_{j=k}^{n-1} b_{j+1} \yy_{j-k}\Bigg)^\top \theta
\end{align*}
This completes the proof of Lemma \ref{lmm:amort}.

\subsubsection{Proof of Lemma \ref{lmm:smooth}}
The proof of Lemma \ref{lmm:smooth} is similar with the proof of Lemma \ref{lmm:varbound}.
We recall the formulation 
\begin{align*}
\psi_i := \frac{\partial}{\partial \theta_i} \vv^\top \wpn{A(\theta)} \vv 
= \frac{2}{b-a}\vv^\top G \vv
\end{align*}
where
\begin{align*}
G = \sum_{j=0}^{n-1} \widehat{b}_{j+1} \left( 2\sum_{r=0}^{j} {}^\prime T_r(\widetilde{A}) \dA U_{j-r}(\widetilde{A}) \right)
\end{align*}
and $\widetilde{A} = \frac{2}{b-a} A(\theta) - \frac{b+a}{b-a} I$. 
Define that $\Delta G := G(\theta) - G(\theta^\prime)$. 
Our goal is to find some $\beta \in \mathbb{R}$ such that $\mathbf{E}_{n,\vv}[(\vv^\top \Delta G\vv)^2]\leq \beta^2 (\theta_i - \theta_i^\prime)^2$.
For notational simplicity, 
we write that
\begin{align*}
\Delta T_r &:= T_r(\widetilde{A}) - T_r(\widetilde{A}^\prime) = T_r - T_r^\prime, \qquad \Delta U_j := U_j(\widetilde{A}) - U_j(\widetilde{A}^\prime) = U_j - U_j^\prime, \\
\Delta A &:= \frac{2}{b-a} \left(A(\theta) - A(\theta^\prime) \right), \ \ \
\Delta \frac{\partial A}{\partial \theta} := \frac{\partial A(\theta)}{\partial \theta} - \frac{\partial A(\theta^\prime)}{\partial \theta}, \ \ \
\Delta \theta = \theta - \theta^\prime.
\end{align*}
and $\Delta G$ can be expressed as
\begin{align*}
%\Delta G :=\sum_{j=0}^{n-1} \widehat{b}_{j+1} \left( 2 \sum_{r=0}^j {}^\prime \Omega(r,j)\right)
\Delta G 
%&=\sum_{j=0}^{n-1} \widehat{b}_{j+1} \left( 2 \sum_{r=0}^j {}^\prime T_r(\widetilde{A}) \dA U_{j-r}(\widetilde{A}) - T_r(\widetilde{A}^\prime) \dA^\prime U_{j-r}(\widetilde{A}^\prime)\right)\\
&= \sum_{j=0}^{n-1} \widehat{b}_{j+1} \left( 2 \sum_{r=0}^j {}^\prime T_r \dA U_{j-r} - T_r^\prime \dA^\prime U_{j-r}^\prime\right).
\end{align*}

%Now we are ready to prove the lemma. 
We use similar procedure in the proof of Lemma \ref{lmm:varbound} to obtain 
\begin{align} \label{eq:delG}
\frac{(b-a)^2}{4}\mathbf{E}_{n,\vv}\left[ \left( \psi_i - \psi_i^\prime \right)^2 \right]
&=\mathbf{E}_{n,\vv}\left[ \left( \vv^\top \Delta G \vv \right)^2 \right]
=
\mathbf{E}_{n}\left[\mathbf{E}_{\vv}\left[ \left( \vv^\top \Delta G \vv \right)^2 \big| n \right]\right] \nonumber \\
&= 
\mathbf{E}_n \left[ \mathrm{Var}_\vv[ \vv^\top \Delta G \vv\big | n] + \mathbf{E}_\vv\left[\vv^\top \Delta G \vv \big | n \right]^2 \right] \nonumber \\
&\leq
\mathbf{E}_n \left[ 2 \normf{\Delta G}^2 + \left( \tr{\Delta G}\right)^2 \right].
\end{align}

For the first term in \eqref{eq:delG}, we use the triangle inequality to obtain
\begin{align*}
\normf{\Delta G} 
\leq 
\sum_{j=0}^{n-1} \abs{\widehat{b}_{j+1}} \left( 2 \sum_{r=0}^j {}^\prime 
\underbrace{\left\|T_r \dA U_{j-r} - T_r^\prime \dA^\prime U_{j-r}^\prime\right\|_F}_{(\ddagger)}\right)
\end{align*}
and consider that
\begin{align*} 
%T_r(\widetilde{A}) \dA U_{j-r}(\widetilde{A})
%&\bigg \| T_r(\widetilde{A}) \frac{\partial A(\theta)}{\partial \theta_i} U_{j-r}(\widetilde{A})
%- 
%T_r(\widetilde{A}^\prime) \frac{\partial A(\theta^\prime)}{\partial \theta_i} U_{j-r}(\widetilde{A}^\prime) \bigg\|_F \\
%&\normf{T_r \dA U_{j-r} - T_r^\prime \dA^\prime U_{j-r}^\prime}\\
(\ddagger) &\leq \normf{\left(T_r - T_r^\prime\right) \frac{\partial A}{\partial \theta_i} U_{j-r}} 
+ \normf{T_r \frac{\partial A}{\partial \theta_i}\left(U_{j-r} - U_{j-r}^\prime\right)} 
+ \normf{T_r^\prime \left(\dA - \dA^\prime\right)U_{j-r}^\prime} \\
&\leq 
\normt{\Delta T_r} \normf{\frac{\partial A}{\partial \theta_i}} \normt{ U_{j-r}}
+
\normt{T_r} \normf{\frac{\partial A}{\partial \theta_i}} \normt{\Delta U_{j-r}}
+\normt{T_r^\prime} \normf{\Delta\dA} \normt{ U_{j-r}} \\
&\leq
\normt{\Delta A} r^2  \normf{\frac{\partial A}{\partial \theta_i}} (j-r+1) +  \normf{\frac{\partial A}{\partial \theta_i}}\frac{(j-r)(j-r+1)(j-r+2)}3 \normt{\Delta A} + \normf{\Delta \frac{\partial A}{\partial \theta_i}} (j-r+1)
\end{align*}
where the first inequality is from the triangle inequality of $\normf{\cdot}$ and the second inequality holds from 
$\normf{XY} \leq \normt{X}\normf{Y}$ for multiplicable matrices $X,Y$ and the last is from $\norm{T_i(\widetilde{A})}_2 \leq 1$, $\norm{U_i(\widetilde{A})}_2 \leq i+1$ for $i \geq 0$ 
and
\begin{align} \label{eq:dU}
\normt{U_i(X+E) - U_i(X)} \leq \frac{i(i+1)(i+2)}3 \normt{E}
\end{align}
for $X,E \in \SM$ satisfying with $\normt{X+E}, \normt{X} \leq 1$ (see Section \ref{sec:lmm:others}).

Summing $(\ddagger)$ for all $r = 0, 1, \dots, j$, we have 
\begin{align*}
\normf{\Delta G} 
&\leq \sum_{j=0}^{n-1} \abs{\widehat{b}_{j+1}}
\left(\normt{\Delta A} \normf{\frac{\partial A}{\partial \theta_i}} \frac{j(j+1)^2(j+2)}{3}+
\normf{\Delta \frac{\partial A}{\partial \theta_i}} (j+1)^2\right) \\
&\leq \max\left(\normt{\Delta A} \normf{\frac{\partial A}{\partial \theta_i}},\normf{\Delta \frac{\partial A}{\partial \theta_i}}\right) 
\sum_{j=0}^{n-1} \abs{\widehat{b}_{j+1}} \left( \frac{j(j+1)^2(j+2)}{3} + (j+1)^2\right) \\
&\leq \frac12 \max\left(\normt{\Delta A} \normf{\frac{\partial A}{\partial \theta_i}},\normf{\Delta \frac{\partial A}{\partial \theta_i}}\right) 
\sum_{j=0}^{n-1} \abs{\widehat{b}_{j+1}} {(j+1)^4}.
%&\leq \left(\normt{\Delta A} \normf{\frac{\partial A}{\partial \theta_i}}+\normf{\Delta \frac{\partial A}{\partial \theta_i}}\right) \sum_{j=0}^{n-1} \abs{\widehat{b}_{j+1}} \frac{(j+1)^4}2 
\end{align*}
If one estimates $\psi$ and $\psi^\prime$ using $M$ Rademacher random vectors, the variance of $\vv^\top \Delta G\vv$ is reduced by $1/M$ so that we have
\begin{align*}
2 \normf{\Delta G}^2 
\leq 
\frac1{2M} \max\left(\normt{\Delta A}^2 \normf{\frac{\partial A}{\partial \theta_i}}^2,\normf{\Delta \frac{\partial A}{\partial \theta_i}}^2\right) 
\left( \sum_{j=1}^{n} \abs{\widehat{b}_{j}} {j^4}\right)^2
\end{align*}

For the second term in \eqref{eq:delG}, it holds that
\begin{align*}
\tr{ \Delta G} 
= \tr{ \dA\left( \widehat{p}_n^\prime(A) - \widehat{p}_n^\prime(A^\prime) \right)}
&\leq \normf{\dA} \left\|\widehat{p}_n^\prime(A) - \widehat{p}_n^\prime (A^\prime)\right\|_F \\
&\leq \normf{\dA} \sum_{j=1}^n \abs{\widehat{b}_{j}} j \left\| U_{j-1}(\widetilde{A}) - U_{j-1}(\widetilde{A}^\prime)\right\|_F \\
&\leq \normf{\dA} \left\| \Delta A\right\|_F \sum_{j=1}^n \abs{\widehat{b}_{j}} \frac{(j^2-1)j^2}{3} \\
&\leq \normf{\dA} \frac{\left\| \Delta A\right\|_F}{3} \sum_{j=1}^n \abs{\widehat{b}_{j}} j^4.
\end{align*}
where 
the inequality in the first line holds from matrix version Cauchy-Schwarz inequality,
the inequality in the second line holds from 
$\widehat{p}_n^\prime(x) = \left(\sum_{j=0}^n \widehat{b}_j T_j(x) \right)^\prime = \sum_{j=1}^{n} \widehat{b}_j j U_{j-1}(x)$ 
and inequality in the third line holds from \eqref{eq:dU}.

Putting all together into \eqref{eq:delG}, we obtain that
\begin{align*}
&\mathbf{E}_{n,\vv}\left[ \left( \psi_i - \psi_i^\prime \right)^2 \right] 
= \mathbf{E}_n \left[ 2 \normf{\Delta G}^2 + \left(\tr{\Delta G}\right)^2\right]\\
&\leq 
\Bigg(
\frac1{2M}\max\left(\normt{\Delta A}^2 \normf{\frac{\partial A}{\partial \theta_i}}^2,\normf{\Delta \frac{\partial A}{\partial \theta_i}}^2\right) + 
\normf{\dA}^2 \frac{\normf{\Delta A}^2}{9}
\Bigg)
\mathbf{E}_n\left[\left( \sum_{j=1}^n \abs{\widehat{b}_{j}} j^4\right)^2\right] \\
&\leq
\Bigg(
% \left(
% \frac{\normf{{\partial A}/{\partial \theta_i}}^2}{2}  + 
% \frac{\tr{\partial A / \partial \theta_i}^2}{9}
% \right) 
\left( \frac1{2M} + \frac19 \right) \normf{\dA}^2
\normf{\Delta A}^2 + 
\frac1{2M}\normf{\Delta \frac{\partial A}{\partial \theta_i}}^2
\Bigg)
\mathbf{E}_n\left[\left( \sum_{j=1}^n \abs{\widehat{b}_{j}} j^4\right)^2\right] \\
&\leq
\left( \left( \frac1{2M} + \frac19 \right) \normf{\dA}^2 \frac{4 L_A^2 \normt{\Delta \theta}^2}{(b-a)^2}  + 
\frac1{2M}\normf{\Delta \frac{\partial A}{\partial \theta_i}}^2%\normf{\Delta \frac{\partial A}{\partial \theta_i}}^2
\right)
\mathbf{E}_n\left[\left( \sum_{j=1}^n \abs{\widehat{b}_{j}} j^4\right)^2\right] 
\end{align*}
where the inequality in the second line holds from $\max(a,b) \leq a + b$ for $a,b\in \mathbb{R}^+$ 
and the inequality in the third line holds from the Lipschitz continuity on $A$ (assumption $\mathcal{A}(2)$), formally, 
\begin{align*}
\norm{A(\theta) - A(\theta^\prime)}_2
\leq 
\norm{A(\theta) - A(\theta^\prime)}_F \leq L_A \normt{\theta - \theta^\prime}.
\end{align*}
%and $\beta_A$-smoothness of $A(\theta)$, i.e., 

Summing the above for all $i = 1,2, \dots, d^\prime$
and using that $\normf{{\partial A}/{\partial \theta}} \leq L_A$ and 
$\normf{\Delta ( \partial A / \partial \theta )} \leq \beta_A \normt{\Delta \theta}$,
we get
\begin{align*}
%&2\normt{\Delta A}^2 \normf{\frac{\partial A}{\partial \theta_i}}^2 + 2\normf{\Delta \frac{\partial A}{\partial \theta_i}}^2 + 
%\tr{\dA}^2 \frac{\left\| \Delta A\right\|_2^2}{9} \\
%&\leq
\mathbf{E}_{n,\vv}\left[ \norm{\psi - \psi^\prime}_2^2\right]
\leq
D_0 \left(\frac{L_A^4 + \beta_A^2}{M} + L_A^4 \right) \normt{\Delta \theta}^2
\mathbf{E}_n\left[\left( \sum_{j=1}^n \abs{\widehat{b}_{j}} j^4\right)^2\right]
\end{align*}
for some constant $D_0>0$.

To bound the right-hand side, we introduce the following lemma, whose proof is in Section \ref{sec:lmm:moment4}.
\begin{lemma} \label{lmm:moment4}
Suppose that $q^*_n$ is the optimal degree distribution as defined in \eqref{eq:optdist} and 
$b_j$ is the Chebyshev coefficients of analytic function $f$. 
Define the weighted coefficient $\widehat{b}_j$ as
%\begin{align*}
$\widehat{b}_j= {b_j}/({1-\sum_{i=0}^{j-1} q^*_i})$
%\end{align*}
for $j\geq 0$ and conventionally $q^*_{-1} = 0$. Then, there exists constants $D_1^\prime, D_2^\prime > 0$ independent of $M,N$ such that
\begin{align*}
\sum_{n=1}^\infty q^*_n \left( \sum_{j=1}^n | \widehat{b}_j | j^4 \right)^2 \leq D_1^\prime + \frac{D_2^\prime N^8}{\rho^{2N}}.
\end{align*}
\end{lemma}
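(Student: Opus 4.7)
The plan is to mirror the proof of Lemma \ref{lmm:moment2} with the polynomial factor $j^2$ replaced by $j^4$; this will propagate through the squaring and the moment computation to give $n^8$ (rather than $n^4$) as the relevant polynomial in $n$, and accordingly an $N^8$ prefactor in place of $N^4$. The three ingredients are the explicit form of $q_i^*$ from Theorem \ref{thm:optdist}, the exponential decay $\abs{b_j}\le 2U\rho^{-j}$ from \eqref{eq:decayrate}, and the geometric tail behaviour of $q_n^*$.

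\textbf{Step 1 (regime analysis for $\widehat{b}_j$).} By Theorem \ref{thm:optdist}, $q_i^*=0$ for $i<K$, so $1-\sum_{i=0}^{j-1}q_i^*=1$ and $\widehat{b}_j=b_j$ whenever $j\le K$; in particular $|\widehat{b}_j|\le 2U\rho^{-j}$ there. For $j>K$, the explicit geometric form $q_i^*=(N-K)(\rho-1)^2\rho^{-i-1+N-K}$ sums to $1-\sum_{i=0}^{j-1}q_i^*=\sum_{i=j}^\infty q_i^*=(N-K)(\rho-1)\rho^{N-K-j}$. The factors $\rho^{-j}$ cancel between this denominator and the numerator bound on $b_j$, giving the uniform-in-$j$ estimate $|\widehat{b}_j|\le \tfrac{2U}{(N-K)(\rho-1)\rho^{N-K}}$ for $j>K$.

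\textbf{Step 2 (splitting the inner sum).} Write $\sum_{j=1}^n |\widehat{b}_j| j^4 = S_1(n)+S_2(n)$ with $S_1(n)=\sum_{j=1}^{\min(n,K)}|b_j|j^4$ and $S_2(n)=\sum_{j=K+1}^n |\widehat{b}_j|j^4$ (zero if $n\le K$), and apply $(a+b)^2\le 2a^2+2b^2$. The head term satisfies $S_1(n)\le 2U\sum_{j=1}^\infty j^4\rho^{-j}=:C_0$, a finite constant that will contribute to $D_1'$. Using Step 1 and the crude bound $\sum_{j=K+1}^n j^4 \le (n-K)n^4$, the tail term obeys
\begin{equation*}
S_2(n)\ \le\ \frac{2U\,(n-K)\,n^4}{(N-K)(\rho-1)\,\rho^{N-K}}\qquad (n>K).
\end{equation*}

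\textbf{Step 3 (taking the $n$-expectation).} Substituting the explicit form of $q_n^*$ for $n>K$ and changing variables $m=n-K$,
\begin{equation*}
\sum_{n>K} q_n^* (n-K)^2 n^8 \ =\ (N-K)(\rho-1)^2\rho^{N-2K-1}\sum_{m=1}^\infty m^2(m+K)^8\rho^{-m}.
\end{equation*}
Expanding $(m+K)^8\le 2^7(m^8+K^8)$, the inner sum is bounded by a fixed constant (depending only on $\rho$) times $(1+K^8)=O(N^8)$ since $K\le N$. Combining with the $\tfrac{4U^2}{(N-K)^2(\rho-1)^2\rho^{2(N-K)}}$ prefactor coming from $S_2^2$ yields a contribution of order $N^8/\rho^{N+1}$, which can be absorbed into a term of the form $D_2' N^8\rho^{-2N}$ by enlarging the constant $D_2'$ (one treats the two cases $K=0$ and $K=N-\lfloor\rho/(\rho-1)\rfloor$ separately and takes the maximum constant). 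Together with the $2C_0^2$ coming from $S_1^2\le C_0^2$ summed against $\sum_n q_n^*=1$, this establishes the claimed inequality.

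\textbf{Main obstacle.} The routine but fiddly part is the bookkeeping with the two cases of $K$: when $N\le\lfloor\rho/(\rho-1)\rfloor$ we have $K=0$ and $N-K=N$, whereas for large $N$ the quantity $N-K$ is a constant $\lfloor\rho/(\rho-1)\rfloor$; the geometric-sum bounds behave differently in the two regimes and one must check that a single pair of constants $D_1',D_2'$ works uniformly. The only place where Lemma \ref{lmm:moment4} differs substantively from Lemma \ref{lmm:moment2} is the power count: the factor $j^4$ (versus $j^2$) raises the required moment of $n$ from $n^4$ to $n^8$, hence the $N^8$ scaling in the stated bound.
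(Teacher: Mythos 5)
Your proof follows essentially the same route as the paper's: the paper's own argument for this lemma is literally ``repeat the proof of Lemma \ref{lmm:moment2} with $j^2$ replaced by $j^4$,'' and your head/tail split at $j=K$, the use of $\abs{b_j}\le 2U\rho^{-j}$, and the geometric tail of $q_n^*$ are exactly the ingredients used there (the paper expands the square exactly rather than via $(a+b)^2\le 2a^2+2b^2$, but that is cosmetic). Two points need attention, one of which is a genuine error in your final step.

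First, you computed the tail of $q^*$ from the main-text formula \eqref{eq:optdist}, whose exponent $\rho^{-i-1+N-K}$ is inconsistent with the derivation in the appendix: equation \eqref{eq:optq} gives $q_i^*=(N-K)(\rho-1)^2\rho^{-(i-K)-1}$ for $i>K$, and only this version sums to $1$. With the correct tail one gets $1-\sum_{i=0}^{j-1}q_i^*=(N-K)(\rho-1)\rho^{K-j}$, hence $\abs{\widehat b_j}\le 2U/\bigl((N-K)(\rho-1)\rho^{K}\bigr)$ for $j>K$, and your Step 3 then produces a contribution of order $N^8\rho^{-2K}=O(N^8\rho^{-2N})$ (using that $N-K=\lfloor\rho/(\rho-1)\rfloor=O(1)$ when $K>0$), which is exactly the claimed form. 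Second, and this is the step that fails as written: with your version of the bound you arrive at a term of order $N^8\rho^{-(N+1)}$ and assert it ``can be absorbed into $D_2'N^8\rho^{-2N}$ by enlarging $D_2'$.'' That is false --- the required constant would be $\rho^{N-1}$, which is unbounded in $N$. The lemma's conclusion survives because $N^8\rho^{-(N+1)}\le\sup_{N\ge 1}N^8\rho^{-(N+1)}<\infty$, so this term belongs in $D_1'$, not in the $N^8\rho^{-2N}$ part. Once either fix is made (use the correct tail of $q^*$, or route the offending term into $D_1'$), your argument is complete and coincides with the paper's intended proof.
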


Therefore, we obtain the result that
\begin{align} \label{eq:betapsi}
\mathbf{E}_{n,\vv}\left[ \norm{\psi - \psi^\prime}_2^2\right] \leq \beta^2 \normt{\theta - \theta^\prime}^2
\end{align}
where
\begin{align*}
\beta^2 := \left( \frac{L_A^4 + \beta_A^2}{M} + L_A^4 \right) \left( D_1 + \frac{D_2 N^8}{\rho^{2N}}\right)
\end{align*}

Under the assumption that $g(\theta)$ is $\beta_g$-smooth function (assumptio $\mathcal{A}(2)$,), we have that
\begin{align} \label{eq:betag}
\normt{ \nabla g(\theta) - \nabla g(\theta^\prime)}^2 \leq \beta_g^2 \normt{\theta - \theta^\prime}^2.
\end{align}
Summing both \eqref{eq:betapsi} and \eqref{eq:betag}, it yields that
\begin{align*}
\mathbf{E}_{n,\vv}\left[ \normt{\psi - \psi^\prime}^2 + \normt{\nabla g(\theta) - \nabla g(\theta^\prime)}^2\right]
\leq \left(\beta^2 + \beta_g^2\right) \normt{\theta - \theta^\prime}^2.
\end{align*}
Using $\norm{a + b} \leq 2(\norm{a}^2 + \norm{b}^2)$ again, we conclude that
\begin{align*}
\mathbf{E}_{n,\vv}\left[ \normt{\psi + \nabla g(\theta) - \psi^\prime  - \nabla g(\theta^\prime)}^2\right]
\leq 2 \left(\beta^2 + \beta_g^2\right) \normt{\theta - \theta^\prime}^2.
\end{align*}
This completes the proof of Lemma \ref{lmm:smooth}.

\subsubsection{Proof of Lemma \ref{lmm:moment2}} \label{sec:lmm:moment2}
Recall that the optimal degree distribution as 
\begin{align*}
q_i^{*} = 
\begin{dcases}
0 &\text{for } \ i < K \\
1 - {(N-K)\left(\rho-1\right)}{\rho^{-1}} &\text{for } \ i = K \\
{(N-K)(\rho-1)^2}{\rho^{-i-1+K}} &\text{for } \ i > K.
\end{dcases}
\end{align*}
where $K = \max(0, N - \lfloor \frac{\rho}{\rho-1} \rfloor)$.
We first use the upper bound on the coefficients from \eqref{eq:decayrate}, i.e., $\abs{{b}_j} \leq 2U / \rho^j$ to obtain
\begin{align} \label{eq:2moment}
\sum_{n=1}^\infty q^*_n \left( \sum_{j=1}^n | \widehat{b}_j | j^2 \right)^2
=
\sum_{n=K}^\infty q^*_n \left( \sum_{j=1}^n | \widehat{b}_j | j^2 \right)^2
\leq
4U^2\sum_{n=K}^\infty q^*_n \left( \sum_{j=1}^n \frac{j^2}{(1 - \sum_{i=0}^{j-1} q_i^*)\rho^j}\right)^2
\end{align}
%Consider the case when $n = K$ and 
To express \eqref{eq:2moment} more simple, we define that %the case of $n=K$ as
\begin{align*}
\Lambda := \sum_{j=1}^K
\frac{j^2}{(1 - \sum_{i=0}^{j-1} q_i^*)\rho^j}
= 
\sum_{j=1}^K \frac{j^2}{\rho^j}
\leq 
\frac{\rho(\rho+1)}{(\rho - 1)^3}
\end{align*}
which equals to the second term in the summation \eqref{eq:2moment} when $n=K$.
For $n \geq K + i, i \geq 1$, we get
\begin{align} \label{eq:lmm9_eq1}
\sum_{j=1}^{K+i} \frac{j^2}{(1 - \sum_{i=0}^{j-1} q_i^*)\rho^j}
= \Lambda + \frac{\sum_{j=1}^i (K+j)^2}{(N-K)(\rho-1)\rho^{K}}.
\end{align}
Putting $q_i^*$ and \eqref{eq:lmm9_eq1} to the right hand side of \eqref{eq:2moment}, we have
%Therefore, \eqref{eq:2moment} can be written as
\begin{align*}
\left(1 - {(N-K)\frac{\rho-1}{\rho}}\right) &\Lambda^2
+ \left(N-K\right)\left(\frac{\rho-1}{\rho}\right)^2\left(\Lambda + \frac{(K+1)^2}{(N-K)(\rho-1)\rho^{K}}\right)^2 \\
&+ \left(N-K\right)\left(\frac{\rho-1}{\rho}\right)^2 \frac1\rho \left(\Lambda + \frac{\sum_{j=1}^2 (K+j)^2}{(N-K)(\rho-1)\rho^{K}}\right)^2 \\
&+ \left(N-K\right)\left(\frac{\rho-1}{\rho}\right)^2 \frac1{\rho^2} \left(\Lambda + \frac{\sum_{j=1}^3 (K+j)^2}{(N-K)(\rho-1)\rho^{K}}\right)^2 \\
&+ \cdots.
\end{align*}
Rearranging all terms with respect to $\Lambda$, we obtain that
\begin{align*}
%&\left(1 - {(N-K)\frac{\rho-1}{\rho}} + \left(N-K\right)\left(\frac{\rho-1}{\rho}\right)^2 \sum_{i=0}^\infty\frac1{\rho^i}\right) \Lambda^2 \\
\Lambda^2 &+ \frac{2(\rho-1)}{\rho^{K+1}} \left(\sum_{i=1}^\infty \frac{\sum_{j=1}^i (K+j)^2}{\rho^i} \right)\Lambda
+
\frac{1}{(N-K)\rho^{2K+1}} \left(\sum_{i=1}^\infty \frac{\left(\sum_{j=1}^i (K+j)^2\right)^2}{\rho^i} \right).
\end{align*}
Note that 
\begin{align*}
\sum_{i=1}^\infty \frac{\sum_{j=1}^i (K+j)^2}{\rho^i}
= 
\frac{K^2 \rho (\rho-1)^2 + 2K \rho^2(\rho-1) + \rho^2(\rho+1)}{(\rho-1)^4}
\end{align*}
and
\begin{align*}
\sum_{i=1}^\infty \frac{(\sum_{j=1}^i (K+j)^2)^2}{\rho^i}
= {\texttt{poly}(K^4)}.
\end{align*}
Since $K=O(N)$ and $N-K=O(1)$, we can conclude that
\begin{align*}
\sum_{n=1}^\infty q^*_n \left( \sum_{j=1}^n | \widehat{b}_j | j^2 \right)^2 \leq C_1 + C_2 \frac{N^4}{\rho^{2N}}
\end{align*}
for some constants $C_1,C_2> 0$ not depend on $N$.

\subsubsection{Proof of Lemma \ref{lmm:moment4}} \label{sec:lmm:moment4}
The proof of Lemma \ref{lmm:moment4} is straightforward from that of Lemma \ref{lmm:moment2}.
One can replace $j^2$ into $j^4$ in the proof of Lemma \ref{lmm:moment2}, which results in $N^8$ dependence.
We omit the details of the proof.

\subsubsection{Proof of other lemmas} \label{sec:lmm:others}

\begin{lemma} \label{lmm:cbbound}
Suppose that $A, A+E \in \mathbb{R}^{d \times d}$ are symmetric matrices and they have eigenvalues in $[-1,1]$. Let $T_i$ and $U_i$ be the first and the second kind of Chebyshev basis polynomial with degree $i \geq 0$, respectively. Then, it holds that 
\begin{align*}
\norm{T_i(A+E) - T_i(A)} \leq i^2 \norm{E}, \quad
\norm{U_i(A+E) - U_i(A)} \leq \frac{i(i+1)(i+2)}{3} \norm{E}.
\end{align*}
where $\norm{\cdot}$ can be $\norm{\cdot}_2$ (spectral norm) or $\normf{\cdot}$ (Frobenius norm).
\end{lemma}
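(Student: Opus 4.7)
The plan is to prove both inequalities by induction on $i$ using the three-term Chebyshev recurrences $T_{i+1}(x) = 2xT_i(x) - T_{i-1}(x)$ and $U_{i+1}(x) = 2xU_i(x) - U_{i-1}(x)$. Setting $\Delta T_i := T_i(A+E) - T_i(A)$ and $\Delta U_i := U_i(A+E) - U_i(A)$, subtracting the recurrence at $A+E$ from the one at $A$ gives
\begin{align*}
\Delta T_{i+1} = 2(A+E)\,\Delta T_i + 2 E\, T_i(A) - \Delta T_{i-1},
\end{align*}
with base cases $\Delta T_0 = 0$, $\Delta T_1 = E$, and analogously $\Delta U_0 = 0$, $\Delta U_1 = 2E$. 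Functional calculus combined with the spectrum hypothesis $\sigma(A), \sigma(A+E) \subseteq [-1,1]$ supplies the uniform bounds $\|T_i(X)\|_2 \leq 1$, $\|U_i(X)\|_2 \leq i+1$, and $\|A\|_2, \|A+E\|_2 \leq 1$, which will control the coefficients in the recursion.

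A key observation is that the target constants $i^2$ and $i(i+1)(i+2)/3$ are precisely the scalar Lipschitz constants $\sup_{[-1,1]}|T_i'|$ and $\sup_{[-1,1]}|U_i'|$: the first from $T_i'(x) = i\,U_{i-1}(x)$ together with $|U_{i-1}(\pm 1)| = i$, and the second from a short L'Hopital-type limit of $U_i'(x)$ at $x = \pm 1$ using $U_i(\cos\theta) = \sin((i+1)\theta)/\sin\theta$. Thus the lemma asserts that these scalar Lipschitz constants transfer to the matrix setting.

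For the Frobenius norm this transfer follows cleanly from the Daletskii--Krein divided-difference formula. Writing the spectral decompositions $A = \sum_j \lambda_j P_j$ and $A+E = \sum_k \mu_k Q_k$, one has
\begin{align*}
f(A+E) - f(A) = \sum_{j,k} \frac{f(\mu_k) - f(\lambda_j)}{\mu_k - \lambda_j}\, P_j E Q_k,
\end{align*}
and since the family $\{P_j E Q_k\}_{j,k}$ is Frobenius-orthogonal with $\sum_{j,k}\|P_j E Q_k\|_F^2 = \|E\|_F^2$, Parseval yields $\|f(A+E) - f(A)\|_F \leq \sup_{[-1,1]} |f'| \cdot \|E\|_F$. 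Specializing to $f = T_i$ and $f = U_i$ delivers both Frobenius bounds immediately.

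The main obstacle is the operator-norm bound with the same constants. A naive induction from the Chebyshev recursion only yields $\|\Delta T_{i+1}\|_2 \leq 2\|E\|_2 + 2\|\Delta T_i\|_2 + \|\Delta T_{i-1}\|_2$, which drives a bound growing like $(1+\sqrt{2})^i\|E\|_2$ rather than the claimed polynomial $i^2\|E\|_2$. To close this gap I would combine the integral representation $\Delta T_i = \int_0^1 DT_i(A + sE)[E]\,ds$ with a Schur-multiplier estimate on the divided-difference kernel that exploits its Chebyshev structure; alternatively one can apply the product identity $T_{m+n}(x) + T_{|m-n|}(x) = 2T_m(x) T_n(x)$ to induct on dyadic blocks, which converts the exponential recursion into a polynomial one of the desired order, with the analogous argument handling $\Delta U_i$.
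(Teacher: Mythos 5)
Your Frobenius-norm argument is complete and correct, and it is genuinely different from the paper's: the Daletskii--Krein expansion $f(A+E)-f(A)=\sum_{j,k}\frac{f(\mu_k)-f(\lambda_j)}{\mu_k-\lambda_j}P_jEQ_k$ together with the Frobenius-orthogonality of $\{P_jEQ_k\}$ does give $\normf{f(A+E)-f(A)}\le \sup_{[-1,1]}\abs{f'}\,\normf{E}$, and the identification of $i^2$ and $i(i+1)(i+2)/3$ as $\sup\abs{T_i'}$ and $\sup\abs{U_i'}$ is right. That is a clean route to half of the lemma.

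The operator-norm half, however, is a genuine gap: you correctly diagnose that the naive induction on the three-term recurrence blows up exponentially, but neither of your proposed fixes is carried out, and the dyadic route does not close with the stated constant. Using $T_{m+n}=2T_mT_n-T_{\abs{m-n}}$ one gets $\normt{\Delta T_{m+n}}\le 2\normt{\Delta T_m}+2\normt{\Delta T_n}+\normt{\Delta T_{m-n}}$, and with the inductive hypothesis $\normt{\Delta T_k}\le k^2\normt{E}$ the right-hand side is $(3m^2+3n^2-2mn)\normt{E}=\bigl((m+n)^2+2(m-n)^2\bigr)\normt{E}$, which exceeds the target unless $m=n$. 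The missing ingredient --- which is exactly the ``Schur-multiplier estimate exploiting the Chebyshev structure'' you allude to but never derive --- is the closed-form expansion of the difference. The paper shows by induction on the perturbed recurrence $R_{i+1}=2(A+E)R_i-R_{i-1}+2E\,T_i(A)$ (with $R_0=0$, $R_1=E$) that
\begin{align*}
T_{i+1}(A+E)-T_{i+1}(A) \;=\; 2\sum_{j=0}^{i}{}^{\prime}\, U_{i-j}(A+E)\, E\, T_j(A),
\end{align*}
where the prime halves the $j=0$ term (this is the operator version of the divided-difference identity for $T_{i+1}$). From this single identity the triangle inequality, $\normt{T_j(A)}\le 1$, $\normt{U_{i-j}(A+E)}\le i-j+1$, and $\norm{XY}\le\normt{X}\norm{Y}$ give $(i+1)^2\norm{E}$ simultaneously for the spectral and Frobenius norms; the analogous expansion $Y_{i+1}=2\sum_{j=0}^{i}U_{i-j}(A+E)\,E\,U_j(A)$ handles the second kind. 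Without deriving such an identity (or an equivalent bound on the Schur-multiplier norm of the divided-difference kernel), your proof of the operator-norm statement is incomplete.
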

\begin{proof}
Denote $R_i := T_{i} \left(A + E\right) - T_{i} \left( A\right)$.
From the recursive relation of Chebyshev polynomial, i.e., $T_{j+1}(x) = 2A T_j(x) - T_{j-1}(x)$,
$R_i$ has following property:
\begin{align*} 
&R_{i+1}
= 2\left( A + E\right) R_i - R_{i-1} + 2 E \ T_i\left( A\right)
\end{align*}
for $i\geq 1$ where $R_1 = E$, $R_0 = \mathbf{0}$.
By induction on $i$, it is easy to show that
% Telescoping terms, we obtain that 
\begin{align*}
R_{i+1} = 2
\sum_{j=0}^{i} {}^\prime U_{i-j} \left(A + E\right)  E \ T_{j} \left(A\right)
\end{align*}
where $U_j(x)$ is the Chebyshev polynomial of the second kind. Therefore, we have
\begin{align*}
\norm{R_{i+1}}_F 
&\leq 2 \sum_{j=0}^{i} {}^\prime \norm{U_{i-j}\left( A + E\right) E \ T_{j}(A) }_F \\
&\leq 2 \sum_{j=0}^{i} {}^\prime \normt{U_{i-j}\left( A + E\right)} \normf{E} \ \normt{T_{j}(A)} \\
&\leq 2 \sum_{j=0}^{i} {}^\prime (i+1-j) \normf{E}  = (i+1)^2 \normf{E}
\end{align*}
where the second inequality holds from $\norm{YX}_F = \norm{XY}_F \leq \norm{X}_2 \norm{Y}_F$ for matrices $X,Y$.
This also holds for $\normt{\cdot}$ giving that
$
\normt{R_{i+1}} \leq (i+1)^2 \normt{E}.
$
Similarly, we denote $Y_i := U_i(A+E) - U_i(A)$. By induction on $i$, it is easy to show that 
% and the third inequality uses that $\abs{T_k\left(x\right)} \leq 1$ for all $k\geq 0$.
\begin{align*}
Y_{i+1} = 2 \sum_{j=0}^{i}  U_{i-j} \left(A + E\right)  E \ U_{j} \left(A\right)
\end{align*}
Then, we have that for $i\geq 0$
\begin{align*}
\norm{Y_{i+1}}_F 
&\leq 2 \sum_{j=0}^{i} \norm{U_{i-j}\left( A + E\right) E \ U_{j}(A) }_F \\
&\leq 2 \sum_{j=0}^{i} \normt{U_{i-j}\left( A + E\right)} \normf{E} \ \normt{U_{j}(A)} \\
&\leq 2 \sum_{j=0}^{i} (i+1-j) (j+1)\normf{E}  \\
&= \frac{(i+1)(i+2)(i+3)}3 \normf{E}.
\end{align*}
This also holds for $\normt{\cdot}$ giving that
$
\normt{Y_{i+1}} \leq \frac{(i+1)(i+2)(i+3)}3 \normt{E}.
$
This completes the proof of Lemma \ref{lmm:cbbound}.
\end{proof}

\begin{lemma} \label{lmm:trnuc}
For symmetric matrices $A, B \in \SM$, it holds that
$\tr{AB} \leq \norm{A}_{\mathtt{nuc}} \normt{B}$.
\end{lemma}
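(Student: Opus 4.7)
The plan is to prove the inequality by exploiting the spectral decomposition of the symmetric matrix $A$, reducing the trace $\tr{AB}$ to a sum of quadratic forms that can be controlled by $\normt{B}$.

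First I would invoke the spectral theorem to write $A = \sum_{i=1}^d \lambda_i u_i u_i^\top$, where $\{\lambda_i\}$ are the real eigenvalues of $A$ and $\{u_i\}$ is an orthonormal eigenbasis. Since $A$ is symmetric, its nuclear norm is simply $\normnuc{A} = \sum_{i=1}^d |\lambda_i|$. Then by cyclicity and linearity of the trace, I would compute
\begin{align*}
\tr{AB} = \sum_{i=1}^d \lambda_i \tr{u_i u_i^\top B} = \sum_{i=1}^d \lambda_i \, u_i^\top B u_i.
\end{align*}

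Next I would bound each scalar $u_i^\top B u_i$ using the variational characterization of the spectral norm: for any unit vector $u$, $|u^\top B u| \leq \normt{B}$. Combining with the triangle inequality yields
\begin{align*}
\tr{AB} \leq \sum_{i=1}^d |\lambda_i| \cdot |u_i^\top B u_i| \leq \normt{B} \sum_{i=1}^d |\lambda_i| = \normnuc{A} \normt{B},
\end{align*}
which is the desired inequality (and the same argument actually yields the stronger bound $|\tr{AB}| \leq \normnuc{A}\normt{B}$).

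There is no substantial obstacle here, as the result is essentially the $p=1$, $q=\infty$ case of the Hölder inequality for Schatten norms specialized to symmetric matrices. The only mild subtlety is that the statement of the lemma does not put absolute values around $\tr{AB}$, so I do not even need to worry about sign issues and can drop the absolute values in the first step; the spectral-theorem expansion together with the scalar bound $u_i^\top B u_i \leq \normt{B}$ suffices directly.
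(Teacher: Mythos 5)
Your proof is correct and follows essentially the same route as the paper's: spectral decomposition of $A$, reduction of $\tr{AB}$ to $\sum_i \lambda_i\, u_i^\top B u_i$, and the bound $|u_i^\top B u_i| \le \normt{B}$. If anything, your insertion of absolute values around $u_i^\top B u_i$ is slightly more careful than the paper's chain of inequalities, which implicitly treats these quadratic forms as nonnegative.
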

\begin{proof}
Since $A$ is real symmetric, it can be written as $A = \sum_{i=1}^d \lambda_i \mathbf{u}_i \mathbf{u}_i^\top$ where $\lambda_i$ and $\mathbf{u}_i$ is $i$-th eigenvalue and eigenvector, respectively. Then, the result follows that
\begin{align*}
\tr{AB}
%=\tr{\sum_{i=1}^d \lambda_i \mathbf{u}_i \mathbf{u}_i^\top B}
= \sum_{i=1}^d \lambda_i \ \tr{\mathbf{u}_i \mathbf{u}_i^\top B}
&= \sum_{i=1}^d \lambda_i \ \mathbf{u}_i^\top B \mathbf{u}_i \\
&\leq \sum_{i=1}^d \abs{\lambda_i} \ \mathbf{u}_i^\top B \mathbf{u}_i \\
&\leq \sum_{i=1}^d \abs{\lambda_i} \norm{B}_2
= \normnuc{A} \norm{B}_2.
\end{align*}
This completes the proof of Lemma \ref{lmm:trnuc}.
%where we use that $\tr{XY} = \tr{YX}$ for any multiplicable matrices $X,Y$.
\end{proof}

% \begin{lemma}
% For any real matrices $A \in \mathbb{R}^{m \times n}$, $B\in \mathbb{R}^{n \times k}$, it holds that
% $\normf{AB} \leq \normt{A}\normf{B}$
% %$\tr{AB} \leq \norm{A}_2 \tr{B}$.
% \end{lemma}
% \begin{proof}
% Suppose $B := [ \mathbf{b}_1 \ \  \mathbf{b}_2 \ \ \cdots \ \ \mathbf{b}_k ]$ where $\mathbf{b}_i \in \mathbb{R}^n$ is $i$-th column of $B$. Then, it holds that
% % Since $A$ is real symmetric, it can be written as $A = \sum_{i=1}^d \lambda_i \mathbf{u}_i \mathbf{u}_i^\top$ where $\lambda_i$ and $\mathbf{u}_i$ is $i$-th eigenvalue and eigenvector, respectively. Then, the result follows that
% \begin{align*}
% \normf{AB}^2 = \sum_{i=1}^k \normt{A \mathbf{b}_i}^2 
% \leq \sum_{i=1}^k \normt{A}^2 \normt{\mathbf{b}_i}^2 
% = \normt{A}^2 \sum_{i=1}^k \normt{\mathbf{b}_i}^2
% = \normt{A}^2 \normf{B}^2
% \end{align*}
% where the inequality holds from submultiplicativity of $\normt{\cdot}$.
% This also holds for vice versa, i.e., $\normf{AB} \leq \normf{A}\normt{B}$.
% \end{proof}

% \bibliography{biblist}
% \bibliographystyle{icml2018}

\end{document}